\theoremstyle{plain}
\newtheorem{theorem}{Theorem}[section]
\newtheorem{proposition}[theorem]{Proposition}
\newtheorem{lemma}[theorem]{Lemma}
\newtheorem{corollary}[theorem]{Corollary}
\theoremstyle{definition}
\newtheorem{assumption}[theorem]{Assumption}
\theoremstyle{remark}
\begin{document}

\title{Robust and Sparse Estimation of Unbounded Density Ratio under Heavy Contamination}
\author[1]{Ryosuke Nagumo}
\author[2,3]{Hironori Fujisawa}
\affil[1]{Panasonic Holdings Corporation}
\affil[2]{The Institute of Statistical Mathematics}
\affil[3]{The Graduate University for Advanced Studies (SOKENDAI)}
\date{}
\maketitle

\begin{abstract}
    We examine the non-asymptotic properties of robust density ratio estimation (DRE) in contaminated settings.
    Weighted DRE is the most promising among existing methods, exhibiting doubly strong robustness from an asymptotic perspective.
    This study demonstrates that Weighted DRE achieves sparse consistency even under heavy contamination within a non-asymptotic framework.
    This method addresses two significant challenges in density ratio estimation and robust estimation.
    For density ratio estimation, we provide the non-asymptotic properties of estimating unbounded density ratios under the assumption that the weighted density ratio function is bounded.
    For robust estimation, we introduce a non-asymptotic framework for doubly strong robustness under heavy contamination, assuming that at least one of the following conditions holds: (i) contamination ratios are small, and (ii) outliers have small weighted values.
    This work provides the first non-asymptotic analysis of strong robustness under heavy contamination.
\end{abstract}

\section{Introduction}
\label{introduction}

Density ratio estimation (DRE) is a statistical method to directly estimate the ratio of two probability density functions without estimating each function \cite{Nguyen_nips2007,sugiyama_dre2012}.
This method has wide-ranging applications, including change detection \cite{kawahara_icdm2009,songliu_neuralnet2013}, outlier detection \cite{hido_kis2011}, covariate shift adaptation \cite{shimodaira_2000,zhang_2023}, and two-sample tests \cite{wornowizki_comp_stat_2016,byol_jrssb2021}.
Its parametric formulation, often referred to as the differential graphical model \cite{songliu_neuralcomp2014,songliu_behaviormetrika2017,songliu_annals_of_stats2017}, is also used in various fields such as protein and genetic interaction mapping \cite{ideker_MolSysBio2012} and brain imaging \cite{na_biometrika2020}.

Despite its versatility, density ratio estimation is not robust when data exist in a region with small density function values \cite{smola_icais2009,yamada_NeuralComp2011,sugiyama_2012,songliu_neurips2017}.
Consider the density ratio $r(x)=p(x)/q(x)$, where we call $p(x)$ a reference distribution and $q(x)$ a target distribution.
When data exist in regions where  $p(x)$ or $q(x)$ is small, the estimation of $r(x)$ often becomes unstable \cite{rhodes2020,kato2021,choi_2021,choi_2022,srivastava_2023}.
This issue is prevalent when outliers contaminate the main distributions \cite{maronna_martin_yohai_robust_statistics,hampel2011robust}.

Robust DRE methods, including Weighted DRE and $\gamma$-DRE, have been proposed to address contamination scenarios \cite{nagumo_2024}.
Both methods exhibit doubly strong robustness, where ``doubly'' refers to handling contamination in the reference and target datasets, and ``strong'' signifies robustness even under heavy contamination.
Weighted DRE incorporates a strictly positive weight function $w(x)$ into the Unnormalized Kullback-Leibler divergence, resulting in a convex optimization problem.
$\gamma$-DRE introduces the weight function into the $\gamma$-divergence \cite{fujisawa_mult_anal_2008}, leading to a Difference of Convex functions (DC) optimization problem.
Consistency analyses theoretically confirm that both methods achieve doubly strong robustness in heavily contaminated settings.

The non-asymptotic analysis of the estimator of Weighted DRE is of interest, enabled by the convexity of its optimization problem.
The previous consistency analysis only provides robustness of the optimal parameter under contamination at the population level \cite{nagumo_2024}, whereas non-asymptotic analyses can provide the error order between the estimator and the true parameter at the sample level.
Previous studies have investigated sparse consistency and estimation error of conventional DRE methods \cite{songliu_annals_of_stats2017,songliu_neurips2017,kato2021}.
For instance, DRE can achieve sparse consistency with high probability when the regularization parameter is appropriately chosen \cite{songliu_annals_of_stats2017}, similar to Lasso \cite{wainwright2009,buhlmann_2011,Wainwright_2019}.
Furthermore, previous research has demonstrated that the estimation error of DRE is bounded by $O(\sqrt{1/\min\{n_p,n_q\}})$ \cite{yamada_NeuralComp2011,songliu_annals_of_stats2017,kato2021}, where $n_p$ and $n_q$ denote the dataset sizes of the reference and target datasets, respectively.

Two key challenges arise when conducting a non-asymptotic analysis of Weighted DRE under contamination.
First, previous non-asymptotic analyses often assume that the density ratio is somehow bounded, an assumption that is restrictive and often violated in practical settings.
For instance, some studies enforce a constant bound on the density ratio \cite{songliu_annals_of_stats2017,kato2021}, while others assume the density ratio follows a sub-Gaussian distribution \cite{songliu_annals_of_stats2017,songliu_neurips2017}, which implicitly imposes the boundedness by some function.
An alternative approach modifies the density ratio $p(x)/q(x)$ to the relative density ratio $p(x)/(\alpha p(x)+(1-\alpha)q(x))$, where $0\le\alpha<1$, to achieve the boundedness \cite{yamada_NeuralComp2011}.
The non-asymptotic properties of estimating unbounded density ratios remain unexplored.

Second, robust estimation typically assumes a small contamination ratio.
Many non-asymptotic analyses impose this assumption under the Huber's contamination \cite{lai_2016,chen_2018,gao_2020} or the adversarial contamination \cite{nguyen_2013,chen_2013,dalalyan_2019,diakonikolas_2019,liu_2020,oliverira_2025}.
For example, when the number of outliers $o$ is much smaller than the dataset size $n$, the convergence rate of the Lasso estimator degrades by $O(\sqrt{(o \log n) /n})$ \cite{nguyen_2013}.
However, this assumption is unsuitable for analyzing Weighted DRE, as its doubly strong robustness is designed to handle scenarios under heavy contamination, where the number of outliers is relatively large compared to the dataset size.

This study addresses these challenges by presenting a non-asymptotic analysis of Weighted DRE that demonstrates sparse consistency and estimation error for unbounded density ratios under heavy contamination.
By requiring that the weighted density ratio $r(x)w(x)$ is bounded while allowing $r(x)$ to remain unbounded, we significantly relax previous assumptions.
This assumption can be satisfied if the weight function decays faster to zero than the density ratio.
Besides, our analysis introduces a distinctive framework for strong robustness under heavy contamination, assuming at least one of the following conditions: (i) the contamination ratio is small, as commonly addressed in non-asymptotic analyses \cite{nguyen_2013,liu_2020}, and (ii) outliers have small weighted values, a typical assumption in asymptotic analyses \cite{fujisawa_mult_anal_2008,nagumo_2024}.
This mild condition realizes the non-asymptotic analysis of doubly strong robustness, complementing the prior asymptotic result \cite{nagumo_2024}.
This approach offers the first non-asymptotic result in the strong robustness literature under heavy contamination.

This paper is organized as follows:
Section \ref{section previous research} introduces Weighted DRE, the focus of our non-asymptotic analysis.
Section \ref{section non-asymptotic analysis} outlines the assumptions and key results of the analysis, including conditions related to the unboundedness of the density ratio and the presence of heavy contamination by outliers.
Section \ref{section proof outline} provides a proof outline of the main theorem.
In Section \ref{section experiments}, numerical experiments demonstrate that Weighted DRE achieves sparse consistency for unbounded density ratios under heavy contamination.

\section{Density Ratio Estimation}
\label{section previous research}

\subsection{Non-robust Density Ratio Estimation}
\label{subsection non-robust dre}

The density ratio is defined as the ratio of two density functions.
Let $p(\bm{x})$ and $q(\bm{x})$ be strictly positive density functions of the reference and target datasets for $\bm{x} \in \mathbb{R}^m$, respectively.
The true density ratio can be written as $r(\bm{x}) = p(\bm{x})/q(\bm{x}): \mathbb{R}^m \mapsto \mathbb{R}$.
To estimate the density ratio, we employ the density ratio function $r_{\bm{\beta}}(\bm{x})$, where $\bm{\beta}$ is a parameter, and then measure the discrepancy between the true density ratio $r(\bm{x})$ and the density ratio function $r_{\bm{\beta}}(\bm{x})$.
The choice of the density ratio function and the discrepancy measure realizes various DRE methods \cite{kato2021}.

The parametric model is a practical choice for the density ratio function in high-dimensional settings \cite{songliu_neuralcomp2014,songliu_annals_of_stats2017}.
The parametric function is defined as
\begin{align}
\label{eq parametric dre}
    r_{\bm{\theta}, C}(\bm{x})=Cr_{\bm{\theta}}(\bm{x})= C \exp \left( \bm{\theta}^T h(\bm{x}) \right),
\end{align}
where $\bm{\theta}\in \mathbb{R}^d$ is the difference parameter, $C\in\mathbb{R}$ is the normalizing term, and $h(\bm{x}):\mathbb{R}^m\mapsto\mathbb{R}^d$ is the feature transform function.

Statistical divergences are effective tools for quantifying the discrepancy between the true density ratio $r(\bm{x})$ and the density ratio function $r_{\bm{\beta}}(\bm{x})$.
Among them, the Bregman (BR) divergence is one of the most widely used \cite{bregman_1967,sugiyama_2012,kato2021}.
Let $f$ be a differentiable and strictly convex function with the derivative $\partial f$.
Then, we quantify the discrepancy of $r(\bm{x})$ and $r_{\bm{\beta}}(\bm{x})$ as
\begin{align*}
    D_{\rm BR}(r, r_{\bm{\beta}})
    &=\mathbb{E}_q\left[f\left(r(X)\right)-f\left(r_{\bm{\beta}}(X)\right) -\partial f\left(r_{\bm{\beta}}(X)\right)\left(r(X)-r_{\bm{\beta}}(X)\right)\right] \\
    &=\mathbb{E}_q \left[\partial f\left(r_{\bm{\beta}}(X)\right)r_{\bm{\beta}}(X)-f\left(r_{\bm{\beta}}(X)\right)\right]
        -\mathbb{E}_p \left[\partial f\left(r_{\bm{\beta}}(X)\right)\right] + {\rm const}.
\end{align*}
The Bregman divergence serves as a general framework, with the choice of $f$ determining specific methods.
For example, the UKL (Unnormalized Kullback-Leibler) divergence \cite{Nguyen_nips2007} and KLIEP (Kullback-Leibler Importance Estimation Procedure) \cite{sugiyama2008} adopt $f(t)=t\log t-t$, LSIF (Least-Squares Importance Fitting) \cite{kanamori_2009} and KMM (Kernel Mean Matching) \cite{gretton_2009} adopt $f(t)=(t-1)^2/2$, and the BKL (Binary Kullback-Leibler) divergence \cite{hastie_2001} adopts $f(t)=t\log t-(1+t)\log(1+t)$.

When using the parametric density ratio function in \eqref{eq parametric dre} with $f(t)=t\log t-t$, the Bregman divergence reduces to the UKL divergence, which is formulated as
\begin{equation}
\label{eq ukl div}
\begin{split}
    D_{\rm UKL}(r, r_{\bm{\theta},C})
    &=\mathbb{E}_q\left[r_{\bm{\theta},C}(X)\right]-\mathbb{E}_p\left[\log r_{\bm{\theta},C}(X)\right] +{\rm const} \\
    &=C\mathbb{E}_q\left[\exp\left(\bm{\theta}^Th(X)\right)\right]-\mathbb{E}_p\left[\left(\bm{\theta}^Th(X)+\log C\right)\right] +{\rm const}.
\end{split}
\end{equation}
Because \eqref{eq ukl div} is convex about $C$, the optimal normalizing term is
\begin{equation}
\label{eq conventional normalizing term}
    C^\sharp_{\bm{\theta}}=\frac{1}{\mathbb{E}_q\left[\exp(\bm{\theta}^T h(X))\right]}.
\end{equation}
By substituting \eqref{eq conventional normalizing term} to \eqref{eq ukl div}, we have
\begin{align*}
    D_{\rm{UKL}}(r, r_{\bm{\theta}})=D_{\rm{UKL}}(r, r_{\bm{\theta},C^\sharp_{\bm{\theta}}})
    =-\mathbb{E}_p\left[\bm{\theta}^T h(X) \right]
        + \log \mathbb{E}_q\left[\exp(\bm{\theta}^T h(X))\right] + {\rm const}.
\end{align*}
The UKL divergence is empirically approximated, excluding the constant term, using two datasets $\{\bm{x}^{(p)}_n\}_{n=1}^{n_p}$ and $\{\bm{x}^{(q)}_n\}_{n=1}^{n_q}$:
\begin{align}
\label{eq objective of conventional dre}
    \hat{D}_{\rm{UKL}}(r, r_{\bm{\theta}})
    =-\hat{\mathbb{E}}_p\left[\bm{\theta}^T h(X) \right] + \log \hat{\mathbb{E}}_q\left[\exp(\bm{\theta}^T h(X))\right],
\end{align}
where $\hat{\mathbb{E}}_f[g(X)]=\frac{1}{n_f}\sum_{n=1}^{n_f}g(\bm{x}_n^{(f)})$.
This objective function is convex with respect to $\bm{\theta}$ and can be efficiently minimized using gradient descent.

\subsection{Robust Density Ratio Estimation}

Robust estimation is achieved by introducing a weight function $w(\bm{x}): \mathbb{R}^m \mapsto \mathbb{R}_+$, which mitigates the adverse effects of outliers in the estimator \cite{maronna_martin_yohai_robust_statistics}.
The Bregman divergence can include the weight function as the base measure:
\begin{align*}
    D_{\rm BR}(r, r_{\bm{\beta}}; w)
    &=\mathbb{E}_{wq} \left[\partial f\left(r_{\bm{\beta}}(X)\right)r_{\bm{\beta}}(X)-f\left(r_{\bm{\beta}}(X)\right)\right]
    -\mathbb{E}_{wp} \left[\partial f\left(r_{\bm{\beta}}(X)\right)\right] + {\rm const}.
\end{align*}
With the base measure $w(\bm{x})d\bm{x}$, the Bregman divergence retains the following properties: (i) $D_{\rm{BR}}(r, r_{\bm{\beta}}; w)\ge0$, and (ii) $D_{\rm{BR}}(r, r_{\bm{\beta}}; w)=0\Leftrightarrow r=r_{\bm{\beta}}$.

We propose the estimator of Weighted DRE to enable robust and sparse estimation \cite{nagumo_2024}.
Similar to Section \ref{subsection non-robust dre}, the formulation of the UKL divergence can be given as
\begin{align*}
    D_{\rm UKL}(r, r_{\bm{\theta},C};w)
    &=\mathbb{E}_q\left[r_{\bm{\theta},C}(X)w(X)\right]-\mathbb{E}_p\left[\log r_{\bm{\theta},C}(X)w(X)\right] +{\rm const} \\
    &=C\mathbb{E}_q\left[\exp\left(\bm{\theta}^Th(X)\right)w(X)\right]-\mathbb{E}_p\left[\left(\bm{\theta}^Th(X)+\log C\right)w(X)\right] +{\rm const}.
\end{align*}
Because the optimal normalizing term is
\begin{equation}
\label{eq weighted normalizing term}
    C^\circ_{\bm{\theta}}=\frac{\mathbb{E}_p\left[w(X)\right]}{\mathbb{E}_q\left[\exp(\bm{\theta}^T h(X))w(X)\right]}
\end{equation}
due to the convexity, we have
\begin{align*}
    &D_{\rm{UKL}}(r, r_{\bm{\theta}}; w)=D_{\rm{UKL}}(r, r_{\bm{\theta},C^\circ_{\bm{\theta}}}; w) \\
    &=-\mathbb{E}_p\left[\bm{\theta}^T h(X) w(X)\right]
        + \mathbb{E}_p\left[w(X)\right]\times \log \mathbb{E}_q\left[\exp(\bm{\theta}^T h(X))w(X)\right] + {\rm const}.
\end{align*}
The empirical UKL divergence without the constant term is
\begin{align*}
    \mathcal{L}(\bm{\theta})
    =\hat{D}_{\rm{UKL}}(r, r_{\bm{\theta}}; w)
    =-\hat{\mathbb{E}}_p\left[\bm{\theta}^T h(X) w(X)\right] + \hat{\mathbb{E}}_p\left[w(X)\right]\times \log \hat{\mathbb{E}}_q\left[\exp(\bm{\theta}^T h(X))w(X)\right].
\end{align*}
Then, the naive estimator of Weighted DRE is defined as
\begin{align*}
    \hat{\bm{\theta}}^\circ = \underset{\bm{\theta}}{\operatorname{argmin}}\, \mathcal{L}(\bm{\theta}) + \lambda \|\bm{\theta}\|_1,
\end{align*}
where $\lambda$ is a regularization parameter.
This objective function is also convex about $\bm{\theta}$.
Weighted DRE demonstrates doubly strong robustness, maintaining consistency of the optimal parameter even under high contamination settings \cite{nagumo_2024}.
The previously proposed robust DRE method, Trimmed DRE \cite{songliu_neurips2017}, does not have doubly strong robustness.

\section{Non-asymptotic Analysis of Weighted DRE}
\label{section non-asymptotic analysis}

This section presents a non-asymptotic analysis of Weighted DRE.
Section \ref{subsection problem setting} outlines the problem setting involving heavy contamination.
Sections \ref{subsection assumptions for boundedness}, \ref{subsection assumption for robustness}, and \ref{subsection assumptions for sparse estimation} detail the assumptions for the boundedness of the weight function, robustness under heavy contamination, and sparse estimation, respectively.
Finally, Section \ref{subsection main theorem} establishes sparse consistency and estimation error bound for unbounded density ratio under heavy contamination.

\subsection{Problem Setting}
\label{subsection problem setting}

We describe the problem setting of density ratio estimation under heavy contamination.
Suppose that the reference and target datasets are contaminated by outliers, more precisely, drawn from the contaminated distributions \cite{huber2004robust,maronna_martin_yohai_robust_statistics,hampel2011robust,nagumo_2024} given by
\begin{align}
\label{eq setting of p and q}
    p^{\dagger}(\bm{x}) = (1 - \varepsilon_p) p^*(\bm{x}) + \varepsilon_p \delta_p(\bm{x}), \quad
    q^{\dagger}(\bm{x}) = (1 - \varepsilon_q) q^*(\bm{x}) + \varepsilon_q \delta_q(\bm{x}),
\end{align}
respectively. Here, $p^*(\bm{x})$ and $q^*(\bm{x})$ represent the true reference and target distributions, $\delta_p(\bm{x})$ and $\delta_q(\bm{x})$ correspond to the outlier distributions, and $\varepsilon_p,\varepsilon_q \in [0, 1)$ denote the contamination ratios.
We assume that the given dataset is composed of a combination of inliers, $\bm{x}^{(p^*)}$ or $\bm{x}^{(q^*)}$, and outliers, $\bm{x}^{(\delta_p)}$ or $\bm{x}^{(\delta_q)}$:
\begin{equation}
\label{eq dataset setting}
\begin{split}
    \{\bm{x}_n^{(p^{\dagger})}\}_{n=1}^{n_p} = \{\bm{x}_n^{(p^*)}\}_{n=1}^{n_p^*} \cup \{\bm{x}_n^{(\delta_p)}\}_{n=1}^{\varepsilon_p n_p}, \\
    \{\bm{x}_n^{(q^{\dagger})}\}_{n=1}^{n_q} =\{\bm{x}_n^{(q^*)}\}_{n=1}^{n_q^*} \cup \{\bm{x}_n^{(\delta_q)}\}_{n=1}^{\varepsilon_q n_q},
\end{split}
\end{equation}
where $n_p^*= (1-\varepsilon_p)n_p$ and $n_q^*=(1-\varepsilon_q)n_q$.
This setting is the same as the previous non-asymptotic analysis of density ratio estimation \cite{songliu_neurips2017}.
For simplicity, we assume that $\varepsilon_pn_p$ and $\varepsilon_qn_q$ are integers.
We define $n_{p,q}^*=\min\left\{n_p^*, n_q^*\right\}$.

We only consider the case where the parameter $\bm{\theta}$ exists in a compact convex set $\Theta$ \cite{eunho_2012,songliu_annals_of_stats2017,songliu_neurips2017,nagumo_2024}.
The normalizing term can be written as $C^*_{\bm{\theta}}=\mathbb{E}_{p^*}[w(X)]/\mathbb{E}_{q^*}[\exp(\bm{\theta}^{T}X)w(X)]$.
We suppose that there exists an interior point $\bm{\theta}^* \in \Theta$ which satisfies $p^*(\bm{x})=r(\bm{x};\bm{\theta}^*,C^*_{\bm{\theta}^*})q^*(\bm{x})$, where $r(\bm{x};\bm{\theta}^*,C^*_{\bm{\theta}^*})=r_{\bm{\theta}^*,C^*_{\bm{\theta}^*}}(\bm{x})$ \cite{songliu_annals_of_stats2017,songliu_neurips2017}.
Given the index set $\mathcal{E}$ with $|\mathcal{E}|=d$, let us define two sets of indices for the true parameter, the active set $S=\{t\in \mathcal{E}| \theta^*_{t} \neq 0\}$ and the non-active set $S^c=\{t\in \mathcal{E}| \theta^*_{t} = 0\}$.
The number of the non-zero parameters is set to $k = |S|$.

The non-asymptotic analysis aims to show sparse consistency and estimation error of the estimator of Weighted DRE in the contaminated setting.
Let us define the contaminated objective function $\mathcal{L}^\dagger(\bm\theta)$ and the uncontaminated one $\mathcal{L}^*(\bm\theta)$:
\begin{equation}
\label{eq objective function}
\begin{split}
    \mathcal{L}^\dagger(\bm{\theta})
    &=-\hat{\mathbb{E}}_{p^\dagger}\left[\bm{\theta}^T h(X)w(X)\right]+\hat{\mathbb{E}}_{p^\dagger}\left[w(X)\right]\times \log \hat{\mathbb{E}}_{q^\dagger}\left[\exp(\bm{\theta}^T h(X))w(X)\right], \\
    \mathcal{L}^*(\bm{\theta})
    &=-\hat{\mathbb{E}}_{p^*}\left[\bm{\theta}^Th(X)w(X)\right]+\hat{\mathbb{E}}_{p^*}\left[w(X)\right]\times \log \hat{\mathbb{E}}_{q^*}\left[\exp(\bm{\theta}^T h(X))w(X)\right].
\end{split}
\end{equation}
Then, the estimator of Weighted DRE in the contaminated setting is defined as
\begin{align}
\label{eq-9}
    \hat{\bm{\theta}}=\underset{\bm{\theta}}{\operatorname{argmin}}\,\mathcal{L}^\dagger(\bm{\theta})+\lambda_{n_p^*,n_q^*}\|\bm{\theta}\|_1,
\end{align}
where $\lambda_{n_p^*,n_q^*}$ is the regularization hyper-parameter of L1 norm.

\vspace{\baselineskip}
\textit{Notation.} Given a matrix $A\in \mathbb{R}^{m\times n}$ and a parameter $q \in [1,\infty]$, $\|A\|_q$ represents the induced matrix-operator norm \cite{ravikumar2010,horn_2013}.
Two examples of particular importance in this paper are the spectral norm $\|A\|_2$, corresponding to the maximal singular value of $A$, and the max norm, given by $\|A\|_\infty = \max_{i=1,...,m}\sum_{j=1}^{n}|A_{ij}|$.
We make use of the bound $\|A\|_\infty\le \sqrt{n}\|A\|_2$.
Another important norm is the element-wise max norm as $\|A\|_{\max}=\max_{i,j}|A_{ij}|$.
For a vector $\bm{a}\in \mathbb{R}^m$, we define the max norm as $\|\bm{a}\|_{\infty}=\max_i |a_i|$.
We define the minimum and maximum eigenvalue operators of a symmetric matrix as $\Lambda_{\min}$ and $\Lambda_{\max}$, respectively, where $\Lambda_{\max}[A]=\|A\|_2$ for a symmetric matrix $A$.
Given sequences $\{a_n\}$ and $\{b_n\}$, the notation $a_n=O(b_n)$ means that there exists a positive constant $C < \infty$ such that $|a_n| \le C |b_n|$ for any $n$.
Other notations are provided in Appendix \ref{section notations}.

\subsection{Assumptions for Boundedness of Weight Function}
\label{subsection assumptions for boundedness}

We present the assumptions that the weight function $w(\bm{x})$ should satisfy.
Specifically, the weight function is required to decay to zero faster than the density ratio function $r(\bm{x};\bm{\theta},C^*_{\bm{\theta}})$ and the feature transform function $h(\bm{x})$.
These assumptions promote mathematical tractability while relaxing restrictive conditions imposed by previous studies \cite{songliu_annals_of_stats2017,songliu_neurips2017,kato2021}.

\begin{assumption}[Boundedness of Weight Function]
\label{assumption weight for normal}
\begin{align*}
    0 < w(\bm{x}) \le W_{\max}<\infty, \quad
    \mathbb{E}_{p^*}\left[w(X)\right] \ge W_{\min} > 0.
\end{align*}
\end{assumption}
The upper-boundedness of the weight function can be easily satisfied by a simple example such as $w(\bm{x})=\exp\left(-\|\bm{x}\|_4^4\right)$, which has been used in previous research \cite{nagumo_2024}.
The assumption of the lower-bounded integral implies that the weight function should not eliminate the inliers from the reference distribution too much.
Propositions based on Assumption \ref{assumption weight for normal} are provided in Appendix \ref{section discussion of assumption of weight function}.

\begin{assumption}[Boundedness of Density Ratio with Weight Function]
\label{assumption-9}
    For any $\bm{\theta}\in\Theta$,
    \begin{align*}
        \exp\left(\bm{\theta}^T h(\bm{x})\right) w(\bm{x}) &\le E_{\max}<\infty, \\
        \mathbb{E}_{q^*}\left[\exp\left(\bm{\theta}^T h(X)\right)w(X)\right] &\ge E_{\min} > 0.
    \end{align*}
\end{assumption}

Assumption \ref{assumption-9} asserts that the weighted density ratio $r(\bm{x})w(\bm{x})$ is bounded by a positive constant $E$: $r(\bm{x};\bm{\theta},C^*_{\bm{\theta}})w(\bm{x})\le E<\infty$, even when the density ratio $r(\bm{x})$ is unbounded.
This assumption contrasts with previous research, which require that $r(\bm{x})$ to be bounded or sub-Gaussian \cite{songliu_annals_of_stats2017,songliu_neurips2017,kato2021}.
This assumption can be satisfied when the weight function decays to zero faster than the density ratio function.
The lower boundedness of the integral implies that the weight function should not eliminate the inliers from the target distribution too much.
Propositions based on Assumption \ref{assumption-9} are provided in Appendix \ref{section discussion of assumption-9}.

Assuming the boundedness of the density ratio is overly restrictive \cite{songliu_annals_of_stats2017}.
A simple example involving Gaussian distributions illustrates this limitation.
Consider the density ratio of one-dimensional Gaussian distributions with the zero means and variances $\sigma_p^2$ and $\sigma_q^2$:
\begin{align}
\label{eq one-dim gauss}
    p(x)=\frac{1}{\sqrt{2\pi\sigma_p^2}}\exp\left(-\frac{x^2}{2\sigma_p^2}\right), \quad
    q(x)=\frac{1}{\sqrt{2\pi\sigma_q^2}}\exp\left(-\frac{x^2}{2\sigma_q^2}\right).
\end{align}
The density ratio of these distributions is given by
\begin{align*}
    r(x)=\frac{p(x)}{q(x)}=\frac{\sigma_q}{\sigma_p}\exp\left(-\frac{1}{2}\left(\frac{1}{\sigma_p^2}-\frac{1}{\sigma_q^2}\right)x^2\right).
\end{align*}
While $r(x)$ is bounded when $\sigma_p \le \sigma_q$, it becomes unbounded when $\sigma_p > \sigma_q$.
Consequently, the assumption of bounded density ratios is often violated in practical scenarios.

The sub-Gaussian assumption for the density ratio \cite{songliu_annals_of_stats2017,songliu_neurips2017} often assumes that the density ratio is bounded by some function.
Let $Z=Z(X)= r(X;\bm{\theta},C^*_{\bm{\theta}})-\mathbb{E}_q[r(X;\bm{\theta},C^*_{\bm{\theta}})]$, where $Z$ is a zero-mean random variable.
The sub-Gaussian assumption implies that $\mathbb{E}_q[\exp(tZ)]\le \exp(a^2t^2/2)$ for $t>0$, where $a$ is a positive constant \cite{Wainwright_2019}.
Since $\mathbb{E}_q[\exp(tZ)]=\int \exp\{tZ(\bm{x})+\log q(\bm{x})\}d\bm{x}$ should be finite, $\exp\{tZ(\bm{x})+\log q(\bm{x})\}$ should be bounded for $\bm{x} \in \mathbb{R}^m$.
However, this boundedness condition is often difficult to satisfy.
For instance, in a one-dimensional setting where $\sigma_p^2=1$ and $\sigma_q^2=1/2$ in \eqref{eq one-dim gauss}, consider a density ratio defined as $r(x)= \exp(x^2/2)/\sqrt{2}$.
In this case, $\exp\{tZ(X)+\log q(X)\} = \exp\{t\exp(X^2/2)/\sqrt{2}-X^2+{\rm const}\}$ becomes unbounded, leading $\mathbb{E}_q[\exp(tZ)]=\infty$, which violates the sub-Gaussian assumption.
This example demonstrates that the sub-Gaussian assumption is valid only when the density ratio is bounded by some function.

\begin{assumption}[Boundedness of Features with Weight Function]
\label{assumption-8}
    For any $\bm{\theta}\in\Theta$ and $t\in \mathcal{E}$,
    \begin{align*}
        \|h(\bm{x})w(\bm{x})\|_{\infty}&\le D_{\max}<\infty, \\
        \|h(\bm{x})         \exp(\bm{\theta}^Th(\bm{x}))w(\bm{x})\|_{\infty}&\le D_{\max} < \infty, \\
        \|h(\bm{x})h(\bm{x})^T\exp(\bm{\theta}^Th(\bm{x}))w(\bm{x})\|_{\max}&\le D_{\max} < \infty, \\
        \|h_t(\bm{x})h_S(\bm{x})h_S(\bm{x})^T\exp(\bm{\theta}^Th(\bm{x}))w(\bm{x})\|_{\max}&\le D_{\max} < \infty.
    \end{align*}
\end{assumption}
Assumption \ref{assumption-8} is less restrictive than the assumption used in previous research \cite{songliu_annals_of_stats2017}, which requires the feature transform function $h(\bm{x})$ to be bounded: $\|h(\bm{x})\|_{\infty}<\infty$.
The commonly used choice of $h(\bm{x})=[x_1x_1,x_1x_2,...,x_dx_d]$, justified for Gaussian distributions, violates this assumption because $\|h(\bm{x})\|_{\infty}\rightarrow\infty$ as $\bm{x}\rightarrow\infty$.
By defining the weight function as $w(\bm{x})=\exp\left(-\|\bm{x}\|_4^4\right)$, the relevant terms can be bounded because $w(\bm{x})$ decays to zero faster than $h(\bm{x})$ and $\exp(\bm{\theta}^Th(\bm{x}))$.
A proposition based on Assumption \ref{assumption-8} is provided in Appendix \ref{subsection discussion of assumption 8}.

The choice of weight function satisfies Assumptions \ref{assumption weight for normal}, \ref{assumption-9}, and \ref{assumption-8} is not so difficult.
In our settings, we suppose that we can distinguish the inliers from the outliers in \eqref{eq dataset setting} \cite{songliu_neurips2017}.
Therefore, the weight function can be set not to eliminate the inliers in the reference and target datasets to satisfy Assumptions \ref{assumption weight for normal} and \ref{assumption-9}, because these data points can be assumed to represent the population distributions.
Because we assume the existence of the true parameter $\bm{\theta}^* \in \Theta$ which satisfies $p^*(\bm{x})=r(\bm{x};\bm{\theta}^*,C^*_{\bm{\theta}^*})q^*(\bm{x})$ \cite{songliu_annals_of_stats2017,eunho_2012}, we implicitly assume that we can prepare the suitable feature transform function $h(\bm{x})$ in advance.
Then, the proper decaying rate of the weight function can be set to satisfy Assumption \ref{assumption-8}.
Overall, the setting of the weight function seems not so difficult when the same assumptions in the previous research of density ratio estimation \cite{songliu_annals_of_stats2017,songliu_neurips2017} and graphical modeling \cite{eunho_2012} are supposed.
Note that the choice of the weight function is independent from the choice of the unknown true parameter $\bm{\theta}^*$.

\subsection{Assumption for Robustness}
\label{subsection assumption for robustness}

We propose an assumption to guarantee robustness under heavy contamination.
When specific conditions are met for the uncontaminated objective function $\mathcal{L}^*$, the estimator in \eqref{eq-9} exhibits non-asymptotic properties in the absence of contamination.
Extending this result, if $\mathcal{L}^\dagger$ closely approximates $\mathcal{L}^*$, the estimator retains its non-asymptotic properties even in contaminated settings.

\begin{assumption}[Robustness Assumption]
\label{assumption weight for outlier}
Let
\begin{align*}
    & \nu_1 = \max_{\bm{x} \in {\cal X}^{(\delta_p)}} w(\bm{x}), \quad 
    \nu_2 = \max_{\bm{x} \in {\cal X}^{(\delta_q)}} \max_{\bm{\theta} \in \Theta} \exp(\bm{\theta}^T h(\bm{x}))w(\bm{x}), \quad 
    \nu_3 = \max_{t\in \mathcal{E}} \max_{\bm{x} \in {\cal X}^{(\delta_p)}}  w(\bm{x})|h_t(\bm{x})|, \\
    & \nu_4 = \max_{t\in \mathcal{E}} \max_{\bm{x} \in {\cal X}^{(\delta_q)}} \max_{\bm{\theta} \in \Theta}  \exp(\bm{\theta}^T h(\bm{x}))w(\bm{x})|h_t(\bm{x})|, \quad
    \nu_5= \max_{t,t'\in \mathcal{E}} \max_{\bm{x} \in {\cal X}^{(\delta_q)}} \max_{\bm{\theta} \in \Theta} \exp(\bm{\theta}^T h(\bm{x}))w(\bm{x})|h_t(\bm{x}) h_{t'}(\bm{x})|, \\
    & \nu_6 = \max_{t \in \mathcal{E}} \max_{t',t''\in S} \max_{\bm{x} \in {\cal X}^{(\delta_q)}}\max_{\bm{\theta} \in \Theta} \exp(\bm{\theta}^T h(\bm{x}))w(\bm{x})|h_{t}(\bm{x})h_{t'}(\bm{x})h_{t''}(\bm{x})|,
\end{align*}
where ${\cal X}^{(f)}=\{\bm{x}^{(f)}_n\}$ and $h_t(\bm{x})$ is the $t$-th element of $h(\bm{x})$.
We define the maximum values as $\nu=\max_{j=1,\ldots,6} \nu_j$ and $\varepsilon=\max\left\{\varepsilon_p, \varepsilon_q\right\}$.
We assume that $k^{3/2}\varepsilon \nu$ is sufficiently small.
\end{assumption}

Assumption \ref{assumption weight for outlier} introduces additional bounds: $\nu_1$ for Assumption \ref{assumption weight for normal}, $\nu_2$ for Assumption \ref{assumption-9}, and $\nu_{3},\ldots,\nu_{6}$ for Assumption \ref{assumption-8}.
While Assumptions \ref{assumption weight for normal}, \ref{assumption-9}, and \ref{assumption-8} define bounds for all data points $\bm{x}\in\mathcal{X}^{(p^*)}\cup \mathcal{X}^{(q^*)}\cup \mathcal{X}^{(\delta_p)}\cup \mathcal{X}^{(\delta_q)}$, Assumption \ref{assumption weight for outlier} imposes additional bounds specifically for the outliers $\bm{x}\in\mathcal{X}^{(\delta_p)}\cup \mathcal{X}^{(\delta_q)}$.

This assumption requires that at least one of the following conditions to hold: (i) the contamination ratio $\varepsilon$ is small, and (ii) outliers have small weighted values $\nu$.
Previous non-asymptotic analyses in robust estimation have considered only condition (i), more precisely, $\varepsilon \rightarrow 0$ as $n_{p,q}^* \rightarrow \infty$ \cite{nguyen_2013,liu_2020}.
While condition (ii) is typically assumed in heavily contaminated settings in the context of asymptotic analyses of strong robustness \cite{fujisawa_mult_anal_2008,nagumo_2024}, it has not been explored in non-asymptotic analyses.
By requiring at least one of the two conditions, (i) or (ii), to hold, this assumption is less restrictive than those in prior studies.
Because the settings of sparse and robust estimation usually assume that $k<n_{p,q}^*$ and $\nu$ is a constant when the outliers are given, the assumption that the effect of outliers disappears when the number of the inliers increases, more precisely, $k^{3/2}\varepsilon\nu \rightarrow 0$ as $n_{p,q}^*\rightarrow \infty$, may be not so strong.
Assumption \ref{assumption weight for outlier} enables non-asymptotic analysis of doubly strong robustness under heavy contamination, even when condition (i) is violated.

The following theorem shows the similarity of the objective functions $\mathcal{L}^\dagger(\bm{\theta})$ and $\mathcal{L}^*(\bm{\theta})$.
Let the first, second, and element-wise third derivatives of $\mathcal{L}^\dagger$ be denoted by $\nabla\mathcal{L}^\dagger(\bm{\theta})$, $\nabla^2\mathcal{L}^\dagger(\bm{\theta})$, and $\nabla_{t}\nabla^2\mathcal{L}^\dagger(\bm{\theta})$, and those of $\mathcal{L}^*$ be denoted by $\nabla\mathcal{L}^*(\bm{\theta})$, $\nabla^2\mathcal{L}^*(\bm{\theta})$, and $\nabla_{t}\nabla^2\mathcal{L}^*(\bm{\theta})$, respectively.
We define the sub-matrices of $\nabla_{t}\nabla^2\mathcal{L}^\dagger(\bm{\theta})$ and $\nabla_{t}\nabla^2\mathcal{L}^*(\bm{\theta})$ as $\nabla_{t}\nabla^2_{SS}\mathcal{L}^\dagger(\bm{\theta})$ and $\nabla_{t}\nabla^2_{SS}\mathcal{L}^*(\bm{\theta})$, respectively.

\begin{theorem}
\label{theorem of robust objective function}
We assume that $n_{p,q}^* \ge N_{\delta}$ holds, where $N_{\delta}$ is a positive constant.
Under Assumptions \ref{assumption weight for normal}, \ref{assumption-9}, \ref{assumption-8}, and \ref{assumption weight for outlier}, we have
\begin{align*}
    &\nabla\mathcal{L}^\dagger(\bm{\theta}) = (1-\varepsilon_p)\nabla\mathcal{L}^*(\bm{\theta})+O(\varepsilon \nu), \quad
    \nabla^2\mathcal{L}^\dagger(\bm{\theta}) = (1-\varepsilon_p)\nabla^2\mathcal{L}^*(\bm{\theta})+O(\varepsilon \nu), \\
    &\nabla_{t}\nabla^2_{SS}\mathcal{L}^\dagger(\bm{\theta}) =(1-\varepsilon_p)\nabla_{t}\nabla^2_{SS}\mathcal{L}^*(\bm{\theta}) + O(\varepsilon \nu),
\end{align*}
for any $\bm{\theta}\in\Theta$ and $t \in \mathcal{E}$ with probability at least $1-2\delta$, where $\delta$ is a small positive constant.
\end{theorem}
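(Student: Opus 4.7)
The plan is to view every empirical expectation appearing in $\mathcal{L}^\dagger$ as a convex combination of an inlier part and an outlier part, and then show that the outlier contribution to each derivative is uniformly $O(\varepsilon\nu)$. The data partition in \eqref{eq dataset setting} with $n_p^*=(1-\varepsilon_p)n_p$ gives the exact identity $\hat{\mathbb{E}}_{p^\dagger}[f] = (1-\varepsilon_p)\hat{\mathbb{E}}_{p^*}[f] + \varepsilon_p\hat{\mathbb{E}}_{\delta_p}[f]$ and its $q$-analogue. I would apply this to each of the building blocks $f\in\{w,\, hw,\, e^{\bm{\theta}^Th}w,\, he^{\bm{\theta}^Th}w,\, hh^Te^{\bm{\theta}^Th}w,\, h_th_Sh_S^Te^{\bm{\theta}^Th}w\}$ out of which $\nabla\mathcal{L}^\dagger$, $\nabla^2\mathcal{L}^\dagger$ and $\nabla_t\nabla^2_{SS}\mathcal{L}^\dagger$ are assembled. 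The outlier empirical expectations are then bounded termwise in the appropriate $\ell_\infty$ or $\ell_{\max}$ norm by $\nu_1,\nu_3,\nu_2,\nu_4,\nu_5,\nu_6$ (in that order) via Assumption \ref{assumption weight for outlier}, so each building block takes the shape $(1-\varepsilon_\bullet)\cdot(\text{uncontaminated})+O(\varepsilon\nu)$.

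Next I would assemble the pieces. The three derivatives are rational functions whose denominator is a power of $B^\dagger := \hat{\mathbb{E}}_{q^\dagger}[e^{\bm{\theta}^Th}w]$ and whose numerator is a product of at most three such blocks. The workhorse identity is the elementary perturbation
\begin{align*}
\frac{A^\dagger}{B^\dagger} = \frac{(1-\varepsilon_q)A^* + O(\varepsilon_q\nu)}{(1-\varepsilon_q)B^* + O(\varepsilon_q\nu)} = \frac{A^*}{B^*} + O\!\left(\frac{\varepsilon_q\nu}{B^*}\right),
\end{align*}
valid once $B^*$ is bounded away from zero and $A^*/B^*$ is bounded (which Assumptions \ref{assumption-9} and \ref{assumption-8} supply). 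The key cancellation is that the $(1-\varepsilon_q)$ factors in numerator and denominator cancel inside every ratio, so the only surviving linear prefactor is the $(1-\varepsilon_p)$ pulled out of $\hat{\mathbb{E}}_{p^\dagger}[w]$ and $\hat{\mathbb{E}}_{p^\dagger}[hw]$, which matches the $(1-\varepsilon_p)$ in the statement. For the Hessian and third derivative the same expansion is iterated on squared and cubed denominators and on products of two and three numerators.

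The only probabilistic ingredient is the lower bound $\hat{\mathbb{E}}_{p^*}[w]\ge W_{\min}/2$ and, uniformly in $\bm{\theta}\in\Theta$, $\hat{\mathbb{E}}_{q^*}[e^{\bm{\theta}^Th}w]\ge E_{\min}/2$. Each summand is bounded in $[0,W_{\max}]$ or $[0,E_{\max}]$ by Assumptions \ref{assumption weight for normal} and \ref{assumption-9}, so Hoeffding's inequality delivers the pointwise bound with probability at least $1-\delta$ once $n_{p,q}^*\ge N_\delta$. Uniformity in $\bm{\theta}$ is obtained by covering the compact set $\Theta$ with a finite net and controlling the residual through the Lipschitz estimate $\|he^{\bm{\theta}^Th}w\|_\infty\le D_{\max}$ from Assumption \ref{assumption-8}; a union bound over the two events produces the $1-2\delta$ in the statement. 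The main obstacle I anticipate is the bookkeeping for $\nabla_t\nabla^2_{SS}\mathcal{L}^\dagger$: with up to three independently perturbed numerators sitting over a cubed perturbed denominator, one must verify that every cross term between the $O(\varepsilon\nu)$ perturbations is absorbed back into $O(\varepsilon\nu)$ rather than accumulating. This is precisely the role of the smallness of $k^{3/2}\varepsilon\nu$ in Assumption \ref{assumption weight for outlier}, which both keeps the perturbed denominators uniformly bounded away from zero and ensures that higher-order error products remain negligible.
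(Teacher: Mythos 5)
Your proposal follows essentially the same route as the paper's proof: the inlier/outlier decomposition of each empirical expectation, termwise bounding of the outlier averages by $\nu_1,\ldots,\nu_6$, a Taylor/perturbation expansion of the normalizing term $\hat{C}^\dagger_{\bm{\theta}}$ in which the $(1-\varepsilon_q)$ factors cancel to leave the $(1-\varepsilon_p)$ prefactor, and Hoeffding-based lower bounds on $\hat{\mathbb{E}}_{p^*}[w(X)]$ and $\hat{\mathbb{E}}_{q^*}[\exp(\bm{\theta}^Th(X))w(X)]$ supplying the two $\delta$-events behind the $1-2\delta$ probability. The only cosmetic differences are that the paper's lower bounds are stated in terms of $W'_{\max}$ rather than $W_{\min}$ and $E_{\min}$, and the paper does not make the covering-net argument for uniformity in $\bm{\theta}$ explicit.
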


Theorem \ref{theorem of robust objective function} encapsulates the effects of outliers into the $O(\varepsilon\nu)$ terms, which are assumed to be sufficiently small under Assumption \ref{assumption weight for outlier}.
The proof outline primarily follows the approach in \cite{fujisawa_mult_anal_2008}.
Propositions regarding the sample size condition are detailed in Appendix \ref{section propositions of boundedness}, while the complete proof is provided in Appendix \ref{section proof of theorem of robust objective function}.

\subsection{Assumptions for Sparse Estimation}
\label{subsection assumptions for sparse estimation}

We introduce three common assumptions used in the non-asymptotic analysis of sparse estimation.
In these assumptions, the Hessian matrix of the UKL divergence with the weight function plays an crucial role.
We refer to this matrix as the weighted Fisher information matrix, expressed as
\begin{align*}
    \mathcal{I}^*(\bm{\theta})
    &={\mathbb{E}}_{q^*}\left[r\left(X;\bm{\theta},{C}^*_{\bm{\theta}}\right)w(X)h(X)h(X)^T\right] \\
    &\quad-\frac{1}{\mathbb{E}_{p^*}[w(X)]}{\mathbb{E}}_{q^*}\left[r\left(X;\bm{\theta},{C}^*_{\bm{\theta}}\right)w(X)h(X)\right]{\mathbb{E}}_{q^*}\left[r\left(X;\bm{\theta},{C}^*_{\bm{\theta}}\right)w(X)h(X)\right]^T.
\end{align*}
The sample version of the weighted Fisher information matrix is represented as $\hat{\mathcal{I}}^*(\bm{\theta})=\nabla^2 \mathcal{L}^*(\bm{\theta})$ in the clean setting and $\hat{\mathcal{I}}^\dagger(\bm{\theta})=\nabla^2 \mathcal{L}^\dagger(\bm{\theta})$ in the contaminated setting.
For simplicity, we denote $\mathcal{I}^*=\mathcal{I}^*(\bm{\theta}^*)$ and $\hat{\mathcal{I}}^{\dagger}=\hat{\mathcal{I}}^{\dagger}(\bm{\theta}^*)$ and define their sub-matrices as ${\mathcal{I}}^*_{SS}$ and $\hat{\mathcal{I}}^{\dagger}_{SS}$, respectively.
Note that while prior research on DRE imposes these assumptions at the sample level \cite{songliu_annals_of_stats2017}, we impose them at the population level.

\begin{assumption}[Dependency Assumption]
\label{assumption-1}
    \begin{equation*}
        \Lambda_{\min}\left[{\mathcal{I}}^*_{SS}\right] \ge \lambda_{\min} > 0.
    \end{equation*}
\end{assumption}
This dependency assumption ensures that the relevant covariates remain sufficiently independent \cite{wainwright2009,ravikumar2010,songliu_annals_of_stats2017}.
We demonstrate that this dependency assumption holds at the sample level even under contamination.

\begin{proposition}
\label{proposition robustness of minimum eignvalue}
If Assumptions \ref{assumption weight for normal}, \ref{assumption-9}, \ref{assumption-8}, \ref{assumption weight for outlier} and \ref{assumption-1} hold and $n_{p,q}^* \gtrsim k^2 \log d$ holds,
\begin{equation*}
    \Lambda_{\min}\left[\hat{\mathcal{I}}_{SS}^\dagger\right]\ge \frac{(1-\varepsilon_p) \lambda_{\min}}{4}
\end{equation*}
with probability at least $1-3\delta$.
\end{proposition}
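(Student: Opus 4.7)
\textbf{Proof Proposal for Proposition \ref{proposition robustness of minimum eignvalue}.}

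The plan is to decompose the deviation of $\hat{\mathcal{I}}^\dagger_{SS}$ from $\mathcal{I}^*_{SS}$ into two independent pieces: a contamination piece, handled by Theorem \ref{theorem of robust objective function}, and a pure concentration piece, handled by a sample-size argument on the clean empirical Hessian. Applying Weyl's inequality then converts these two spectral perturbations into a lower bound on the minimum eigenvalue.

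First I would invoke Theorem \ref{theorem of robust objective function} at $\bm{\theta}^*$ (whose hypothesis requires only $n_{p,q}^* \ge N_\delta$, which is implied by $n_{p,q}^* \gtrsim k^2 \log d$ for $d$ large enough) to write, with probability at least $1-2\delta$,
\begin{equation*}
\hat{\mathcal{I}}^\dagger_{SS} \;=\; (1-\varepsilon_p)\,\hat{\mathcal{I}}^*_{SS} + E,
\end{equation*}
where the $O(\varepsilon\nu)$ error in that theorem is element-wise, so $\|E\|_{\max} = O(\varepsilon\nu)$. Since $E$ is a $k\times k$ symmetric matrix, I then use the standard bounds $\|E\|_2 \le \|E\|_\infty \le k\|E\|_{\max}$, giving $\|E\|_2 = O(k\,\varepsilon\nu)$.

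Next I would establish a concentration bound for $\hat{\mathcal{I}}^*_{SS}$ around its population counterpart $\mathcal{I}^*_{SS}$. Each entry of $\hat{\mathcal{I}}^*_{SS}$ is an average of products of the form $h_a(X)h_b(X)\exp(\bm{\theta}^{*T}h(X))w(X)$ over the clean sample, together with the product-of-averages correction; by Assumption \ref{assumption-8} each summand is uniformly bounded, so Hoeffding's inequality combined with a union bound over the $O(k^2)$ entries of the $S\times S$ block yields
\begin{equation*}
\|\hat{\mathcal{I}}^*_{SS} - \mathcal{I}^*_{SS}\|_{\max} \;\lesssim\; \sqrt{\tfrac{\log d}{n_{p,q}^*}}
\end{equation*}
with probability at least $1-\delta$, which upgrades to $\|\hat{\mathcal{I}}^*_{SS} - \mathcal{I}^*_{SS}\|_2 \lesssim k\sqrt{\log d / n_{p,q}^*}$ by the same norm inequality. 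Under $n_{p,q}^* \gtrsim k^2 \log d$ with a sufficiently large constant, this spectral error is at most $\lambda_{\min}/2$.

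Putting the pieces together on the event of total probability at least $1-3\delta$, Weyl's inequality gives
\begin{equation*}
\Lambda_{\min}\!\left[\hat{\mathcal{I}}^\dagger_{SS}\right] \;\ge\; (1-\varepsilon_p)\Lambda_{\min}\!\left[\hat{\mathcal{I}}^*_{SS}\right] - \|E\|_2 \;\ge\; (1-\varepsilon_p)\bigl(\lambda_{\min} - \tfrac{\lambda_{\min}}{2}\bigr) - C\,k\,\varepsilon\nu,
\end{equation*}
where I used Assumption \ref{assumption-1}. Finally, Assumption \ref{assumption weight for outlier} says $k^{3/2}\varepsilon\nu$ is sufficiently small, so \emph{a fortiori} $k\varepsilon\nu$ can be made smaller than $(1-\varepsilon_p)\lambda_{\min}/4$, and the desired bound $\Lambda_{\min}[\hat{\mathcal{I}}^\dagger_{SS}] \ge (1-\varepsilon_p)\lambda_{\min}/4$ follows. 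The main technical obstacle is the bookkeeping of constants and probability budgets so that the concentration slack and the contamination slack each consume only a quarter of $\lambda_{\min}$; the role of the sample-size condition $n_{p,q}^* \gtrsim k^2 \log d$ is precisely to absorb the factor $k$ picked up when passing from element-wise to spectral norm on a $k\times k$ block.
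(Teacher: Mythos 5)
Your proposal is correct and follows essentially the same route as the paper: Theorem \ref{theorem of robust objective function} supplies the contamination perturbation, an entrywise Hoeffding-plus-union-bound concentration of $\hat{\mathcal{I}}^*_{SS}$ around $\mathcal{I}^*_{SS}$ is upgraded to the spectral norm at a cost of $k$ (absorbed by $n_{p,q}^*\gtrsim k^2\log d$), and Weyl's inequality combines the two, with Assumption \ref{assumption weight for outlier} killing the $O(k\varepsilon\nu)$ term. The only step you compress is the concentration of $\hat{\mathcal{I}}^*_{SS}$, which in the paper (Lemma \ref{lemma relationship of I-star and hatI-star}) requires a Taylor expansion in the empirical normalizing constant $\hat{C}^*_{\bm{\theta}}$ rather than a direct Hoeffding bound on i.i.d.\ sums, but you acknowledge this product-of-averages structure, so the argument is sound.
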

The proof of Proposition \ref{proposition robustness of minimum eignvalue} is provided in Appendix \ref{section discussion of assumption 1}.

\begin{assumption}[Incoherence Assumption]
\label{assumption-2}
    If the sub-matrix of the weighted Fisher information matrix ${\mathcal{I}}_{SS}^{*}$ is invertible,
    \begin{equation*}
        \left\|{\mathcal{I}}_{S^cS}^* {\mathcal{I}}_{SS}^{*^{-1}} \right\|_{\infty} \le 1-\alpha,
    \end{equation*}
    where $0<\alpha\le1$.
\end{assumption}
This incoherence assumption ensures that the elements in the non-active set do not exert disproportionately strong effects on those in the active set \cite{zhao_2006,wainwright2009,ravikumar2010,songliu_annals_of_stats2017}.
We demonstrate that this incoherence assumption holds at the sample level even under contamination.

\begin{proposition}
\label{proposition incoherence of I dagger}
If Assumptions \ref{assumption weight for normal}, \ref{assumption-9}, \ref{assumption-8}, \ref{assumption weight for outlier}, \ref{assumption-1}, and \ref{assumption-2} hold and $n_{p,q}^* \gtrsim k^3 \log d$ holds,
\begin{align*}
    \left\| \hat{\mathcal{I}}_{S^cS}^\dagger\hat{\mathcal{I}}_{SS}^{\dagger^{-1}} \right\|_{\infty} \le 1 - \frac{\alpha}{2}
\end{align*}
holds with probability at least  $1-6\delta$.
\end{proposition}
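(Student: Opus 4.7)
The plan is to adapt the standard primal--dual witness decomposition of the irrepresentable condition (following \cite{ravikumar2010,songliu_annals_of_stats2017}) to the contaminated weighted Hessian. A key preliminary observation is that the scaling factor $(1-\varepsilon_p)$ in Theorem \ref{theorem of robust objective function} cancels between the numerator and denominator of $\hat{\mathcal{I}}^\dagger_{S^cS}\hat{\mathcal{I}}^{\dagger-1}_{SS}$, so I would work with the rescaled blocks $\tilde M_{S^cS}=\hat{\mathcal{I}}^\dagger_{S^cS}/(1-\varepsilon_p)$ and $\tilde M_{SS}=\hat{\mathcal{I}}^\dagger_{SS}/(1-\varepsilon_p)$, which by Theorem \ref{theorem of robust objective function} agree entry-wise with $\hat{\mathcal{I}}^*_{S^cS}$ and $\hat{\mathcal{I}}^*_{SS}$ up to $O(\varepsilon\nu/(1-\varepsilon_p))$ on an event of probability at least $1-2\delta$, while the ratio $\tilde M_{S^cS}\tilde M_{SS}^{-1}$ equals $\hat{\mathcal{I}}^\dagger_{S^cS}\hat{\mathcal{I}}^{\dagger-1}_{SS}$ exactly. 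This cancellation avoids a spurious $(1-\varepsilon_p)^{-1}$ blow-up that would otherwise be fatal under heavy contamination.

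Next, apply the matrix identity $M^{-1}-\mathcal{I}^{*-1}_{SS}=-\mathcal{I}^{*-1}_{SS}(M-\mathcal{I}^*_{SS})M^{-1}$ to decompose
\begin{align*}
    \hat{\mathcal{I}}^\dagger_{S^cS}\hat{\mathcal{I}}^{\dagger-1}_{SS}-\mathcal{I}^*_{S^cS}\mathcal{I}^{*-1}_{SS}
    = (\tilde M_{S^cS}-\mathcal{I}^*_{S^cS})\tilde M_{SS}^{-1}
    -\mathcal{I}^*_{S^cS}\mathcal{I}^{*-1}_{SS}(\tilde M_{SS}-\mathcal{I}^*_{SS})\tilde M_{SS}^{-1}.
\end{align*}
Since Assumption \ref{assumption-2} gives $\|\mathcal{I}^*_{S^cS}\mathcal{I}^{*-1}_{SS}\|_\infty\le 1-\alpha$, the triangle inequality reduces the claim to bounding each of the two terms on the right by $\alpha/4$ in operator $\infty$-norm.

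For the perturbation factors, split each into sampling and contamination parts: $\|\tilde M_{SS}-\mathcal{I}^*_{SS}\|_{\max}\le \|\hat{\mathcal{I}}^*_{SS}-\mathcal{I}^*_{SS}\|_{\max}+O(\varepsilon\nu/(1-\varepsilon_p))$, and similarly for the $S^cS$ block. The entries of $\hat{\mathcal{I}}^*(\bm{\theta}^*)$ are Lipschitz functions of sample averages of $w$, $h\exp(\bm{\theta}^Th)w$, and $hh^T\exp(\bm{\theta}^Th)w$, each uniformly bounded by Assumptions \ref{assumption weight for normal}, \ref{assumption-9}, and \ref{assumption-8}; Hoeffding's inequality with a union bound over the at-most $d^2$ entries yields $\|\hat{\mathcal{I}}^*-\mathcal{I}^*\|_{\max}=O(\sqrt{\log d/n^*_{p,q}})$ with high probability. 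For the inverse, use $\|\tilde M_{SS}^{-1}\|_\infty\le \sqrt{k}\,\|\tilde M_{SS}^{-1}\|_2 = \sqrt{k}/\Lambda_{\min}[\tilde M_{SS}]$, and Proposition \ref{proposition robustness of minimum eignvalue} gives $\Lambda_{\min}[\tilde M_{SS}]\ge \lambda_{\min}/4$, so $\|\tilde M_{SS}^{-1}\|_\infty\le 4\sqrt{k}/\lambda_{\min}$.

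Converting the max-norm bounds on the $k$-column perturbation matrices to operator $\infty$-norm via $\|A\|_\infty\le k\|A\|_{\max}$, each of the two terms is then of order $k^{3/2}(\varepsilon\nu+\sqrt{\log d/n^*_{p,q}})$. The sample-size condition $n^*_{p,q}\gtrsim k^3\log d$ controls the sampling contribution and Assumption \ref{assumption weight for outlier} (smallness of $k^{3/2}\varepsilon\nu$) controls the contamination contribution, yielding the $1-\alpha/2$ bound. The main obstacle is the bookkeeping of which norm conversion contributes a $k$ versus a $\sqrt{k}$ factor, so that the final sample complexity lands at exactly $k^3\log d$ rather than $k^2\log d$ or $k^4\log d$; a secondary subtlety is ensuring the $(1-\varepsilon_p)^{-1}$ that appears in the contamination error on each block never enters the final product. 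The stated failure probability $6\delta$ arises by a union bound combining the $2\delta$ from Theorem \ref{theorem of robust objective function}, the $\delta$ events from Hoeffding on the $SS$ and $S^cS$ blocks, and the additional events needed in Proposition \ref{proposition robustness of minimum eignvalue}.
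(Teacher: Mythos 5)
Your argument is correct and lands at the same sample complexity $n_{p,q}^*\gtrsim k^3\log d$, but it is organized differently from the paper. The paper proceeds in two stages: it first proves a clean-sample incoherence bound $\|\hat{\mathcal{I}}^*_{S^cS}\hat{\mathcal{I}}^{*^{-1}}_{SS}\|_\infty\le 1-3\alpha/4$ (Proposition \ref{proposition incoherence of sample}), via the four-term Ravikumar-style decomposition $T_1+T_2+T_3+T_4$ built on Lemmas \ref{lemma relationship of I-star and hatI-star} and \ref{lemma control of bound of fisher matrix}, budgeting $\alpha/12$ per sampling term; it then passes from the clean sample matrix to the contaminated one by expanding $\hat{\mathcal{I}}^{\dagger^{-1}}_{SS}$ with the identity $(A+B)^{-1}=A^{-1}-A^{-1}B(A+B)^{-1}$ applied to $\hat{\mathcal{I}}^\dagger_{SS}=(1-\varepsilon_p)\hat{\mathcal{I}}^*_{SS}+O_{k\times k}(\varepsilon\nu)$ from Theorem \ref{theorem of robust objective function}, and absorbs the resulting $O(k^{3/2}\varepsilon\nu)$ contamination error into an $\alpha/4$ budget. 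You instead compare the contaminated product directly to the population quantity $\mathcal{I}^*_{S^cS}\mathcal{I}^{*^{-1}}_{SS}$ in a single two-term resolvent decomposition, lumping the sampling fluctuation and the contamination perturbation into one matrix $\tilde M-\mathcal{I}^*$ and budgeting $\alpha/4$ per term. The underlying estimates are the same in both routes (entry-wise Hoeffding plus union bounds, $\|A\|_\infty\le k\|A\|_{\max}$ and $\|A\|_\infty\le\sqrt{k}\|A\|_2$, the population incoherence from Assumption \ref{assumption-2}, and the lower eigenvalue bound from Proposition \ref{proposition robustness of minimum eignvalue}), so neither is stronger; your version is shorter and makes the $(1-\varepsilon_p)$ cancellation in the leading term more transparent, while the paper's staging isolates a reusable clean-sample statement (Proposition \ref{proposition incoherence of sample}) and keeps the contamination analysis confined to a single step governed by Assumption \ref{assumption weight for outlier}. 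One small imprecision: the factor $(1-\varepsilon_p)^{-1}$ does enter your contamination error terms (as it does the paper's), not just the exactly-cancelling leading term, but this is harmless since it is absorbed into the $O(k^{3/2}\varepsilon\nu)$ quantity assumed small.
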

The proof of Proposition \ref{proposition incoherence of I dagger} is provided in Appendix \ref{section consequences of assumption-2}.

\begin{assumption}[Smoothness Assumption]
\label{assumption smoothness}
For any $\bm{\theta}\in\Theta$ and $t\in \mathcal{E}$,
\begin{align*}
    \Lambda_{\max}\left[\nabla_{t}\mathcal{I}^*_{SS}(\bm{\theta})\right] \le \lambda_{3,\max} < \infty.
\end{align*}
\end{assumption}

This assumption requires the objective function to be smooth \cite{ravikumar2010,eunho_2012,songliu_annals_of_stats2017}.
A proposition derived from Assumption \ref{assumption smoothness} is presented in Appendix \ref{section discussion of robust objective function}.

\subsection{Main Theorem}
\label{subsection main theorem}

The following theorem presents the non-asymptotic analysis of Weighted DRE for unbounded density ratios under heavy contamination.
\begin{theorem}
\label{theorem-sparse consistency}
    Suppose that Assumptions \ref{assumption weight for normal}, \ref{assumption-9}, \ref{assumption-8}, \ref{assumption weight for outlier}, \ref{assumption-1}, \ref{assumption-2}, and \ref{assumption smoothness} hold and $n_{p,q}^* \gtrsim k^3 \log d$ holds.
    We define the regularization parameter of L1 norm as
    \begin{align*}
        \lambda_{n_p^*,n_q^*}=L(1-\varepsilon_p)\sqrt{\frac{\log(6d/\delta)}{n_{p,q}^*}}+M(1-\varepsilon_p)\sqrt{\frac{\log(2/\delta)}{n_p^*}}+N\varepsilon\nu,
    \end{align*}
    where $L$, $M$, and $N$ are some positive constants and $\delta$ is a small positive constant.
    Suppose that the minimum value of the true parameter in the active set is not too close to zero:
    \begin{align*}
        \min_{t\in S}\left|\theta_t^*\right|\ge \frac{80L}{\lambda_{\min}}\sqrt{\frac{k\log (6d/\delta)}{n_{p,q}^*}}+\frac{80M}{\lambda_{\min}}\sqrt{\frac{k\log(2/\delta)}{n_p^*}}+\frac{80N}{\lambda_{\min}}\frac{\varepsilon\nu\sqrt{k}}{1-\varepsilon_p}.
    \end{align*}
    Then, with probability at least  $1-8\delta$, the estimator of \eqref{eq-9} is unique and has sparse consistency: $\hat{{\theta}}_{t}\neq0$ for $t\in S$ and $\hat{{\theta}}_{t}=0$ for $t \in S^c$.
    The estimation error can be bounded by
    \begin{align}
    \label{eq estimation error}
        \left\|\hat{\bm{\theta}}-\bm{\theta}^*\right\|_2 \le \frac{40L}{\lambda_{\min}}\sqrt{\frac{k\log (6d/\delta)}{n_{p,q}^*}}+\frac{40M}{\lambda_{\min}}\sqrt{\frac{k\log(2/\delta)}{n_p^*}}+\frac{40N}{\lambda_{\min}}\frac{\varepsilon\nu\sqrt{k}}{1-\varepsilon_p}.
    \end{align}
\end{theorem}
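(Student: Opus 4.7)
The plan is to apply the primal-dual witness (PDW) construction standard for $\ell_1$-regularized $M$-estimation, adapted here to the contaminated Weighted DRE objective $\mathcal{L}^\dagger$. I would define the oracle estimator $\tilde{\bm{\theta}}$ as the minimizer of $\mathcal{L}^\dagger(\bm{\theta})+\lambda_{n_p^*,n_q^*}\|\bm{\theta}\|_1$ subject to $\bm{\theta}_{S^c}=0$, and choose a sub-gradient $\hat{\bm{z}}$ with $\hat{\bm{z}}_S=\operatorname{sign}(\tilde{\bm{\theta}}_S)$, while $\hat{\bm{z}}_{S^c}$ is forced by the $S^c$-block of the KKT equation. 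The pair $(\tilde{\bm{\theta}},\hat{\bm{z}})$ then coincides with the unique minimizer of the unrestricted penalized problem, and sparse consistency follows automatically, provided (i) strict dual feasibility $\|\hat{\bm{z}}_{S^c}\|_\infty<1$ holds, and (ii) an $\ell_2$-bound on $\tilde{\bm{\theta}}_S-\bm{\theta}^*_S$ combined with the minimum-signal hypothesis yields $\operatorname{sign}(\tilde{\bm{\theta}}_S)=\operatorname{sign}(\bm{\theta}^*_S)$.

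Next I would second-order Taylor-expand $\nabla\mathcal{L}^\dagger(\tilde{\bm{\theta}})=\nabla\mathcal{L}^\dagger(\bm{\theta}^*)+\hat{\mathcal{I}}^\dagger(\tilde{\bm{\theta}}-\bm{\theta}^*)+R$, where the coordinatewise remainder $R$ is controlled by the third derivatives $\nabla_t\nabla^2\mathcal{L}^\dagger$ (bounded via Assumption \ref{assumption smoothness} and Theorem \ref{theorem of robust objective function}) and by $\|\tilde{\bm{\theta}}-\bm{\theta}^*\|_2^2$. For the gradient at $\bm{\theta}^*$, Theorem \ref{theorem of robust objective function} yields $\nabla\mathcal{L}^\dagger(\bm{\theta}^*)=(1-\varepsilon_p)\nabla\mathcal{L}^*(\bm{\theta}^*)+O(\varepsilon\nu)$, and the clean piece splits into a scalar $\hat{\mathbb{E}}_{p^*}[w(X)]$ deviation concentrating at rate $\sqrt{\log(2/\delta)/n_p^*}$ by Hoeffding under Assumption \ref{assumption weight for normal}, plus a vector empirical-process component (after linearizing $\log\hat{\mathbb{E}}_{q^*}[\cdot]$) whose coordinatewise $\ell_\infty$-deviation is of order $\sqrt{\log(6d/\delta)/n_{p,q}^*}$ by a union-bounded Hoeffding argument under Assumptions \ref{assumption-9} and \ref{assumption-8}. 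This produces the three summands of $\lambda_{n_p^*,n_q^*}$, and the choice of $\lambda_{n_p^*,n_q^*}$ ensures $\|\nabla\mathcal{L}^\dagger(\bm{\theta}^*)\|_\infty$ is at most a constant fraction of $\lambda_{n_p^*,n_q^*}$ with high probability. Restricting the Taylor equation to $S$ and solving for $\tilde{\bm{\theta}}_S-\bm{\theta}^*_S$ via $\hat{\mathcal{I}}^{\dagger^{-1}}_{SS}$, whose spectrum is at least $(1-\varepsilon_p)\lambda_{\min}/4$ by Proposition \ref{proposition robustness of minimum eignvalue}, yields $\|\tilde{\bm{\theta}}_S-\bm{\theta}^*_S\|_2\lesssim\sqrt{k}\,\lambda_{n_p^*,n_q^*}/[(1-\varepsilon_p)\lambda_{\min}]$, which upon unpacking $\lambda_{n_p^*,n_q^*}$ gives \eqref{eq estimation error} and, combined with the minimum-signal hypothesis, also gives (ii).

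The remaining work is the $S^c$-block of the KKT equation; substituting the $S$-block solution gives
\[
\lambda_{n_p^*,n_q^*}\hat{\bm{z}}_{S^c}=-\hat{\mathcal{I}}^\dagger_{S^cS}\hat{\mathcal{I}}^{\dagger^{-1}}_{SS}\bigl[\lambda_{n_p^*,n_q^*}\hat{\bm{z}}_S+\nabla_S\mathcal{L}^\dagger(\bm{\theta}^*)+R_S\bigr]-\nabla_{S^c}\mathcal{L}^\dagger(\bm{\theta}^*)-R_{S^c},
\]
so by Proposition \ref{proposition incoherence of I dagger} the leading $\hat{\mathcal{I}}^\dagger_{S^cS}\hat{\mathcal{I}}^{\dagger^{-1}}_{SS}\hat{\bm{z}}_S$ term contributes at most $1-\alpha/2$ in $\ell_\infty$, and it suffices to show the remaining gradient and remainder contributions are each below $\lambda_{n_p^*,n_q^*}\alpha/4$. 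The main obstacle I expect is the joint bookkeeping of three distinct error sources --- sampling noise on the two clean distributions, the $O(\varepsilon\nu)$ contamination bias from Theorem \ref{theorem of robust objective function}, and the Taylor remainder $R$ --- simultaneously across $\ell_\infty$ and $\ell_2$ norms at the gradient, Hessian sub-matrix inverse, and incoherence stages. In particular, the Taylor remainder scales like third derivative times $\|\tilde{\bm{\theta}}-\bm{\theta}^*\|_2^2$, so after substituting the $\sqrt{k}\,\lambda_{n_p^*,n_q^*}$ bound it carries factors of $k$ and $\lambda_{n_p^*,n_q^*}^2$ that threaten to dominate the $\lambda_{n_p^*,n_q^*}\alpha/4$ margin; absorbing this is precisely what the sample-size condition $n_{p,q}^*\gtrsim k^3\log d$ and the $k^{3/2}\varepsilon\nu$ smallness in Assumption \ref{assumption weight for outlier} are designed to enable, and eventually the failure probabilities from Propositions \ref{proposition robustness of minimum eignvalue} and \ref{proposition incoherence of I dagger} together with the gradient concentration sum to $8\delta$.
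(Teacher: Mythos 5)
Your proposal is correct and follows essentially the same route as the paper: the primal--dual witness construction with the oracle restricted to $S$, the mean-value/Taylor decomposition of the KKT system into the gradient noise $\bm{w}^\dagger=-\nabla\mathcal{L}^\dagger(\bm{\theta}^*)$ and the remainder $\bm{g}^\dagger$ controlled by third derivatives, Hoeffding-type concentration plus Theorem \ref{theorem of robust objective function} to produce the three summands of $\lambda_{n_p^*,n_q^*}$, and Propositions \ref{proposition robustness of minimum eignvalue} and \ref{proposition incoherence of I dagger} for invertibility and strict dual feasibility. The one place you deviate is the $\ell_2$-error bound: you propose solving the $S$-block directly via $\hat{\mathcal{I}}^{\dagger^{-1}}_{SS}$, but the remainder $R_S$ there scales with $\|\tilde{\bm{\theta}}_S-\bm{\theta}^*_S\|_2^2$, so the bound is circular as stated; the paper breaks this circularity (Lemma \ref{lemma-3}) by a convexity argument showing the penalized difference functional $G(\bm{\delta}_S)$ is strictly positive on the sphere $\|\bm{\delta}_S\|_2=B$ with $B=\tfrac{40}{(1-\varepsilon_p)\lambda_{\min}}\sqrt{k}\,\lambda_{n_p^*,n_q^*}$, which forces the minimizer inside that ball --- you would need this (or an equivalent bootstrapping step) to complete your version.
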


The estimation error \eqref{eq estimation error} comprises three components: the naive estimation error of order $O(\sqrt{k\log d/n_{p,q}^*})$, the additional estimation error introduced by the weight function of order $O(\sqrt{k/n_p^*})$, and the contamination effect of order $O(\varepsilon\nu\sqrt{k}/(1-\varepsilon_p))$.
The naive estimation error has the same order as that of conventional DRE \cite{songliu_annals_of_stats2017}.
Furthermore, the constant in this term can be the same as that of conventional DRE \cite{songliu_annals_of_stats2017} if certain conditions are adjusted: the incoherence assumption (Assumption \ref{assumption-2}) is supposed at the sample level, and the weight function is fixed at 1.
The second term in \eqref{eq estimation error} reflects the variability introduced by the weight function.
The difference of the normalizing term in conventional DRE \eqref{eq conventional normalizing term} and Weighted DRE \eqref{eq weighted normalizing term} is $\mathbb{E}_p\left[w(X)\right]$ in the numerator.
This empirical expectation introduces the additional estimation error not to eliminate the inliers from the reference dataset, as supposed in Assumption \ref{assumption weight for normal}.
The third term in \eqref{eq estimation error} represents the contamination effect caused by outliers.
Even under heavy contamination where $\varepsilon$ is large, this term remains small under Assumption \ref{assumption weight for outlier}.
This term confirms doubly strong robustness of Weighted DRE from a non-asymptotic perspective, complementing the previous asymptotic result \cite{nagumo_2024}.

Theorem \ref{theorem-sparse consistency} demonstrates that Weighted DRE achieves sparse consistency for unbounded density ratios.
Conventional DRE attains sparse consistency only when the density ratio is bounded \cite{songliu_annals_of_stats2017}.
Weighted DRE extends this capability by requiring that the weighted density ratio is bounded, even if the density ratio is unbounded, as specified in Assumption \ref{assumption-9}.
In the absence of outliers ($\varepsilon=0$), Weighted DRE has a small additional estimation error on the order of $O(\sqrt{k/n_p^*})$ compared to conventional DRE, since the first term dominates the error order in \eqref{eq estimation error}.
This minimal error allows Weighted DRE to maintain sparse consistency for unbounded density ratios.

Although the sample complexity of Weighted DRE, $n_{p,q}^*\gtrsim k^3\log d$, is higher than that of conventional DRE, $n_{p,q}^*\gtrsim k^2\log d$ \cite{songliu_annals_of_stats2017}, this difference arises from the use of weaker assumptions.
Our analysis imposes the dependency, incoherence, and smoothness assumptions (Assumptions \ref{assumption-1}, \ref{assumption-2}, and \ref{assumption smoothness}) at the population level, whereas the prior work assumes these assumptions at the sample level with probability one \cite{songliu_annals_of_stats2017}.
Our result requires an additional sample complexity of $k$ to bridge this gap.
This result is analogous to the non-asymptotic analysis of Ising models, where the sample complexity satisfies $n \gtrsim k^3 \log d$ when these assumptions are imposed at the population level and $n \gtrsim k^2 \log d$ at the sample level \cite{ravikumar2010}.

Because the second term in \eqref{eq estimation error} is smaller than the first order, Theorem \ref{theorem-sparse consistency} can be rewritten in a simpler form.

\begin{corollary}
Suppose that Assumptions \ref{assumption weight for normal}, \ref{assumption-9}, \ref{assumption-8}, \ref{assumption weight for outlier}, \ref{assumption-1}, \ref{assumption-2}, and \ref{assumption smoothness} hold and $n_{p,q}^* \gtrsim k^3 \log d$ holds.
We define
\begin{align*}
    \lambda_{n_p^*,n_q^*}=L'(1-\varepsilon_p)\sqrt{\frac{\log(6d/\delta)}{n_{p,q}^*}}+N\varepsilon\nu,
\end{align*}
where $L'$ and $N$ are some positive constants and $\delta$ is a small positive constant, and suppose that
\begin{align*}
    \min_{t\in S}\left|\theta_t^*\right|\ge \frac{80L'}{\lambda_{\min}}\sqrt{\frac{k\log (6d/\delta)}{n_{p,q}^*}}+\frac{80N}{\lambda_{\min}}\frac{\varepsilon\nu\sqrt{k}}{1-\varepsilon_p}.
\end{align*}
Then, with probability at least  $1-8\delta$, the estimator of \eqref{eq-9} is unique and has sparse consistency.
The estimation error can be bounded by
\begin{align*}
    \left\|\hat{\bm{\theta}}-\bm{\theta}^*\right\|_2 \le \frac{40L'}{\lambda_{\min}}\sqrt{\frac{k\log (6d/\delta)}{n_{p,q}^*}}+\frac{40N}{\lambda_{\min}}\frac{\varepsilon\nu\sqrt{k}}{1-\varepsilon_p}.
\end{align*}
\end{corollary}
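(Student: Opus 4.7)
The plan is to derive the Corollary directly from Theorem~\ref{theorem-sparse consistency} by absorbing the middle error term, which scales with $1/\sqrt{n_p^*}$ and carries only a $\log(2/\delta)$ factor, into the first term, which scales with $1/\sqrt{n_{p,q}^*}$ and carries $\log(6d/\delta)$. This is possible because $n_{p,q}^* = \min\{n_p^*, n_q^*\} \le n_p^*$ by definition, and for any dimension $d\ge 1$ we have $\log(6d/\delta)\ge \log(2/\delta)$.

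First I would record the elementary inequality
\begin{equation*}
    \sqrt{\frac{\log(2/\delta)}{n_p^*}} \;\le\; \sqrt{\frac{\log(6d/\delta)}{n_{p,q}^*}},
\end{equation*}
which is immediate from the two observations above. Letting $L$ and $M$ be the constants appearing in Theorem~\ref{theorem-sparse consistency} and setting $L' = L + M$, the regularization parameter of the Theorem is then dominated by the simplified parameter of the Corollary:
\begin{equation*}
    L(1-\varepsilon_p)\sqrt{\frac{\log(6d/\delta)}{n_{p,q}^*}} + M(1-\varepsilon_p)\sqrt{\frac{\log(2/\delta)}{n_p^*}} \;\le\; L'(1-\varepsilon_p)\sqrt{\frac{\log(6d/\delta)}{n_{p,q}^*}}.
\end{equation*}
The identical absorption applies, after multiplication by $80/\lambda_{\min}$, to the signal-strength condition on $\min_{t\in S}|\theta_t^*|$, and, after multiplication by $40/\lambda_{\min}$, to the estimation-error bound \eqref{eq estimation error}.

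Next I would invoke Theorem~\ref{theorem-sparse consistency} with its original constants $L$, $M$, $N$, and $\delta$, using the sample-size condition $n_{p,q}^*\gtrsim k^3\log d$ that is shared between the two statements. Because the Corollary's simplified regularization parameter and its signal-strength requirement are both at least as large as the corresponding quantities in the Theorem, the hypotheses of the Theorem are satisfied whenever those of the Corollary are. Hence, on the same event of probability at least $1-8\delta$, the estimator in \eqref{eq-9} is unique, has sparse consistency ($\hat{\theta}_t\neq 0$ for $t\in S$ and $\hat{\theta}_t=0$ for $t\in S^c$), and satisfies the $\ell_2$ error bound with the absorbed constant $L'$ in place of $L+M$.

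There is no substantive obstacle: the argument is a purely algebraic consolidation of two error terms of the same $\sqrt{\log/n}$ order into one, relying only on the trivial facts $n_{p,q}^* \le n_p^*$ and $\log(6d/\delta) \ge \log(2/\delta)$ for $d\ge 1$. The only care required is to confirm that no additional concentration event is introduced by the consolidation, so that the probability guarantee $1-8\delta$ of the Theorem transfers verbatim to the Corollary.
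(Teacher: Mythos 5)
Your proposal is correct and follows essentially the same route as the paper, which justifies the corollary only by the remark that the second term of \eqref{eq estimation error} is dominated by the first; your absorption via $n_{p,q}^*\le n_p^*$ and $\log(6d/\delta)\ge\log(2/\delta)$ with $L'=L+M$ is the natural formalization of that remark. The one point worth making explicit is that the theorem's proof tolerates the enlarged regularization parameter: Lemma \ref{lemma-5} only becomes easier when $\lambda_{n_p^*,n_q^*}$ increases, and the upper-bound conditions $k\lambda_{n_p^*,n_q^*}\lesssim 1$ in Lemmas \ref{lemma-3} and \ref{lemma-4} are preserved because the new parameter exceeds the old one by at most the constant factor $(L+M)/L$ on the first term.
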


\section{Proof Outline of Main Theorem}
\label{section proof outline}

The main theorem establishes sparse consistency and provides the estimation error bound.
Sections \ref{subsection zero-pattern recovery} and \ref{subsection non-zero pattern recovery} present proofs for the zero and non-zero pattern recovery, respectively, which are two key components of sparse consistency.
Section \ref{subsection estimation error} presents the proof of the estimation error bound.

\subsection{Zero Pattern Recovery}
\label{subsection zero-pattern recovery}

The main proof procedure is based on the primal-dual witness method \cite{wainwright2009,ravikumar2010,songliu_annals_of_stats2017}.
Let $\hat{\bm{z}}$ be a dual variable associated with $\hat{\bm{\theta}}$, defined by the following equation:
\begin{equation}
\label{eq-14}
    \nabla \mathcal{L}^\dagger(\hat{\bm{\theta}})+\lambda_{n_p^*,n_q^*}\hat{\bm{z}}=\bm{0}.
\end{equation}
If $\hat{\bm{z}}$ is the sub-gradient of $\|\hat{\bm{\theta}}\|_1$, then \eqref{eq-14} is the optimality condition for \eqref{eq-9}, and $\hat{\bm{\theta}}$ is an optimal solution to \eqref{eq-9}.
Furthermore, the following lemma establishes the relationship between the dual variable $\hat{\bm{z}}$ and the sparsity patterns of any optimal solutions of \eqref{eq-9}.
\begin{lemma}
\label{lemma-1}
    Suppose that there exists an optimal solution $\hat{\bm{\theta}}$ of \eqref{eq-9} with the associated optimal dual variable $\hat{\bm{z}}$ such that $\|\hat{\bm{z}}_{S^c}\|_\infty<1$.
    Then, any optimal solution $\tilde{\bm{\theta}}$ of \eqref{eq-9} satisfies $\tilde{\bm{\theta}}_{S^c}=\bm{0}$.
    Furthermore, if $\hat{\mathcal{I}}_{SS}^\dagger$ is strictly positive definite, $\hat{\bm{\theta}}$ is the unique optimal solution.
\end{lemma}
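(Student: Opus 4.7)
The plan is to apply the primal--dual witness argument based on convexity and subgradient calculus. Throughout, write $F(\bm{\theta})=\mathcal{L}^\dagger(\bm{\theta})+\lambda_{n_p^*,n_q^*}\|\bm{\theta}\|_1$, which is convex since $\mathcal{L}^\dagger$ is convex (its Hessian is a weighted covariance matrix, hence positive semidefinite) and $\|\cdot\|_1$ is convex. Because $\hat{\bm{z}}\in\partial\|\hat{\bm{\theta}}\|_1$, we have $|\hat z_t|\le 1$ for all $t$ and $\langle\hat{\bm{z}},\hat{\bm{\theta}}\rangle=\|\hat{\bm{\theta}}\|_1$. The stationarity condition \eqref{eq-14} rewrites as $\nabla\mathcal{L}^\dagger(\hat{\bm{\theta}})=-\lambda_{n_p^*,n_q^*}\hat{\bm{z}}$.

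For the zero-pattern statement, let $\tilde{\bm{\theta}}$ be any optimal solution, so that $F(\tilde{\bm{\theta}})=F(\hat{\bm{\theta}})$. I would first invoke first-order convexity of $\mathcal{L}^\dagger$ at $\hat{\bm{\theta}}$ to obtain
\[
\mathcal{L}^\dagger(\tilde{\bm{\theta}})-\mathcal{L}^\dagger(\hat{\bm{\theta}})\;\ge\;\langle\nabla\mathcal{L}^\dagger(\hat{\bm{\theta}}),\tilde{\bm{\theta}}-\hat{\bm{\theta}}\rangle=-\lambda_{n_p^*,n_q^*}\langle\hat{\bm{z}},\tilde{\bm{\theta}}-\hat{\bm{\theta}}\rangle.
\]
Rearranging using $F(\tilde{\bm{\theta}})=F(\hat{\bm{\theta}})$ and $\langle\hat{\bm{z}},\hat{\bm{\theta}}\rangle=\|\hat{\bm{\theta}}\|_1$ yields $\|\tilde{\bm{\theta}}\|_1\le\langle\hat{\bm{z}},\tilde{\bm{\theta}}\rangle$. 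Combined with H\"older's inequality $\langle\hat{\bm{z}},\tilde{\bm{\theta}}\rangle\le\|\hat{\bm{z}}\|_\infty\|\tilde{\bm{\theta}}\|_1\le\|\tilde{\bm{\theta}}\|_1$, equality forces $\sum_t(1-|\hat z_t|)|\tilde\theta_t|=0$. Since $|\hat z_t|<1$ for every $t\in S^c$ by hypothesis, this forces $\tilde\theta_t=0$ on $S^c$, proving $\tilde{\bm{\theta}}_{S^c}=\bm{0}$.

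For uniqueness, I would argue as follows. By the first part, every optimal $\tilde{\bm{\theta}}$ (and in particular $\hat{\bm{\theta}}$) is supported on $S$, so the optimization effectively reduces to minimizing $\mathcal{L}^\dagger(\bm{\theta}_S,\bm{0})+\lambda_{n_p^*,n_q^*}\|\bm{\theta}_S\|_1$ over $\bm{\theta}_S\in\mathbb{R}^k$. The set of optimizers is convex, so if $\hat{\bm{\theta}}\neq\tilde{\bm{\theta}}$ were both optimal, then every convex combination would also be optimal, and in particular $\mathcal{L}^\dagger$ would have to be affine along the segment joining them. That would mean $(\tilde{\bm{\theta}}_S-\hat{\bm{\theta}}_S)^\top\nabla^2_{SS}\mathcal{L}^\dagger(\bm{\theta}(t))(\tilde{\bm{\theta}}_S-\hat{\bm{\theta}}_S)=0$ for every $t\in[0,1]$. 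Under the positive definiteness of $\hat{\mathcal{I}}^\dagger_{SS}$, a continuity/neighborhood argument shows this quadratic form is strictly positive, forcing $\tilde{\bm{\theta}}_S=\hat{\bm{\theta}}_S$, hence uniqueness.

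The main obstacle is a subtle one: $\hat{\mathcal{I}}^\dagger_{SS}$ is defined as $\nabla^2_{SS}\mathcal{L}^\dagger(\bm{\theta}^*)$, not at the optimum $\hat{\bm{\theta}}$, so invoking positive definiteness of the Hessian along the optimal segment needs a short continuity argument (or an observation that, for the log-sum-exp structure of $\mathcal{L}^\dagger$, the sub-Hessian $\nabla^2_{SS}\mathcal{L}^\dagger(\bm{\theta})$ inherits positive definiteness throughout $\Theta$ from Assumptions~\ref{assumption weight for normal}--\ref{assumption-8}). The first part (zero-pattern recovery) is instead entirely mechanical and needs only subgradient duality and H\"older.
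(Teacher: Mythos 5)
Your proposal is correct and follows essentially the same primal--dual witness argument as the paper: both derive $\|\tilde{\bm{\theta}}\|_1=\langle\hat{\bm{z}},\tilde{\bm{\theta}}\rangle$ from convexity of $\mathcal{L}^\dagger$, the stationarity condition \eqref{eq-14}, and equality of the optimal values, and then read off $\tilde{\bm{\theta}}_{S^c}=\bm{0}$ from $\|\hat{\bm{z}}_{S^c}\|_\infty<1$. On uniqueness you are in fact more careful than the paper, which simply asserts that positive definiteness of $\hat{\mathcal{I}}^\dagger_{SS}=\nabla^2_{SS}\mathcal{L}^\dagger(\bm{\theta}^*)$ makes \eqref{eq-9} strictly convex; your observation that this Hessian is evaluated at $\bm{\theta}^*$ rather than along the segment joining two putative optima, so that one must either argue by continuity or note that the null space of $\nabla^2_{SS}\mathcal{L}^\dagger(\bm{\theta})$ is independent of $\bm{\theta}$ (which holds here because the weights $r(\bm{x};\bm{\theta},\hat{C}^\dagger_{\bm{\theta}})w(\bm{x})$ in the empirical covariance are strictly positive for every $\bm{\theta}$), correctly identifies and fills a step the paper's one-line justification glosses over.
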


The proof of Lemma \ref{lemma-1} is provided in Appendix \ref{section proof of lemmas in main proof}.
Lemma \ref{lemma-1} demonstrates the correct zero pattern recovery of the estimator.
The goal of the remainder of this section is to establish the existence of an optimal solution that satisfies the condition stated in Lemma \ref{lemma-1}, more precisely, $\hat{\bm{\theta}}$ with $\hat{\bm{z}}$ such that $\|\hat{\bm{z}}_{S^c}\|_\infty<1$.

We define an optimal solution $\hat{\bm{\theta}}=[\hat{\bm{\theta}}_S^T, \bm{0}^T]^T$, where $\hat{\bm{\theta}}_S \in \mathbb{R}^k$ is given by solving the constrained optimization problem:
\begin{align*}
    \hat{\bm{\theta}}_S=\underset{\bm{\theta}_S}{\operatorname{argmin}}\,\mathcal{L}^\dagger\left(\begin{bmatrix} \bm{\theta}_S \\ \bm{0} \end{bmatrix}\right)+\lambda_{n_p^*,n_q^*}\|\bm{\theta}_{S}\|_1.
\end{align*}
From \eqref{eq-14}, by applying the mean-value theorem, we have
\begin{equation}
\label{eq-16}
    \underbrace{\nabla^2\mathcal{L}^\dagger(\bm{\theta}^*)}_{\hat{\mathcal{I}}^\dagger}\left[\hat{\bm{\theta}}-\bm{\theta}^*\right]+\lambda_{n_p^*,n_q^*}\hat{\bm{z}}=\underbrace{-\nabla \mathcal{L}^\dagger(\bm{\theta}^*)}_{\bm{w}^\dagger}+\underbrace{\left[\nabla^2\mathcal{L}^\dagger(\bm{\theta}^*)-\overline{\nabla^2\mathcal{L}^\dagger}\right]\left[\hat{\bm{\theta}}-\bm{\theta}^*\right]}_{\bm{g}^\dagger},
\end{equation}
where $\overline{\nabla^2\mathcal{L}^\dagger}$ is a matrix whose $t$-th row is $\overline{\nabla^2\mathcal{L}^\dagger}_{t}=\nabla_{t}\nabla\mathcal{L}^\dagger\left(\bar{\bm{\theta}}^{t}\right)^T$ and $\bar{\bm{\theta}}^{t}$ is between $\bm{\theta}^*$ and $\hat{\bm{\theta}}$ in a coordinate fashion.
We can then rewrite \eqref{eq-16} in blockwise fashion:
\begin{align}
    \label{eq-17a}
    \hat{\mathcal{I}}^\dagger_{SS}[\hat{\bm{\theta}}_S-\bm{\theta}^*_S] + \lambda_{n_p^*,n_q^*}\hat{\bm{z}}_S &= \bm{w}^\dagger_S + \bm{g}^\dagger_S, \\
    \label{eq-17b}
    \hat{\mathcal{I}}^\dagger_{S^cS}[\hat{\bm{\theta}}_S-\bm{\theta}^*_S] + \lambda_{n_p^*,n_q^*}\hat{\bm{z}}_{S^c}&=\bm{w}_{S^c}^\dagger+\bm{g}_{S^c}^\dagger.
\end{align}
Because $n_{p,q}^*\gtrsim k^3\log d$ in the assumption of Theorem \ref{theorem-sparse consistency} is stronger than $n_{p,q}^*\gtrsim k^2 \log d$, $\hat{\mathcal{I}}_{SS}^{\dagger}$ is invertible with probability at least $1-3\delta$ from Proposition \ref{proposition robustness of minimum eignvalue}.
By substituting  \eqref{eq-17a} into \eqref{eq-17b}, we have
\begin{equation*}
    \hat{\mathcal{I}}^\dagger_{S^cS}\hat{\mathcal{I}}^{\dagger^{-1}}_{SS}[\bm{w}^\dagger_S+\bm{g}^\dagger_S-\lambda_{n_p^*,n_q^*}\hat{\bm{z}}_S] + \lambda_{n_p^*,n_q^*}\hat{\bm{z}}_{S^c}=\bm{w}_{S^c}^\dagger+\bm{g}_{S^c}^\dagger.
\end{equation*}
According to the triangle inequality,
\begin{align*}
    \lambda_{n_p^*,n_q^*}\|\hat{\bm{z}}_{S^c}\|_\infty \le & \|\bm{w}_{S^c}^\dagger\|_\infty+\|\bm{g}_{S^c}^\dagger\|_\infty \\
    &+\|\hat{\mathcal{I}}^\dagger_{S^cS}\hat{\mathcal{I}}^{\dagger^{-1}}_{SS}\|_{\infty} \left(\|\bm{w}_{S}^\dagger\|_\infty + \|\bm{g}_{S}^\dagger\|_\infty+\lambda_{n_p^*,n_q^*}\right).
\end{align*}
Because we assume $n_{p,q}^*\gtrsim k^3\log d$, from Proposition \ref{proposition incoherence of I dagger}, we obtain
\begin{equation}
\label{eq z bound temporal}
    \|\hat{\bm{z}}_{S^c}\|_\infty \le \frac{2-\alpha/2}{\lambda_{n_p^*,n_q^*}}\left(\|\bm{w}^\dagger\|_\infty + \|\bm{g}^\dagger\|_\infty\right) + \left(1-\frac{\alpha}{2}\right)
\end{equation}
with probability at least  $1-6\delta$.

Now we need the boundedness of $\|\bm{w}^\dagger\|_\infty$ and $\|\bm{g}^\dagger\|_\infty$ to show $\|\hat{\bm{z}}_{S^c}\|_\infty<1$.
The following lemmas show these boundedness.

\begin{lemma}
\label{lemma-5}
    If Assumptions \ref{assumption weight for normal}, \ref{assumption-9}, \ref{assumption-8}, and \ref{assumption weight for outlier} hold and $n_{p,q}^* \ge N_{\delta}$ holds,
    \begin{equation*}
        \|\bm{w}^\dagger\|_\infty \le \frac{\alpha\lambda_{n_p^*,n_q^*}}{8(2-\alpha/2)}
    \end{equation*}
    holds with probability at least $1-3\delta$, where
    \begin{align*}
        \lambda_{n_p^*,n_q^*}
        =L(1-\varepsilon_p)\sqrt{\frac{\log(6d/\delta)}{n_{p,q}^*}}+M(1-\varepsilon_p)\sqrt{\frac{\log(2/\delta)}{n_p^*}}+N\varepsilon\nu,
    \end{align*}
    and $L$, $M$, and $N$ are some positive constants.
\end{lemma}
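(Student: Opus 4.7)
The plan is to reduce the bound on $\|\bm{w}^\dagger\|_\infty = \|\nabla\mathcal{L}^\dagger(\bm{\theta}^*)\|_\infty$ to a bound on $\|\nabla\mathcal{L}^*(\bm{\theta}^*)\|_\infty$ via Theorem \ref{theorem of robust objective function}, and then show that $\nabla\mathcal{L}^*(\bm{\theta}^*)$ is a small, mean-zero empirical fluctuation. First I would invoke Theorem \ref{theorem of robust objective function} to write, with probability at least $1-2\delta$,
\begin{equation*}
\|\bm{w}^\dagger\|_\infty \;\le\; (1-\varepsilon_p)\,\|\nabla\mathcal{L}^*(\bm{\theta}^*)\|_\infty \;+\; O(\varepsilon\nu),
\end{equation*}
so the remaining task is to control $\|\nabla\mathcal{L}^*(\bm{\theta}^*)\|_\infty$ at a rate matching the first two summands of $\lambda_{n_p^*,n_q^*}$.

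Next, I would verify that the population gradient vanishes. Differentiating the second line of \eqref{eq objective function} and writing
\begin{equation*}
A = \hat{\mathbb{E}}_{p^*}[h(X)w(X)],\;\; B = \hat{\mathbb{E}}_{p^*}[w(X)],\;\; C = \hat{\mathbb{E}}_{q^*}[h(X)e^{\bm{\theta}^{*T}h(X)}w(X)],\;\; D = \hat{\mathbb{E}}_{q^*}[e^{\bm{\theta}^{*T}h(X)}w(X)],
\end{equation*}
with population counterparts $A^*,B^*,C^*,D^*$, we have $\nabla\mathcal{L}^*(\bm{\theta}^*) = -A + BC/D$. Using $p^*(\bm{x}) = C^*_{\bm{\theta}^*}\exp(\bm{\theta}^{*T}h(\bm{x}))q^*(\bm{x})$ together with the definition $C^*_{\bm{\theta}^*} = B^*/D^*$ gives $A^* = B^*C^*/D^*$, so the centered form is
\begin{equation*}
\nabla\mathcal{L}^*(\bm{\theta}^*) \;=\; -(A-A^*) \;+\; \frac{B(C-C^*)}{D} \;+\; \frac{(B-B^*)C^*}{D^*} \;-\; \frac{BC^*(D-D^*)}{DD^*}.
\end{equation*}

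Then I would apply Hoeffding's inequality coordinate-wise, using the uniform bounds supplied by Assumptions \ref{assumption weight for normal}, \ref{assumption-9}, and \ref{assumption-8}: each of $h(\bm{x})w(\bm{x})$, $h(\bm{x})e^{\bm{\theta}^{*T}h(\bm{x})}w(\bm{x})$, $w(\bm{x})$, and $e^{\bm{\theta}^{*T}h(\bm{x})}w(\bm{x})$ is bounded, so a union bound over $d$ coordinates yields, with probability at least $1-\delta$,
\begin{align*}
\|A-A^*\|_\infty,\;\|C-C^*\|_\infty \;&\lesssim\; D_{\max}\sqrt{\tfrac{\log(6d/\delta)}{n_{p,q}^*}},\\
|B-B^*| \;&\lesssim\; W_{\max}\sqrt{\tfrac{\log(2/\delta)}{n_p^*}},\\
|D-D^*| \;&\lesssim\; E_{\max}\sqrt{\tfrac{\log(2/\delta)}{n_{p,q}^*}}.
\end{align*}
Combined with the lower bounds $B^*\ge W_{\min}$, $D^*\ge E_{\min}$ from the same assumptions (which ensure in particular $D \ge E_{\min}/2$ with high probability once $n_q^* \ge N_\delta$ is sufficiently large so that $|D-D^*|\le E_{\min}/2$), each summand in the displayed decomposition is bounded by a constant multiple of $\sqrt{\log(6d/\delta)/n_{p,q}^*}$ or $\sqrt{\log(2/\delta)/n_p^*}$, giving
\begin{equation*}
\|\nabla\mathcal{L}^*(\bm{\theta}^*)\|_\infty \;\lesssim\; \sqrt{\tfrac{\log(6d/\delta)}{n_{p,q}^*}} + \sqrt{\tfrac{\log(2/\delta)}{n_p^*}}.
\end{equation*}

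Finally, multiplying by $(1-\varepsilon_p)$ and adding the $O(\varepsilon\nu)$ term from Theorem \ref{theorem of robust objective function}, I would choose $L,M,N$ large enough that the right-hand side is at most $\alpha\lambda_{n_p^*,n_q^*}/(8(2-\alpha/2))$, and apportion the probability budget $(1-2\delta)+(1-\delta) \ge 1-3\delta$ via a union bound. The main obstacle is the ratio term $BC/D$: ensuring that the denominator stays bounded away from zero on the high-probability event and that the resulting Lipschitz-style expansion propagates the correct $n_{p,q}^*$ and $n_p^*$ rates (rather than a cruder $\min$-denominator rate) requires tracking which empirical average sits in which factor, which is the delicate bookkeeping step.
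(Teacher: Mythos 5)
Your proposal is correct and follows essentially the same route as the paper: reduce to the clean gradient via Theorem \ref{theorem of robust objective function}, center each empirical average at its population value using the identity $\mathbb{E}_{p^*}[h w] = C^*_{\bm{\theta}^*}\mathbb{E}_{q^*}[e^{\bm{\theta}^{*T}h}w\,h]$, apply Hoeffding with a union bound over the $d$ coordinates, and keep the denominator away from zero using $n_{p,q}^*\ge N_\delta$. The only cosmetic difference is that you telescope $BC/D-B^*C^*/D^*$ into four terms, whereas the paper isolates the single multiplicative factor $C^*_{\bm{\theta}^*}/\hat C^*_{\bm{\theta}^*}-1$ and bounds it by $\tau$ plus one more Hoeffding term; both yield the same split of rates, with the $B-B^*$ fluctuation producing the $M\sqrt{\log(2/\delta)/n_p^*}$ term and everything else absorbed into the $L\sqrt{\log(6d/\delta)/n_{p,q}^*}$ and $N\varepsilon\nu$ terms.
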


\begin{lemma}
\label{lemma-3}
    Suppose that Assumptions \ref{assumption weight for normal}, \ref{assumption-9}, \ref{assumption-8}, \ref{assumption weight for outlier}, \ref{assumption-1}, and \ref{assumption smoothness} hold and $n_{p,q}^* \gtrsim k^2\log d$ holds.
    If $k\lambda_{n_p^*,n_q^*}\le \frac{(1-\varepsilon_p)\lambda_{\min}^2}{960\lambda_{3,\max}}$ and $\|\bm{w}^\dagger_S\|_\infty \le \frac{\lambda_{n_p^*,n_q^*}}{4}$ hold, then
    \begin{align*}
        \left\|\hat{\bm{\theta}}-\bm{\theta}^*\right\|_2 \le \frac{40}{(1-\varepsilon_p)\lambda_{\min}}\sqrt{k}\lambda_{n_p^*,n_q^*}
    \end{align*}
    holds with probability at least $1-3\delta$.
\end{lemma}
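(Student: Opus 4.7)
The plan is to use the standard restricted-problem convex-optimization argument. Since the primal--dual witness construction defines $\hat{\bm\theta} = [\hat{\bm\theta}_S^T,\bm{0}^T]^T$ with $\hat{\bm\theta}_S$ minimizing the $L_1$-regularized loss on the active coordinates, the error vector $\hat{\bm{u}}_S := \hat{\bm\theta}_S - \bm\theta^*_S$ is the minimizer of the convex function
\begin{align*}
F(\bm{u}_S) := \mathcal{L}^\dagger\!\bigl([(\bm\theta^*_S + \bm{u}_S)^T,\bm{0}^T]^T\bigr) - \mathcal{L}^\dagger(\bm\theta^*) + \lambda_{n_p^*,n_q^*}\bigl(\|\bm\theta^*_S + \bm{u}_S\|_1 - \|\bm\theta^*_S\|_1\bigr),
\end{align*}
which satisfies $F(\bm 0)=0$. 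I will show that $F(\bm{u}_S) > 0$ for all $\bm{u}_S$ on the sphere $\{\|\bm{u}_S\|_2 = B\}$ with $B := 40\sqrt{k}\,\lambda_{n_p^*,n_q^*}/[(1-\varepsilon_p)\lambda_{\min}]$; convexity of $F$ together with $F(\bm 0)=0$ then forces $\hat{\bm{u}}_S$ to lie strictly inside this ball, which is exactly the desired $L_2$ bound.

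To analyze $F$ on the sphere, I would apply Taylor's theorem to the smooth part of $F$ only, keeping the $L_1$ term intact. For some $t^*\in[0,1]$ depending on $\bm{u}_S$ this yields
\begin{align*}
F(\bm{u}_S) = -\langle \bm{w}^\dagger_S,\bm{u}_S\rangle + \tfrac{1}{2}\,\bm{u}_S^T\,\nabla^2_{SS}\mathcal{L}^\dagger(\bm\theta^* + t^* \tilde{\bm{u}}_S)\,\bm{u}_S + \lambda_{n_p^*,n_q^*}\bigl(\|\bm\theta^*_S+\bm{u}_S\|_1 - \|\bm\theta^*_S\|_1\bigr),
\end{align*}
where $\tilde{\bm{u}}_S$ zero-pads $\bm{u}_S$ on $S^c$ and I used $\nabla_S\mathcal{L}^\dagger(\bm\theta^*)=-\bm{w}^\dagger_S$. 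Using $\|\bm{w}^\dagger_S\|_\infty \le \lambda_{n_p^*,n_q^*}/4$ and $\|\bm{u}_S\|_1\le\sqrt{k}\|\bm{u}_S\|_2$, the linear term has modulus at most $(\lambda_{n_p^*,n_q^*}/4)\sqrt{k}\,\|\bm{u}_S\|_2$. Because $\bm\theta^*_{S^c}=\bm 0$, the reverse triangle inequality lower-bounds the $L_1$ term by $-\lambda_{n_p^*,n_q^*}\sqrt{k}\,\|\bm{u}_S\|_2$.

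The main obstacle is the quadratic term: the Hessian must be controlled at the intermediate point $\bm\theta^* + t^*\tilde{\bm{u}}_S$, while Proposition \ref{proposition robustness of minimum eignvalue} only delivers $\Lambda_{\min}[\hat{\mathcal{I}}^\dagger_{SS}] \ge (1-\varepsilon_p)\lambda_{\min}/4$ at the anchor $\bm\theta^*$ (with probability at least $1-3\delta$). To transfer this bound I would invoke Assumption \ref{assumption smoothness} together with its sample-level contaminated consequence, derived by combining a high-probability concentration of $\nabla_t\nabla^2_{SS}\mathcal{L}^*$ around its population version with Theorem \ref{theorem of robust objective function} applied to the third-derivative sub-matrix. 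A coordinate-wise mean-value expansion then yields an inequality of the form
\begin{align*}
\bigl|\bm{u}_S^T\bigl[\nabla^2_{SS}\mathcal{L}^\dagger(\bm\theta^* + t^*\tilde{\bm{u}}_S) - \hat{\mathcal{I}}^\dagger_{SS}\bigr]\bm{u}_S\bigr| \;\lesssim\; \lambda_{3,\max}\,\|\tilde{\bm{u}}_S\|_1\,\|\bm{u}_S\|_2^2 \;\le\; \lambda_{3,\max}\,\sqrt{k}\,B\,\|\bm{u}_S\|_2^2.
\end{align*}
The hypothesis $k\lambda_{n_p^*,n_q^*} \le (1-\varepsilon_p)\lambda_{\min}^2/(960\lambda_{3,\max})$ is calibrated exactly so that, after substituting $B$, this perturbation is strictly smaller than $(1-\varepsilon_p)\lambda_{\min}/4$, leaving a quadratic-form lower bound of order $(1-\varepsilon_p)\lambda_{\min}\,\|\bm{u}_S\|_2^2$.

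Assembling the three contributions on $\|\bm{u}_S\|_2 = B$ produces $F(\bm{u}_S) \ge B\bigl[c_1(1-\varepsilon_p)\lambda_{\min}\,B - c_2\sqrt{k}\,\lambda_{n_p^*,n_q^*}\bigr]$ for explicit constants $c_1,c_2$, so the choice $B = 40\sqrt{k}\lambda_{n_p^*,n_q^*}/[(1-\varepsilon_p)\lambda_{\min}]$ renders the bracket strictly positive and closes the argument. The $1-3\delta$ probability in the statement is inherited from the single minimum-eigenvalue event of Proposition \ref{proposition robustness of minimum eignvalue}; the sample-level smoothness bound is the high-probability consequence enabled by the assumption $n_{p,q}^*\gtrsim k^2\log d$, which can be absorbed into the same event.
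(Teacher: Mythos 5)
Your proposal is correct and follows essentially the same route as the paper: the restricted convex program on the active coordinates, the argument that positivity of the objective difference on a sphere of radius $B$ traps the minimizer inside the ball, the Taylor expansion with the linear term controlled by $\|\bm{w}^\dagger_S\|_\infty$, the $L_1$ term by the reverse triangle inequality, and the Hessian at the intermediate point controlled via Proposition \ref{proposition robustness of minimum eignvalue} plus a mean-value/third-derivative perturbation bound (the paper's Proposition \ref{proposition contaminated function boundedness}), with the hypothesis on $k\lambda_{n_p^*,n_q^*}$ calibrated to absorb that perturbation. The only cosmetic difference is in how the failure events are tallied to reach $1-3\delta$, which matches the paper's own convention of identifying the constituent $\delta$'s.
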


\begin{lemma}
\label{lemma-4}
    Suppose that Assumptions \ref{assumption weight for normal}, \ref{assumption-9}, \ref{assumption-8}, \ref{assumption weight for outlier}, \ref{assumption-1}, and \ref{assumption smoothness} hold and $n_{p,q}^* \gtrsim k^2\log d$ holds.
    If $k\lambda_{n_p^*,n_q^*}\le \frac{(1-\varepsilon_p)\lambda_{\min}^2}{960\lambda_{3,\max}}\frac{\alpha}{40(2-\alpha/2)}$ and $\|\bm{w}^\dagger_S\|_\infty \le \frac{\lambda_{n_p^*,n_q^*}}{4}$ hold, then
    \begin{align*}
        \|\bm{g}^\dagger\|_\infty \le \frac{\alpha \lambda_{n_p^*,n_q^*}}{8(2-\alpha/2)}
    \end{align*}
    holds with probability at least $1-3\delta$.
\end{lemma}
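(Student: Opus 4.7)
\textbf{Proof plan for Lemma \ref{lemma-4}.}
The strategy is to rewrite each coordinate $\bm g^\dagger_t$ as a second-order Taylor remainder that is quadratic in $\hat{\bm\theta}-\bm\theta^*$ and controlled by a third derivative of $\mathcal L^\dagger$, then (i) bound that third derivative uniformly in $\bm\theta\in\Theta$ using Theorem \ref{theorem of robust objective function} and the sample-level proposition that Assumption \ref{assumption smoothness} supplies in Appendix \ref{section discussion of robust objective function}, and (ii) convert the quadratic factor $\|\hat{\bm\theta}-\bm\theta^*\|_2^2$ into a single power of $\lambda_{n_p^*,n_q^*}$ via Lemma \ref{lemma-3}.

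The crucial structural observation is that $\hat{\bm\theta}-\bm\theta^*$ (and hence every mean-value point $\bar{\bm\theta}^{t}-\bm\theta^*$) is supported on $S$, because $\bm\theta^*_{S^c}=\bm 0$ and $\hat{\bm\theta}$ is defined by the restricted problem with $\hat{\bm\theta}_{S^c}=\bm 0$. Thus
\begin{align*}
    \bm g^\dagger_t
    =\sum_{j\in S}\bigl[\nabla_j\nabla_t\mathcal L^\dagger(\bm\theta^*) - \nabla_j\nabla_t\mathcal L^\dagger(\bar{\bm\theta}^{t})\bigr](\hat\theta_j - \theta^*_j),
\end{align*}
and Cauchy--Schwarz gives $|\bm g^\dagger_t|\le\|\bm u_t\|_2\,\|\hat{\bm\theta}_S - \bm\theta^*_S\|_2$, where $\bm u_t\in\mathbb R^k$ collects the brackets. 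Applying the fundamental theorem of calculus along the segment from $\bm\theta^*$ to $\bar{\bm\theta}^{t}$, and using the symmetry $\nabla_l\nabla^2_{tj}=\nabla_t\nabla^2_{lj}$, represents $\bm u_t$ as an integral of $\nabla_t\nabla^2_{SS}\mathcal L^\dagger$ applied to $\bar{\bm\theta}^{t}_S - \bm\theta^*_S$, yielding
\begin{align*}
    \|\bm u_t\|_2\le \sup_{\bm\theta\in\Theta}\bigl\|\nabla_t\nabla^2_{SS}\mathcal L^\dagger(\bm\theta)\bigr\|_2\cdot\|\hat{\bm\theta}-\bm\theta^*\|_2,
\end{align*}
by convexity of $\Theta$ and $\|\bar{\bm\theta}^{t}-\bm\theta^*\|_2\le\|\hat{\bm\theta}-\bm\theta^*\|_2$.

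To control the supremum, Theorem \ref{theorem of robust objective function} gives $\nabla_t\nabla^2_{SS}\mathcal L^\dagger(\bm\theta)=(1-\varepsilon_p)\nabla_t\nabla^2_{SS}\mathcal L^*(\bm\theta)+O(\varepsilon\nu)$ uniformly on $\Theta$, while the Appendix \ref{section discussion of robust objective function} proposition lifts the population bound $\Lambda_{\max}[\nabla_t\mathcal I^*_{SS}(\bm\theta)]\le\lambda_{3,\max}$ to a uniform sample-level bound $\sup_{\bm\theta\in\Theta}\|\nabla_t\nabla^2_{SS}\mathcal L^*(\bm\theta)\|_2\lesssim\lambda_{3,\max}$ under $n_{p,q}^*\gtrsim k^2\log d$. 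Since $\alpha\le1$ makes the hypothesis on $k\lambda_{n_p^*,n_q^*}$ imply that of Lemma \ref{lemma-3}, Lemma \ref{lemma-3} applies and yields $\|\hat{\bm\theta}-\bm\theta^*\|_2\le \frac{40\sqrt k\,\lambda_{n_p^*,n_q^*}}{(1-\varepsilon_p)\lambda_{\min}}$; by reusing the same high-probability event, all three ingredients hold jointly with probability $\ge1-3\delta$. Assembling,
\begin{align*}
    |\bm g^\dagger_t|\le \bigl[C_1(1-\varepsilon_p)\lambda_{3,\max}+C_2\varepsilon\nu\bigr]\cdot\frac{1600\,k\,\lambda_{n_p^*,n_q^*}^2}{(1-\varepsilon_p)^2\lambda_{\min}^2}.
\end{align*}
The first summand is $\lesssim\frac{k\lambda_{n_p^*,n_q^*}^2\lambda_{3,\max}}{(1-\varepsilon_p)\lambda_{\min}^2}$, which the hypothesis on $k\lambda_{n_p^*,n_q^*}$ forces to be at most $\frac{\alpha\lambda_{n_p^*,n_q^*}}{16(2-\alpha/2)}$ up to an absorbable constant; the second summand is handled the same way by invoking Assumption \ref{assumption weight for outlier} to render $\varepsilon\nu$ small relative to $(1-\varepsilon_p)\lambda_{3,\max}$. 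Taking the maximum over $t\in\mathcal E$ then gives the claimed bound on $\|\bm g^\dagger\|_\infty$.

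The main obstacle is the uniform operator-norm bound on $\nabla_t\nabla^2_{SS}\mathcal L^\dagger(\bm\theta)$ over all $\bm\theta\in\Theta$: the mean-value points $\bar{\bm\theta}^{t}$ are random and data-dependent, so a pointwise bound at $\bm\theta^*$ does not suffice. This forces the two-stage reduction through Theorem \ref{theorem of robust objective function} (to strip off the contamination) followed by Appendix \ref{section discussion of robust objective function}'s concentration (to pass from the population Fisher $\mathcal I^*_{SS}$ to the sample Hessian $\nabla^2_{SS}\mathcal L^*$ uniformly on $\Theta$ via an $\varepsilon$-net on the compact convex domain), and is precisely what dictates the sample-size requirement $n_{p,q}^*\gtrsim k^2\log d$ in the lemma's hypothesis.
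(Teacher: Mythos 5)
Your proposal is correct and follows essentially the same route as the paper: express $g_t^\dagger$ via the mean-value theorem as a quadratic form in $\hat{\bm\theta}_S-\bm\theta^*_S$ sandwiching $\nabla_t\nabla^2_{SS}\mathcal L^\dagger$ at an intermediate point, bound that third derivative uniformly on $\Theta$ by $3(1-\varepsilon_p)\lambda_{3,\max}$ using Theorem \ref{theorem of robust objective function} together with Proposition \ref{proposition contaminated function boundedness}, and then substitute the error bound from Lemma \ref{lemma-3}, whose hypothesis is implied by yours since $\alpha/(40(2-\alpha/2))<1$. The only cosmetic difference is that you carry the $O(\varepsilon\nu)$ contamination term separately to the end, whereas the paper absorbs it inside Proposition \ref{proposition contaminated function boundedness} before multiplying by $\|\hat{\bm\theta}-\bm\theta^*\|_2^2$; the probability accounting ($1-3\delta$) and the final constant matching $\frac{(1-\varepsilon_p)\lambda_{\min}^2}{4800\lambda_{3,\max}}\frac{\alpha}{8(2-\alpha/2)}$ are the same.
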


The proofs of Lemmas \ref{lemma-5}, \ref{lemma-3}, and \ref{lemma-4} are provided in Appendix \ref{section proof of lemmas in main proof}.
Let us now examine the assumptions of these lemmas.
The conditions on $n_{p,q}^*$ in Lemmas \ref{lemma-5}, \ref{lemma-3}, and \ref{lemma-4} are weaker than the condition of $n_{p,q}^*\gtrsim k^3 \log d$ in Theorem \ref{theorem-sparse consistency}.
The condition on $\|\bm{w}_S^\dagger\|_\infty$ in Lemmas \ref{lemma-3} and \ref{lemma-4} holds with high probability if Lemma \ref{lemma-5} holds, because $\frac{\alpha}{2(2-\alpha/2)} < 1$ holds for $0<\alpha \le 1$.
When $\lambda_{n_p^*,n_q^*}$ is set to in Lemma \ref{lemma-5} and $n_{p,q}^*\gtrsim k^2 \log d$ holds, which is milder than $n_{p,q}^*\gtrsim k^3 \log d$ in Theorem \ref{theorem-sparse consistency}, the conditions on $\lambda_{n_p^*,n_q^*}$ in Lemmas \ref{lemma-3} and \ref{lemma-4} hold.
Note that the condition on $\lambda_{n_p^*,n_q^*}$ in Lemma \ref{lemma-4} is stronger than that in Lemma \ref{lemma-3}, because $\frac{\alpha}{40(2-\alpha/2)} <1$ holds for $0<\alpha\le 1$.

Applying Lemmas \ref{lemma-5} and \ref{lemma-4} to \eqref{eq z bound temporal}, we have
\begin{align*}
    \|\hat{\bm{z}}_{S^c}\|_\infty
    &\le \frac{2-\alpha/2}{\lambda_{n_p^*,n_q^*}}\left(\frac{\alpha}{8(2-\alpha/2)}\lambda_{n_p^*,n_q^*}+\frac{\alpha}{8(2-\alpha/2)}\lambda_{n_p^*,n_q^*}\right)+ \left(1- \frac{\alpha}{2}\right) \\
    &\le 1 -\frac{1}{4}\alpha \\
    &<1
\end{align*}
with probability at least  $1-8\delta$.
By Lemma \ref{lemma-1}, this result implies that any optimal $\hat{\bm{\theta}}$ of \eqref{eq-9} recovers the correct zero pattern.
Furthermore, since $\hat{\mathcal{I}}^\dagger_{SS}$ is strictly positive definite with high probability, as stated in Proposition \ref{proposition robustness of minimum eignvalue}, we can conclude that $\hat{\bm{\theta}}$ is the unique optimal solution based on Lemma \ref{lemma-1}.

\subsection{Non-zero Pattern Recovery}
\label{subsection non-zero pattern recovery}

The correct non-zero pattern recovery is defined as $\hat{\theta}_t\neq 0$ for $t\in S$.
It suffices to show
\begin{align*}
    \min_{t\in S}|\theta^*_{t}| \ge 2 \sup_{t\in S}|\hat{\theta}_{t} - \theta^*_{t}|
\end{align*}
because, for $t' \in S$,
\begin{align*}
    2\sup_{t \in S}|\hat{\theta}_{t}-\theta^*_{t}|
    \ge 2 |\hat{\theta}_{t'} - \theta^*_{t'}|
    \ge 2 \left( |\theta^*_{t'}| - |\hat{\theta}_{t'}| \right)
    \ge 2 \min_{t \in S}|\theta^*_t| - 2|\hat{\theta}_{t'}|
\end{align*}
and then we have
\begin{align*}
    |\hat{\theta}_{t'}| \ge \frac{1}{2} \min_{t\in S}|\theta_{t}^*| > 0.
\end{align*}

Because Lemma \ref{lemma-3} shows
\begin{align*}
    \sup_{t\in S}\left|\hat{\theta}_t-\theta_t^*\right|\le\left\|\hat{\bm{\theta}}-\bm{\theta}^*\right\|_2\le\frac{40}{(1-\varepsilon_p)\lambda_{\min}}\sqrt{k}\lambda_{n_p^*,n_q^*},
\end{align*}
we need
\begin{align*}
    \min_{t\in S}\left|\theta_t^*\right|\ge\frac{80}{(1-\varepsilon_p)\lambda_{\min}}\sqrt{k}\lambda_{n_p^*,n_q^*}
\end{align*}
to ensure the correct non-zero pattern recovery.
From Lemma \ref{lemma-5}, we have
\begin{align*}
    \min_{t\in S}\left|\theta_t^*\right|\ge \frac{80L}{\lambda_{\min}} \sqrt{\frac{k\log (6d/\delta)}{n_{p,q}^*}}+\frac{80M}{\lambda_{\min}} \sqrt{\frac{k\log(2/\delta)}{n_p^*}}+ \frac{80N}{\lambda_{\min}}\frac{\varepsilon\nu\sqrt{k}}{1-\varepsilon_p}.
\end{align*}

\subsection{Estimation Error}
\label{subsection estimation error}

Finally, we obtain the bound of the estimation error.
From Lemmas \ref{lemma-5} and \ref{lemma-3}, we have
\begin{align*}
    \left\|\hat{\bm{\theta}}-\bm{\theta}^*\right\|_2
    &\le \frac{40}{(1-\varepsilon_p)\lambda_{\min}}\sqrt{k}\left(L(1-\varepsilon_p)\sqrt{\frac{\log (6d/\delta)}{n_{p,q}^*}}+M(1-\varepsilon_p)\sqrt{\frac{\log(2/\delta)}{n_p^*}} + N\varepsilon\nu\right) \\
    &=\frac{40L}{\lambda_{\min}}\sqrt{\frac{k\log (6d/\delta)}{n_{p,q}^*}}+\frac{40M}{\lambda_{\min}}\sqrt{\frac{k\log(2/\delta)}{n_p^*}}+\frac{40N}{\lambda_{\min}}\frac{\varepsilon\nu\sqrt{k}}{1-\varepsilon_p}.
\end{align*}

\section{Numerical Experiments}
\label{section experiments}

\subsection{Robustness}
\label{subsection experiment for robustness}

The first experiment investigates the robust estimation of density ratios under contamination.
Theorem \ref{theorem-sparse consistency} suggests that Weighted DRE achieves sparse consistency under heavy contamination.
We validate this theoretical result through a numerical experiment.

We conducted the experiment under the problem setting described in Section \ref{subsection problem setting}.
We considered two Gaussian distributions, $p^*(\bm{x})=N(\bm{0}, \Lambda_p^{-1})$ and $q^*(\bm{x})=N(\bm{0}, \Lambda_q^{-1})$ \cite{songliu_annals_of_stats2017,songliu_neurips2017}.
Since the true density ratio is given by $r^*(\bm{x})=p^*(\bm{x})/q^*(\bm{x})\propto\exp(\bm{x}^T(\Lambda_q-\Lambda_p)\bm{x})$, the true parameter of the density ratio is $\Theta=\Lambda_q-\Lambda_p$.
The precision matrix $\Lambda_p$ of the reference distribution was set to the identity matrix.
The difference between the two precision matrices, $\Theta_{ij}=(\Lambda_q-\Lambda_p)_{ij}$, was set to $0.4$ for $(i,j)\in S$, where the indices were randomly assigned to different row and column numbers \cite{songliu_neurips2017}.
The cardinality of the active set was $k=|S|=4$ \cite{songliu_annals_of_stats2017}.
The true density ratio is unbounded because $r^*(\bm{x})\propto \exp(\sum_{(i,j)\in S} \theta_{ij}x_i x_j)$ can take large values.
The outlier distributions were set to $\delta_p(\bm{x})=\delta_q(\bm{x})=N(100\times\bm{1}_m, I_m)$, where $\bm{1}_m=(1,\ldots,1)\in \mathbb{R}^m$ and $I_m\in \mathbb{R}^{m\times m}$ was the identity matrix \cite{nagumo_2024}.
The contamination ratios in \eqref{eq setting of p and q} were set to $\varepsilon_p = \varepsilon_q = 0$ in the clean setting and $\varepsilon_p = \varepsilon_q = 0.2$ in the contaminated setting \cite{nagumo_2024}.

We compared the success probability \cite{wainwright2009,ravikumar2010} of estimating the true parameter using conventional DRE \cite{songliu_annals_of_stats2017} and Weighted DRE.
The density ratio function was parametrized by $\bm{\theta}^Th(\bm{x})=\sum_{i\le j}^{m}\theta_{ij}x_ix_j$ \cite{songliu_neurips2017,nagumo_2024}.
The true parameter $\bm{\theta}^*\in\mathbb{R}^d$ consists of the elements in the upper triangle of $\Lambda_q-\Lambda_p\in\mathbb{R}^{m\times m}$, where $d=(m^2+m)/2$.
The dimension sizes were set to $m=50,100,200$, corresponding to parameter sizes as $d=1275,5050,20100$, respectively.
The regularization parameter $\lambda_{n_p^*,n_q^*}$ was set as $\lambda_0\sqrt{\log d / n_{p,q}^*}$ \cite{wainwright2009,ravikumar2010,songliu_annals_of_stats2017}, where $\lambda_0=5.0$.
The weight function in Weighted DRE was set as $w(\bm{x})=\exp(-\|\bm{x}\|^{4}_{4}/20m)$ \cite{nagumo_2024}.
The sizes of the reference and target datasets were set to be equal: $n_p=n_q$ \cite{songliu_annals_of_stats2017}.
The success probability for estimating the active set was calculated according to the dataset sizes $n_{p,q}^*$ for different dimension sizes $m$ by repeating the experiments 200 times \cite{ravikumar2010}.

The left column of Figure \ref{fig:robust} illustrates that DRE achieves sparse consistency only in the clean setting.
Although the true density ratio is theoretically unbounded, it appears experimentally bounded in this scenario.
This experimental boundedness occurs because the data are sampled from the standard Gaussian distributions, causing most values in the objective function \eqref{eq objective of conventional dre} to remain small.
Consequently, the theoretically unbounded density ratio behaves as if it were bounded experimentally, and the success probability increases as the dataset size increases.
In contrast, DRE fails in the contaminated setting because the outliers adversely affect the objective function \eqref{eq objective of conventional dre}.

The right column of Figure \ref{fig:robust} demonstrates that Weighted DRE achieves sparse consistency even under contamination, as indicated by Theorem \ref{theorem-sparse consistency}.
Weighted DRE maintains high success probabilities in the clean and contaminated settings when the dataset size is large.
However, when comparing success probabilities at the same dataset size, they decrease in the following order: ``DRE/clean,'' ``Weighted DRE/clean,'' and ``Weighted DRE/contaminated.''
These differences can be attributed to the second and third orders in the estimation error \eqref{eq estimation error}, which are associated with the weight function and the contamination effect, respectively.

\begin{figure*}[t]
\vskip 0.2in
\begin{center}
    \centerline{\includegraphics[width=\columnwidth]{./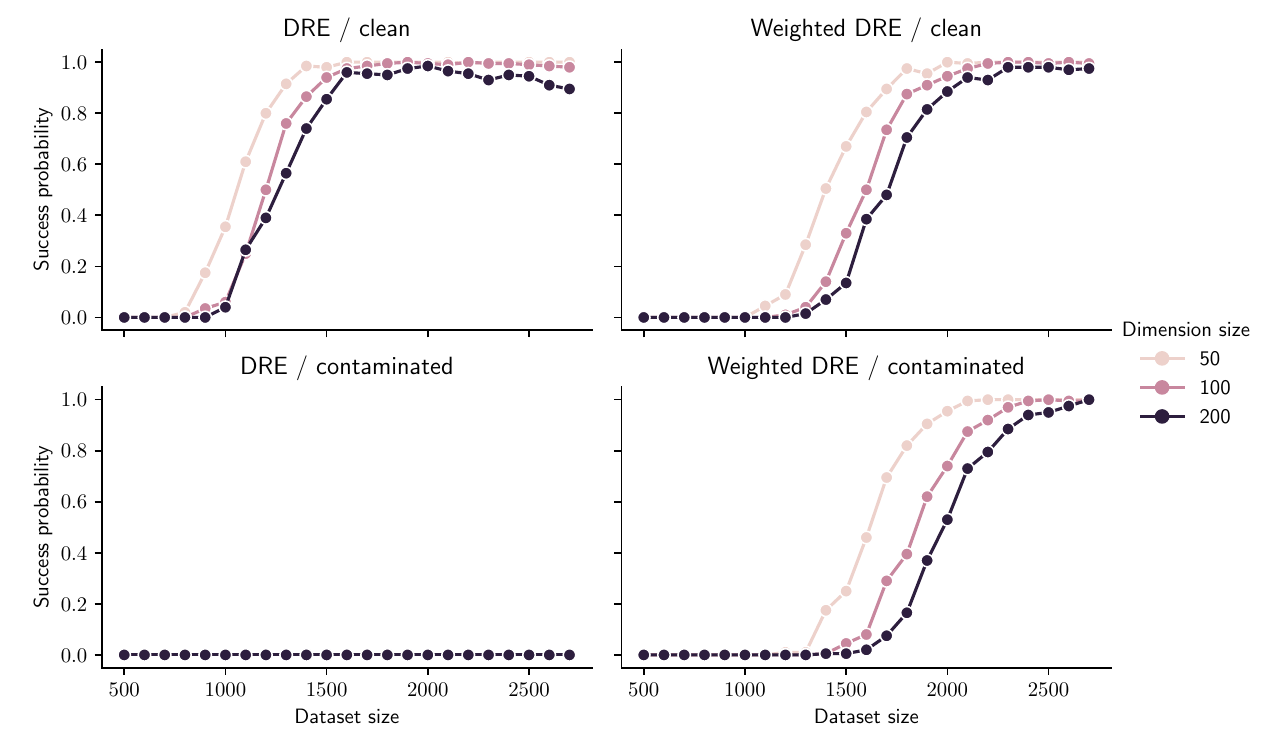}}
    \caption{
        The success probability in the estimation of the active set of DRE and Weighted DRE in the clean and contaminated settings.
        The x-axis shows the dataset sizes and each line corresponds to the different dimension size $m$.
    }
    \label{fig:robust}
\end{center}
\vskip -0.2in
\end{figure*}

\subsection{Unboundedness}
\label{subsection experiment for unboundedness}

The second experiment involves estimating unbounded density ratios in the absence of contamination.
Theorem \ref{theorem-sparse consistency} shows that Weighted DRE achieves sparse consistency for unbounded density ratios, as guaranteed by Assumption \ref{assumption-9}, whereas conventional DRE does not.

We considered two Gaussian distributions, $p^*(\bm{x})=N(\bm{0}, \Lambda_p)$ and $q^*(\bm{x})=N(\bm{0}, \Lambda_q)$, where $\Lambda_p$ and $\Lambda_q$ were diagonal matrices.
The true density ratio is given by
\begin{align*}
    r^*(\bm{x})=\frac{p^*(\bm{x})}{q^*(\bm{x})}=C \exp\left(\bm{x}^T(\Lambda_q - \Lambda_p)\bm{x}\right)
    =C \exp\left(\sum_{(i,i)\in S} \lambda_{ii}x_i^2 \right),
\end{align*}
where $\lambda_{ii}=(\Lambda_q-\Lambda_p)_{ii}$ and $C>0$ is a constant.
The true density ratio $r^*(\bm{x})$ is bounded if $\lambda_{ii}\le0$ for $(i,i)\in S$ and unbounded if $\lambda_{ii}>0$ for $(i,i)\in S$.
We set $\lambda_{ii}=-0.4$ by $((\Lambda_p)_{ii}, (\Lambda_q)_{ii})=(0.8, 0.4)$ for the bounded density ratio, and $\lambda_{ii}=0.4$ by $((\Lambda_p)_{ii}, (\Lambda_q)_{ii})=(0.4, 0.8)$ for the unbounded density ratio.
The elements of the precision matrices in the non-active set were set to $(\Lambda_p)_{ii}= (\Lambda_q)_{ii}=1.0$ for $(i,i)\in S^c$.
Therefore, the dimensions in the active set have smaller precisions, or equivalently larger variance, than those in the non-active set.
A detailed discussion of this precision setting can be found in Appendix \ref{section experimental settings}.
All other experimental settings were the same as in Section \ref{subsection experiment for robustness}, except that the coefficient of the regularization parameter was set to $\lambda_0=4.0$.

The left column of Figure \ref{fig:unbounded} demonstrates that DRE achieves sparse consistency only for bounded density ratios.
DRE exhibits low success probabilities when estimating unbounded density ratios, even with large dataset sizes.
This limitation can arise because the non-asymptotic theory of DRE assumes that the density ratio is bounded \cite{songliu_annals_of_stats2017}.
This estimation error may occur due to data sampled from the tail of the density function with the large variance, adversely affecting the objective function \eqref{eq objective of conventional dre}.

The right column of Figure \ref{fig:unbounded} demonstrates that Weighted DRE achieves sparse consistency for bounded and unbounded density ratios.
The specified density ratio and weight function settings satisfy Assumption \ref{assumption-9}, allowing Weighted DRE to achieve high success probabilities for both scenarios.
The similarity in success probabilities arises because the estimation error in \eqref{eq estimation error} is unaffected by whether the density ratio is bounded or unbounded.
However, due to the small additional estimation error introduced by the weight function in \eqref{eq estimation error}, Weighted DRE may exhibit slightly lower success probabilities than DRE when estimating bounded density ratios.

\begin{figure*}[t]
\vskip 0.2in
\begin{center}
    \centerline{\includegraphics[width=\columnwidth]{./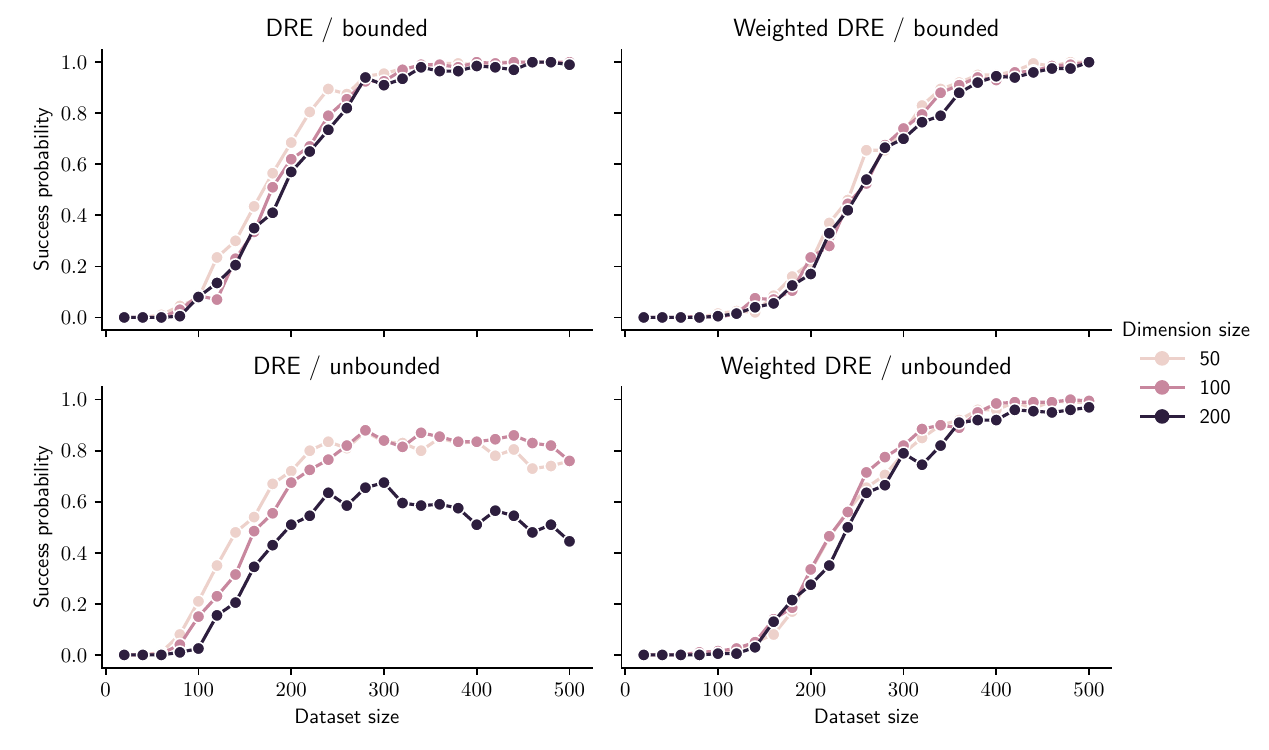}}
    \caption{
        The success probability in the estimation of the active set of the bounded and unbounded density ratio by DRE and Weighted DRE.
        The x-axis shows the dataset sizes and each line corresponds to the different dimension size $m$.
    }
    \label{fig:unbounded}
\end{center}
\vskip -0.2in
\end{figure*}

\section{Conclusion}

This study has established sparse consistency of Weighted DRE, even under heavy contamination of the reference and target datasets.
This result highlights doubly strong robustness from a non-asymptotic perspective, complementing the prior asymptotic result \cite{nagumo_2024}.
Furthermore, in scenarios without outliers, Weighted DRE achieves sparse consistency for unbounded density ratios with a slight additional estimation error compared to conventional DRE \cite{songliu_annals_of_stats2017}.
Notably, the weight function designed to eliminate outliers also ensures the boundedness of the weighted density ratio.

\bibliographystyle{abbrv}
\bibliography{references}

%%%%%%%%%%%%%%%%%%%%%%%%%%%%%%%%%%%%%%%%%%%%%%%%%%%%%%%%%%%%%%%%%%%%%%%%%%%%%%%
%%%%%%%%%%%%%%%%%%%%%%%%%%%%%%%%%%%%%%%%%%%%%%%%%%%%%%%%%%%%%%%%%%%%%%%%%%%%%%%
% APPENDIX
%%%%%%%%%%%%%%%%%%%%%%%%%%%%%%%%%%%%%%%%%%%%%%%%%%%%%%%%%%%%%%%%%%%%%%%%%%%%%%%
%%%%%%%%%%%%%%%%%%%%%%%%%%%%%%%%%%%%%%%%%%%%%%%%%%%%%%%%%%%%%%%%%%%%%%%%%%%%%%%
\newpage
\appendix

\section{Notations}
\label{section notations}

\subsection{Normalizing Term}

The population normalizing term ${C}^*_{\bm{\theta}}$ in the uncontaminated setting is defined as
\begin{align}
\label{eq constant of weighted dre}
    C^*_{\bm{\theta}}=\frac{\mathbb{E}_{p^*}\left[w(X)\right]}{\mathbb{E}_{q^*}\left[\exp(\bm{\theta}^T h(X))w(X)\right]}.
\end{align}
The empirical normalizing term $\hat{C}^*_{\bm{\theta}}$ in the uncontaminated setting is defined as
\begin{align}
\label{eq empirical c}
    \hat{C}^*_{\bm{\theta}}=\frac{\hat{\mathbb{E}}_{p^*}\left[w(X)\right]}{\hat{\mathbb{E}}_{q^*}\left[\exp\left(\bm{\theta}^Th(X)\right)w(X)\right]}.
\end{align}
In the contaminated setting, the empirical normalizing term $\hat{C}^\dagger_{\bm{\theta}}$ is defined as
\begin{align}
\label{eq hat c dagger}
    \hat{C}^\dagger_{\bm{\theta}}&=\frac{\hat{\mathbb{E}}_{p^\dagger} [w(X)]}{\hat{\mathbb{E}}_{q^\dagger} [\exp(\bm{\theta}^T h(X))w(X)]}.
\end{align}

\subsection{Derivatives of Objective Function}

We should consider the gradient of $\mathcal{L}^\dagger(\bm{\theta})$:
\begin{align}
\label{eq 1st derivative of objective in contaminated}
    \nabla\mathcal{L}^\dagger(\bm{\theta})
    =-\hat{\mathbb{E}}_{p^\dagger} \left[h(X)w(X)\right]
    +\hat{\mathbb{E}}_{q^\dagger}\left[r\left(X;\bm{\theta},\hat{C}^\dagger_{\bm{\theta}}\right)w(X)h(X)\right].
\end{align}
The second derivative in the contaminated setting, $\nabla^2\mathcal{L}^\dagger(\bm{\theta})\in \mathbb{R}^{p \times p}$, can be written as
\begin{equation}
\label{eq 2nd derivative of objective in comtaminated}
\begin{split}
    \nabla^2\mathcal{L}^\dagger(\bm{\theta})
    &=\hat{\mathbb{E}}_{q^\dagger}\left[r\left(X;\bm{\theta},\hat{C}^\dagger_{\bm{\theta}}\right)w(X)h(X)h(X)^T\right] \\
    &\quad-\frac{1}{\hat{\kappa}^\dagger}\hat{\mathbb{E}}_{q^\dagger}\left[r\left(X;\bm{\theta},\hat{C}^\dagger_{\bm{\theta}}\right)w(X)h(X)\right]\hat{\mathbb{E}}_{q^\dagger}\left[r\left(X;\bm{\theta},\hat{C}^\dagger_{\bm{\theta}}\right)w(X)h(X)\right]^T,
\end{split}
\end{equation}
where $\hat{\kappa}^\dagger = \hat{\mathbb{E}}_{p^\dagger}\left[w(\bm{X})\right]\in \mathbb{R}_+$.
Moreover, we should consider the element-wise third derivative of the objective function:
\begin{equation}
\label{eq 3rd derivative of objective in contaminated}
\begin{split}
    &\nabla_{t}\nabla^2\mathcal{L}^\dagger(\bm{\theta}) \\
    &=\hat{\mathbb{E}}_{q^\dagger}\left[r\left(X;\bm{\theta},\hat{C}^\dagger_{\bm{\theta}}\right)w(X)h_t(X)h(X)h(X)^T\right] \\
    &\quad-\frac{1}{\hat{\kappa}^\dagger}\hat{\mathbb{E}}_{q^\dagger}\left[r\left(X;\bm{\theta},\hat{C}^\dagger_{\bm{\theta}}\right)w(X)h_t(X)\right]\hat{\mathbb{E}}_{q^\dagger}\left[r\left(X;\bm{\theta},\hat{C}^\dagger_{\bm{\theta}}\right)w(X)h(X)h(X)^T\right] \\
    &\quad-\frac{1}{\hat{\kappa}^\dagger}\hat{\mathbb{E}}_{q^\dagger}\left[r\left(X;\bm{\theta},\hat{C}^\dagger_{\bm{\theta}}\right)w(X)h_t(X)h(X)\right]\hat{\mathbb{E}}_{q^\dagger}\left[r\left(X;\bm{\theta},\hat{C}^\dagger_{\bm{\theta}}\right)w(X)h(X)\right]^T \\
    &\quad-\frac{1}{\hat{\kappa}^\dagger}\hat{\mathbb{E}}_{q^\dagger}\left[r\left(X;\bm{\theta},\hat{C}^\dagger_{\bm{\theta}}\right)w(X)h(X)\right]\hat{\mathbb{E}}_{q^\dagger}\left[r\left(X;\bm{\theta},\hat{C}^\dagger_{\bm{\theta}}\right)w(X)h_t(X)h(X)\right]^T \\
    &\quad+\frac{2}{\hat{\kappa}^{\dagger 2}}\hat{\mathbb{E}}_{q^\dagger}\left[r\left(X;\bm{\theta},\hat{C}^\dagger_{\bm{\theta}}\right)w(X)h_t(X)\right]\hat{\mathbb{E}}_{q^\dagger}\left[r\left(X;\bm{\theta},\hat{C}^\dagger_{\bm{\theta}}\right)w(X)h(X)\right]\hat{\mathbb{E}}_{q^\dagger}\left[r\left(X;\bm{\theta},\hat{C}^\dagger_{\bm{\theta}}\right)w(X)h(X)\right]^T
\end{split}
\end{equation}

We can define the first, second, and element-wise third derivative of the uncontaminated objective function $\mathcal{L}^*(\bm{\theta})$ in the same manner.
The first derivative is
\begin{align*}
    \nabla\mathcal{L}^*(\bm{\theta})
    =-\hat{\mathbb{E}}_{p^*} \left[h(X)w(X)\right]
    +\hat{\mathbb{E}}_{q^*}\left[r\left(X;\bm{\theta},\hat{C}^*_{\bm{\theta}}\right)w(X)h(X)\right].
\end{align*}
The second derivative is
\begin{equation}
\label{eq 2nd derivative}
\begin{split}
    \nabla^2\mathcal{L}^*(\bm{\theta})
    &=\hat{\mathbb{E}}_{q^*}\left[r\left(X;\bm{\theta},\hat{C}^*_{\bm{\theta}}\right)w(X)h(X)h(X)^T\right] \\
    &\quad-\frac{1}{\hat{\kappa}^*}\hat{\mathbb{E}}_{q^*}\left[r\left(X;\bm{\theta},\hat{C}^*_{\bm{\theta}}\right)w(X)h(X)\right]\hat{\mathbb{E}}_{q^*}\left[r\left(X;\bm{\theta},\hat{C}^*_{\bm{\theta}}\right)w(X)h(X)\right]^T.
\end{split}
\end{equation}
where $\hat{\kappa}^*=\hat{\mathbb{E}}_{p^*}[w(X)]$.
The element-wise third derivative is
\begin{equation}
\label{eq 3rd derivative}
\begin{split}
    &\nabla_{t}\nabla^2\mathcal{L}^*(\bm{\theta}) \\
    &=\hat{\mathbb{E}}_{q^*}\left[r\left(X;\bm{\theta},\hat{C}^*_{\bm{\theta}}\right)w(X)h_t(X)h(X)h(X)^T\right] \\
    &\quad-\frac{1}{\hat{\kappa}^*}\hat{\mathbb{E}}_{q^*}\left[r\left(X;\bm{\theta},\hat{C}^*_{\bm{\theta}}\right)w(X)h_t(X)\right]\hat{\mathbb{E}}_{q^*}\left[r\left(X;\bm{\theta},\hat{C}^*_{\bm{\theta}}\right)w(X)h(X)h(X)^T\right] \\
    &\quad-\frac{1}{\hat{\kappa}^*}\hat{\mathbb{E}}_{q^*}\left[r\left(X;\bm{\theta},\hat{C}^*_{\bm{\theta}}\right)w(X)h_t(X)h(X)\right]\hat{\mathbb{E}}_{q^*}\left[r\left(X;\bm{\theta},\hat{C}^*_{\bm{\theta}}\right)w(X)h(X)\right]^T \\
    &\quad-\frac{1}{\hat{\kappa}^*}\hat{\mathbb{E}}_{q^*}\left[r\left(X;\bm{\theta},\hat{C}^*_{\bm{\theta}}\right)w(X)h(X)\right]\hat{\mathbb{E}}_{q^*}\left[r\left(X;\bm{\theta},\hat{C}^*_{\bm{\theta}}\right)w(X)h_t(X)h(X)\right]^T \\
    &\quad+\frac{2}{\hat{\kappa}^{* 2}}\hat{\mathbb{E}}_{q^*}\left[r\left(X;\bm{\theta},\hat{C}^*_{\bm{\theta}}\right)w(X)h_t(X)\right]\hat{\mathbb{E}}_{q^*}\left[r\left(X;\bm{\theta},\hat{C}^*_{\bm{\theta}}\right)w(X)h(X)\right]\hat{\mathbb{E}}_{q^*}\left[r\left(X;\bm{\theta},\hat{C}^*_{\bm{\theta}}\right)w(X)h(X)\right]^T
\end{split}
\end{equation}

\subsection{Sufficiently Small Value}

We prepare propositions about the eigenvalues and the max norm of a matrix with a sufficiently small value.

\begin{proposition}
\label{proposition of eigenvalues of a small matrix}
Let $A = O_{n\times n}(\chi) \in \mathbb{R}^{n\times n}$ with a sufficiently small value $\chi$.
Then,
\begin{align*}
    \Lambda_{\max}[A] = O(n \chi), \quad
    \Lambda_{\min}[A] = O(n \chi)
\end{align*}
\end{proposition}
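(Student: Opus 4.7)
The plan is to interpret $A = O_{n \times n}(\chi)$ in the natural entry-wise sense, namely that there exists a constant $C>0$ (independent of $n$ and $\chi$) such that $|A_{ij}| \le C\chi$ for all $i,j$. Under this reading, both claims reduce to a one-line norm bound combined with a standard fact about eigenvalues.

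First, I would compute the induced max norm of $A$. Since each row of $A$ consists of $n$ entries each of magnitude at most $C\chi$, we immediately obtain
\begin{align*}
    \|A\|_\infty = \max_{i=1,\ldots,n} \sum_{j=1}^{n} |A_{ij}| \le n\, C\,\chi = O(n\chi).
\end{align*}
Next, I would invoke the well-known fact that for any induced matrix norm $\|\cdot\|$, every eigenvalue $\lambda$ of a square matrix $A$ satisfies $|\lambda| \le \|A\|$ (the spectral radius is dominated by any induced norm). Applying this with the max norm gives
\begin{align*}
    |\Lambda_{\max}[A]| \le \|A\|_\infty = O(n\chi), \qquad |\Lambda_{\min}[A]| \le \|A\|_\infty = O(n\chi),
\end{align*}
which is exactly the claimed bound on both the largest and smallest eigenvalues.

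There is essentially no technical obstacle here; the only subtlety is to fix the convention for what $O_{n\times n}(\chi)$ means and to note that the proposition asserts smallness of both $\Lambda_{\max}$ and $\Lambda_{\min}$ (rather than a lower bound on $\Lambda_{\min}$), so the same one-sided magnitude bound via $\|A\|_\infty$ handles them symmetrically. If one wished to use the spectral norm directly for a symmetric $A$, the alternative route $\|A\|_2 \le \sqrt{\|A\|_1\|A\|_\infty} = O(n\chi)$ yields the same conclusion, but the $\ell_\infty$-norm argument is the cleanest and applies without any symmetry assumption on $A$.
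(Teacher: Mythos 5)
Your proof is correct, but it takes a different route from the paper's. The paper disposes of this proposition in one line via the chain $\Lambda_{\min}[A] \le \Lambda_{\max}[A] \le \mathrm{tr}(A) = O(n\chi)$, i.e.\ it bounds the top eigenvalue by the trace, which is a sum of $n$ entries each of size $O(\chi)$. You instead bound the spectral radius by the induced $\ell_\infty$ operator norm, $|\lambda| \le \|A\|_\infty \le nC\chi$, which applies to every eigenvalue simultaneously. Your route is actually the more careful one: the inequality $\Lambda_{\max}[A] \le \mathrm{tr}(A)$ holds only when the remaining eigenvalues are nonnegative (for $A=\mathrm{diag}(1,-2)$ one has $\mathrm{tr}(A)=-1<1=\Lambda_{\max}$), and the paper's chain in any case supplies only upper bounds on $\Lambda_{\min}$ and $\Lambda_{\max}$, not the two-sided magnitude control that the assertion ``$=O(n\chi)$'' requires. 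Your argument delivers $|\Lambda_{\max}[A]|$ and $|\Lambda_{\min}[A]|$ both at most $O(n\chi)$ directly, needs no positivity or symmetry assumption, and costs nothing extra. The only point worth stating explicitly, which you do, is the entrywise reading of $O_{n\times n}(\chi)$; that matches how the notation is used throughout the paper (e.g.\ in Proposition \ref{proposition of max norm of a small matrix} and the proofs of Propositions \ref{proposition robustness of minimum eignvalue} and \ref{proposition incoherence of I dagger}).
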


\begin{proof}[Proof of Proposition \ref{proposition of eigenvalues of a small matrix}]
\begin{align*}
    \Lambda_{\min}[A] \le \Lambda_{\max}[A] \le {\rm tr}(A) = O(n\chi).
\end{align*}
\end{proof}

\begin{proposition}
\label{proposition of max norm of a small matrix}
Let $A = O_{m\times n}(\chi) \in \mathbb{R}^{m\times n}$ with a sufficiently small value $\chi$.
Then,
\begin{align*}
    \|A\|_\infty = O(n\chi).
\end{align*}
\end{proposition}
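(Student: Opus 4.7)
The plan is to apply the definition of the induced matrix $\infty$-norm given in the notation section and push the entrywise size estimate through a row sum of length $n$. This is essentially a one-line computation, so I would expect no substantive obstacle.

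First I would unpack $A = O_{m\times n}(\chi)$ to mean that there exists a constant $C>0$, independent of $m$, $n$, and $\chi$, such that $|A_{ij}| \le C\chi$ for every pair $(i,j)$. Next I would recall from the notation section that $\|A\|_\infty = \max_{i=1,\ldots,m}\sum_{j=1}^n |A_{ij}|$. Inserting the entrywise bound inside each row sum gives $\sum_{j=1}^n |A_{ij}| \le nC\chi$ for every $i$, and taking the maximum over $i$ preserves this bound, yielding $\|A\|_\infty \le nC\chi = O(n\chi)$.

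The only point requiring (minor) care is that the constant hidden in the entrywise $O(\chi)$ bound is uniform in $(i,j)$. This is the same convention implicitly used in the preceding Proposition \ref{proposition of eigenvalues of a small matrix}, where the estimate ${\rm tr}(A) = O(n\chi)$ also relies on a uniform entrywise bound via the $n$ diagonal terms. Under this convention the conclusion follows immediately, and there is no genuine difficulty in the argument.
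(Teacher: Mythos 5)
Your proof is correct and follows the same route as the paper's own (one-line) argument: apply the definition $\|A\|_\infty = \max_i \sum_{j=1}^n |A_{ij}|$ and bound each row sum of $n$ entries, each of size $O(\chi)$, by $O(n\chi)$. Your added remark about the uniformity of the entrywise constant is a reasonable clarification of the convention the paper uses implicitly, but it does not change the substance.
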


\begin{proof}[Proof of Proposition \ref{proposition of max norm of a small matrix}]
\begin{align*}
    \|A\|_\infty &= \max_{i=1,...,m}\sum_{j=1}^{n}|A_{ij}|=O(n\chi).
\end{align*}
\end{proof}

\section{Propositions from Assumption \ref{assumption weight for normal}}
\label{section discussion of assumption of weight function}

\subsection{Propositions}

We list some propositions that can be derived from Assumption \ref{assumption weight for normal}.
The proofs are provided in Appendix \ref{proofs of propositions from assumption weight}.

\begin{proposition}
\label{proposition of integral of weight}
If Assumption \ref{assumption weight for normal} holds, then
\begin{align*}
    0 < W'_{\max} = {\mathbb{E}}_{p^*}\left[w(X)\right] < \infty.
\end{align*}
\end{proposition}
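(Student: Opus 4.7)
The proposition follows almost immediately from the two parts of Assumption \ref{assumption weight for normal}, so the plan is simply to combine them. First, I would invoke the lower-bound part, $\mathbb{E}_{p^*}[w(X)] \ge W_{\min} > 0$, to establish strict positivity of the expectation. This handles the left inequality in one step.

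Second, for the finiteness, I would use the pointwise upper bound $w(\bm{x}) \le W_{\max} < \infty$ together with the fact that $p^*$ is a probability density. Integrating $w$ against $p^*$ and pulling $W_{\max}$ outside gives
\begin{align*}
    \mathbb{E}_{p^*}[w(X)] \;=\; \int w(\bm{x}) p^*(\bm{x})\, d\bm{x} \;\le\; W_{\max} \int p^*(\bm{x})\, d\bm{x} \;=\; W_{\max} \;<\; \infty.
\end{align*}
Setting $W'_{\max} := \mathbb{E}_{p^*}[w(X)]$, which is sandwiched in $[W_{\min}, W_{\max}]$, completes the claim.

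There is no genuine obstacle here: both directions of the bound are essentially an unpacking of Assumption \ref{assumption weight for normal} via monotonicity of the integral. The only minor stylistic choice is whether to state $W'_{\max}$ abstractly as a value in $(0,\infty)$ or to give the explicit sandwich $W_{\min} \le W'_{\max} \le W_{\max}$; I would include the sandwich since it costs nothing and may be useful later in the paper when such constants are tracked.
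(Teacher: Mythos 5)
Your proposal is correct and matches the paper's proof, which likewise bounds ${\mathbb{E}}_{p^*}[w(X)]$ above by $W_{\max}$ via monotonicity of the integral and obtains strict positivity directly from Assumption \ref{assumption weight for normal}. The explicit sandwich $W_{\min} \le W'_{\max} \le W_{\max}$ you add is a harmless, slightly more informative restatement of the same one-line argument.
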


\begin{proposition}
\label{proposition of bounded sum of weight}
If Assumption \ref{assumption weight for normal} holds, then
\begin{align*}
    W'_{\max}-\tau \le \hat{\kappa}^* = \hat{\mathbb{E}}_{p^*}\left[w(X)\right]\le W'_{\max}+\tau
\end{align*}
holds with probability at least $1-\delta_{\tau}$, where
\begin{align*}
    \tau=\sqrt{\frac{W_{\max}^2\log(2/\delta_{\tau})}{2n_p^*}}.
\end{align*}
\end{proposition}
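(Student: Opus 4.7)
The plan is to apply Hoeffding's inequality for bounded i.i.d. random variables directly to the sample mean $\hat{\kappa}^* = \frac{1}{n_p^*}\sum_{n=1}^{n_p^*} w(\bm{x}_n^{(p^*)})$. Assumption \ref{assumption weight for normal} guarantees $0 < w(\bm{x}) \le W_{\max}$, so each summand lies in the bounded interval $[0, W_{\max}]$, and by Proposition \ref{proposition of integral of weight} the common expectation $\mathbb{E}_{p^*}[w(X)] = W'_{\max}$ is finite. These are exactly the hypotheses needed for Hoeffding's inequality.

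First I would state Hoeffding's inequality in the two-sided form: for i.i.d. random variables $Y_1, \ldots, Y_n$ taking values in $[a, b]$,
\begin{align*}
    \Pr\!\left(\left|\tfrac{1}{n}\sum_{i=1}^n Y_i - \mathbb{E}[Y]\right| \ge \tau\right) \le 2\exp\!\left(-\tfrac{2 n \tau^2}{(b-a)^2}\right).
\end{align*}
Next I would apply it with $Y_i = w(\bm{x}_i^{(p^*)})$, $a = 0$, $b = W_{\max}$, and $n = n_p^*$, giving
\begin{align*}
    \Pr\!\left(|\hat{\kappa}^* - W'_{\max}| \ge \tau\right) \le 2\exp\!\left(-\tfrac{2 n_p^* \tau^2}{W_{\max}^2}\right).
\end{align*}
Finally I would set the right-hand side equal to $\delta_\tau$ and solve for $\tau$, which yields exactly $\tau = \sqrt{W_{\max}^2 \log(2/\delta_\tau) / (2 n_p^*)}$. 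Taking complements then gives the two-sided bound $W'_{\max} - \tau \le \hat{\kappa}^* \le W'_{\max} + \tau$ with probability at least $1 - \delta_\tau$.

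There is no serious obstacle here: the result is a textbook concentration inequality whose hypotheses are supplied verbatim by Assumption \ref{assumption weight for normal} (boundedness) and the i.i.d. sampling assumption from \eqref{eq dataset setting}. The only minor care needed is to record that the inliers $\{\bm{x}_n^{(p^*)}\}_{n=1}^{n_p^*}$ are indeed i.i.d.\ draws from $p^*$ (so that Hoeffding applies rather than a more elaborate martingale bound), which follows from the problem setting in Section \ref{subsection problem setting}.
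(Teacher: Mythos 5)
Your proof is correct and matches the paper's own argument: both apply the two-sided Hoeffding inequality to the bounded summands $w(\bm{x}_n^{(p^*)}) \in [0, W_{\max}]$ with mean $W'_{\max}$ and then solve $2\exp(-2n_p^*\tau^2/W_{\max}^2)=\delta_\tau$ for $\tau$. Your added remark about recording the i.i.d.\ nature of the inliers is a reasonable point of care but does not change the substance.
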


\subsection{Proofs}
\label{proofs of propositions from assumption weight}

\begin{proof}[Proof of Proposition \ref{proposition of integral of weight}]
From Assumption \ref{assumption weight for normal}, we have
\begin{align*}
    0<{\mathbb{E}}_{p^*}\left[w(X)\right] \le {\mathbb{E}}_{p^*}\left[W_{\max}\right] = W_{\max} < \infty.
\end{align*}
\end{proof}

\begin{proof}[Proof of Proposition \ref{proposition of bounded sum of weight}]
From Assumption \ref{assumption weight for normal} and Proposition \ref{proposition of integral of weight}, $w(X)-W'_{\max}$ is a bounded zero-mean random variable.
Hoeffding's inequality provides the exponentially decaying tail behavior of the bounded zero-mean random variable \cite{Wainwright_2019}:
\begin{align*}
    P\left(\left|\hat{\mathbb{E}}_{p^*}[w(X)]-W'_{\max}\right|\ge \tau\right) \le 2\exp\left(-\frac{2n_p^*\tau^2}{W_{\max}^2}\right).
\end{align*}
By defining the right hand side as $\delta_{\tau}$, we have the formulation of $\tau$.
\end{proof}

\section{Propositions from Assumption \ref{assumption-9}}
\label{section discussion of assumption-9}

\subsection{Propositions}

We list some propositions that can be derived from Assumption \ref{assumption-9}.
The proofs are provided in Appendix \ref{subsection proof of propositions from assumption-9}.

\begin{proposition}
\label{proposition of bounded dre function}
    If Assumptions \ref{assumption weight for normal} and \ref{assumption-9} hold, for any $\theta \in \Theta$,
    \begin{align*}
        \frac{W_{\min}}{E_{\max}} \le C^*_{\bm{\theta}} \le \frac{W_{\max}}{E_{\min}}, \quad
        0<r(\bm{x}; \bm{\theta},C^*_{\bm{\theta}})w(\bm{x}) \le E'_{\max}, \quad
        \mathbb{E}_{q^*}\left[r(X; \bm{\theta},C^*_{\bm{\theta}})w(X)\right] = W'_{\max},
    \end{align*}
    where $E'_{\max}=W_{\max}E_{\max}/E_{\min}$.
\end{proposition}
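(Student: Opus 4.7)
The proof is mostly a sequence of direct substitutions into the definitions, so my plan is to handle the three claims in turn using only the bounds in Assumptions \ref{assumption weight for normal} and \ref{assumption-9}, together with the definition of $C^*_{\bm{\theta}}$ in \eqref{eq constant of weighted dre}.

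First I would bound $C^*_{\bm{\theta}}$ from above and below. For the upper bound, Assumption \ref{assumption weight for normal} gives $\mathbb{E}_{p^*}[w(X)]\le W_{\max}$ by pointwise monotonicity of the expectation, while Assumption \ref{assumption-9} gives $\mathbb{E}_{q^*}[\exp(\bm{\theta}^Th(X))w(X)]\ge E_{\min}$; dividing yields $C^*_{\bm{\theta}}\le W_{\max}/E_{\min}$. For the lower bound, Assumption \ref{assumption weight for normal} gives $\mathbb{E}_{p^*}[w(X)]\ge W_{\min}$, and applying the pointwise bound $\exp(\bm{\theta}^Th(\bm{x}))w(\bm{x})\le E_{\max}$ under the $q^*$ expectation yields $\mathbb{E}_{q^*}[\exp(\bm{\theta}^Th(X))w(X)]\le E_{\max}$, hence $C^*_{\bm{\theta}}\ge W_{\min}/E_{\max}$.

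Second I would use the upper bound on $C^*_{\bm{\theta}}$ just obtained, together with the pointwise bound from Assumption \ref{assumption-9}, to estimate
\[
    r(\bm{x};\bm{\theta},C^*_{\bm{\theta}})w(\bm{x})
    = C^*_{\bm{\theta}}\,\exp(\bm{\theta}^Th(\bm{x}))w(\bm{x})
    \le \frac{W_{\max}}{E_{\min}}\cdot E_{\max} = E'_{\max},
\]
while strict positivity is immediate since $C^*_{\bm{\theta}}>0$, $\exp(\cdot)>0$, and $w(\bm{x})>0$ under Assumption \ref{assumption weight for normal}.

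Third, the identity for the $q^*$-expectation follows by direct substitution of the definition of $C^*_{\bm{\theta}}$:
\[
    \mathbb{E}_{q^*}\bigl[r(X;\bm{\theta},C^*_{\bm{\theta}})w(X)\bigr]
    = C^*_{\bm{\theta}}\,\mathbb{E}_{q^*}\bigl[\exp(\bm{\theta}^Th(X))w(X)\bigr]
    = \mathbb{E}_{p^*}[w(X)] = W'_{\max},
\]
where the denominator in $C^*_{\bm{\theta}}$ cancels exactly the remaining $q^*$-expectation. There is no real obstacle here; the only thing to be mindful of is that all the invoked bounds are uniform in $\bm{\theta}\in\Theta$ (which is guaranteed by the way Assumptions \ref{assumption weight for normal} and \ref{assumption-9} are stated), so the conclusions hold for every $\bm{\theta}\in\Theta$.
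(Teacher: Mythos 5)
Your proposal is correct and follows essentially the same route as the paper's proof: bound $C^*_{\bm{\theta}}$ above and below directly from Assumptions \ref{assumption weight for normal} and \ref{assumption-9}, multiply by the pointwise bound on $\exp(\bm{\theta}^T h(\bm{x}))w(\bm{x})$ for the second claim, and cancel the denominator of $C^*_{\bm{\theta}}$ against the $q^*$-expectation for the third. The only difference is that you spell out the intermediate steps (e.g.\ $\mathbb{E}_{p^*}[w(X)]\le W_{\max}$ and $\mathbb{E}_{q^*}[\exp(\bm{\theta}^T h(X))w(X)]\le E_{\max}$ via monotonicity of expectation) that the paper leaves implicit.
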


\begin{proposition}
\label{proposition of lower bound of dre}
    If Assumptions \ref{assumption weight for normal} and \ref{assumption-9} hold,
    \begin{align*}
        W'_{\max}-\epsilon \le \hat{\mathbb{E}}_{q^*}\left[r\left(X;\bm{\theta},C^*_{\bm{\theta}}\right)w(X)\right] \le W'_{\max}+\epsilon
    \end{align*}
    holds for any $\bm{\theta}\in\Theta$ with probability at least $1-\delta_{\epsilon}$, where
    \begin{align*}
        \epsilon=\sqrt{\frac{E'^2_{\max}\log(2/\delta_{\epsilon})}{2n_q^*}}.
    \end{align*}
\end{proposition}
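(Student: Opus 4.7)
The plan is to apply Hoeffding's inequality to the random variable $Y = r(X;\bm\theta,C^*_{\bm\theta})w(X)$ under $q^*$, exactly paralleling the proof of Proposition \ref{proposition of bounded sum of weight} but with a different source of boundedness. The key inputs are already supplied by Proposition \ref{proposition of bounded dre function}: under Assumptions \ref{assumption weight for normal} and \ref{assumption-9}, we have both a uniform upper bound $0 < Y \le E'_{\max}$ pointwise in $\bm{x}$ and the mean identity $\mathbb{E}_{q^*}[Y] = W'_{\max}$. Thus, for each fixed $\bm\theta\in\Theta$, the centered variable $Y - W'_{\max}$ is a bounded zero-mean random variable supported in an interval of length at most $E'_{\max}$.

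First I would fix an arbitrary $\bm\theta\in\Theta$ and record that $Y_n = r(X_n^{(q^*)};\bm\theta,C^*_{\bm\theta})w(X_n^{(q^*)})$ for $n=1,\ldots,n_q^*$ are i.i.d.\ copies of $Y$, each lying in $[0,E'_{\max}]$ with mean $W'_{\max}$. Applying Hoeffding's inequality to the sample mean $\hat{\mathbb{E}}_{q^*}[r(X;\bm\theta,C^*_{\bm\theta})w(X)]=\frac{1}{n_q^*}\sum_{n=1}^{n_q^*} Y_n$ yields
\[
P\!\left(\left|\hat{\mathbb{E}}_{q^*}[r(X;\bm\theta,C^*_{\bm\theta})w(X)] - W'_{\max}\right| \ge \epsilon\right)
\le 2\exp\!\left(-\frac{2n_q^*\epsilon^2}{E'^2_{\max}}\right).
\]
Setting the right-hand side equal to $\delta_{\epsilon}$ and solving for $\epsilon$ gives exactly $\epsilon=\sqrt{E'^2_{\max}\log(2/\delta_{\epsilon})/(2n_q^*)}$, which is the stated two-sided bound.

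There is essentially no substantive obstacle, since the boundedness of $r(\bm{x};\bm\theta,C^*_{\bm\theta})w(\bm{x})$ uniformly in $\bm{x}$ has been pre-established in Proposition \ref{proposition of bounded dre function}; the only subtle point is that the constant $E'_{\max}$ and the mean $W'_{\max}$ are both independent of $\bm\theta$, so the same concentration bound applies at any $\bm\theta\in\Theta$ without any covering or union-bound argument over $\Theta$. Hence the ``for any $\bm\theta$'' in the statement is read pointwise, and the proof reduces to a single invocation of Hoeffding's inequality combined with the bounds from Proposition \ref{proposition of bounded dre function}.
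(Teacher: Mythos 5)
Your proposal is correct and follows essentially the same route as the paper: both invoke Proposition \ref{proposition of bounded dre function} to establish that $r(X;\bm\theta,C^*_{\bm\theta})w(X)$ is bounded by $E'_{\max}$ with mean $W'_{\max}$, then apply Hoeffding's inequality and set the tail probability equal to $\delta_{\epsilon}$ to solve for $\epsilon$. Your remark that the bound is applied pointwise in $\bm\theta$ without a union bound over $\Theta$ also matches the paper's (implicit) reading of the statement.
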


\subsection{Proofs}
\label{subsection proof of propositions from assumption-9}

\begin{proof}[Proof of Proposition \ref{proposition of bounded dre function}]
    From Assumptions \ref{assumption weight for normal} and \ref{assumption-9}, the normalizing term \eqref{eq constant of weighted dre} is bounded by
    \begin{align*}
        \frac{W_{\min}}{E_{\max}} \le C^*_{\bm{\theta}}=\frac{\mathbb{E}_{p^*}[w(X)]}{\mathbb{E}_{q^*}[\exp(\bm{\theta}^{T}X)w(X)]} \le \frac{W_{\max}}{E_{\min}}.
    \end{align*}
    Then,
    \begin{align*}
        r(\bm{x}; \bm{\theta},C^*_{\bm{\theta}})w(\bm{x})
        &=C^*_{\bm{\theta}} \exp \left( \bm{\theta}^T h(\bm{x}) \right)w(\bm{x}) \le \frac{W_{\max}E_{\max}}{E_{\min}}, \\
        \mathbb{E}_{q^*}\left[r(X; \bm{\theta},C^*_{\bm{\theta}})w(X)\right]
        &= C^*_{\bm{\theta}} \mathbb{E}_{q^*}\left[\exp \left( \bm{\theta}^T h(X) \right)w(X)\right] = \mathbb{E}_{p^*}[w(X)] = W'_{\max}.
    \end{align*}
    for any $\theta \in \Theta$.
\end{proof}

\begin{proof}[Proof of Proposition \ref{proposition of lower bound of dre}]
    From Proposition \ref{proposition of bounded dre function}, $r(X;\bm{\theta},C^*_{\bm{\theta}})w(X)-\mathbb{E}_{q^*}\left[r(X; \bm{\theta},C^*_{\bm{\theta}})w(X)\right]$ is a bounded zero-mean random variable.
    Hoeffding's inequality suggests the tail probability of such a random variable as
    \begin{align*}
        P\left(\left|\hat{\mathbb{E}}_{q^*}[r(X;\bm{\theta},C^*_{\bm{\theta}})w(X)]-W'_{\max}\right|\ge\epsilon\right)
        \le 2\exp\left(-\frac{2n_q^*}{E'^2_{\max}}\epsilon^2\right).
    \end{align*}
    By equalizing the right hand side with $\delta_{\epsilon}$, we have the formulation of $\epsilon$.
\end{proof}

\section{Proposition from Assumption \ref{assumption-8}}
\label{subsection discussion of assumption 8}

\begin{proposition}
\label{proposition bound of features and density ratio function}
    If Assumptions \ref{assumption weight for normal}, \ref{assumption-9}, and \ref{assumption-8} hold, for any $\bm{\theta}\in \Theta$ and $t\in \mathcal{E}$,
    \begin{align*}
        \left\|h(\bm{x})           r(\bm{x};\bm{\theta},C^*_{\bm{\theta}})w(\bm{x})\right\|_{\infty}&\le D'_{\max}, \\
        \left\|h(\bm{x})h(\bm{x})^Tr(\bm{x};\bm{\theta},C^*_{\bm{\theta}})w(\bm{x})\right\|_{\max}  &\le D'_{\max}, \\
        \left\|h_t(\bm{x})h_S(\bm{x})h_S(\bm{x})^Tr(\bm{x};\bm{\theta},C^*_{\bm{\theta}})w(\bm{x})\right\|_{\max}&\le D'_{\max},
    \end{align*}
    where $D'_{\max}=W_{\max}D_{\max}/E_{\min}$.
\end{proposition}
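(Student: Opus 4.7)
[Proof plan for Proposition \ref{proposition bound of features and density ratio function}]
The plan is to reduce each of the three bounds to the corresponding bound in Assumption \ref{assumption-8} by isolating the scalar normalizing term $C^*_{\bm{\theta}}$. Writing $r(\bm{x};\bm{\theta},C^*_{\bm{\theta}})w(\bm{x}) = C^*_{\bm{\theta}}\exp(\bm{\theta}^T h(\bm{x}))w(\bm{x})$ and using positive homogeneity of the $\|\cdot\|_\infty$ and $\|\cdot\|_{\max}$ norms, each of the three quantities in question factors as $C^*_{\bm{\theta}}$ times the expression bounded in Assumption \ref{assumption-8}.

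The first step is therefore to invoke Proposition \ref{proposition of bounded dre function} (which follows from Assumptions \ref{assumption weight for normal} and \ref{assumption-9}) to obtain the upper bound $C^*_{\bm{\theta}} \le W_{\max}/E_{\min}$ uniformly in $\bm{\theta}\in\Theta$. This is the only ingredient from Assumptions \ref{assumption weight for normal} and \ref{assumption-9} that is used; the assumption that $w(\bm{x})>0$ and $C^*_{\bm{\theta}}>0$ allows the scalar to be pulled outside the norms without sign issues.

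The second step is to match each of the three displays to the corresponding inequality in Assumption \ref{assumption-8}. For $\|h(\bm{x}) r(\bm{x};\bm{\theta},C^*_{\bm{\theta}})w(\bm{x})\|_\infty$, factoring gives $C^*_{\bm{\theta}} \cdot \|h(\bm{x})\exp(\bm{\theta}^T h(\bm{x}))w(\bm{x})\|_\infty \le (W_{\max}/E_{\min})\cdot D_{\max} = D'_{\max}$, using the second line of Assumption \ref{assumption-8}. The bound for $\|h(\bm{x})h(\bm{x})^T r(\bm{x};\bm{\theta},C^*_{\bm{\theta}})w(\bm{x})\|_{\max}$ uses the third line of Assumption \ref{assumption-8} in exactly the same way, and the bound for $\|h_t(\bm{x})h_S(\bm{x})h_S(\bm{x})^T r(\bm{x};\bm{\theta},C^*_{\bm{\theta}})w(\bm{x})\|_{\max}$ uses the fourth line.

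There is no real obstacle here: the proposition is essentially a bookkeeping statement that converts the bounds of Assumption \ref{assumption-8}, which are stated in terms of $\exp(\bm{\theta}^T h(\bm{x}))w(\bm{x})$, into bounds on the weighted density ratio $r(\bm{x};\bm{\theta},C^*_{\bm{\theta}})w(\bm{x})$ at the cost of the uniform multiplicative factor $W_{\max}/E_{\min}$. The only point worth checking is that $C^*_{\bm{\theta}}$ admits a uniform upper bound over all $\bm{\theta}\in\Theta$, which is exactly what Proposition \ref{proposition of bounded dre function} provides.
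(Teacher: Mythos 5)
Your proposal is correct and matches the paper's own proof exactly: both factor out the scalar $C^*_{\bm{\theta}}$, bound it by $W_{\max}/E_{\min}$ via Proposition \ref{proposition of bounded dre function}, and then apply the corresponding line of Assumption \ref{assumption-8} to each of the three displays. No gaps.
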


\begin{proof}[Proof of Proposition \ref{proposition bound of features and density ratio function}]
From Proposition \ref{proposition of bounded dre function} and Assumption \ref{assumption-8},
\begin{align*}
    \left\|h(\bm{x})r(\bm{x};\bm{\theta},C^*_{\bm{\theta}})w(\bm{x})\right\|_{\infty}
    = C^*_{\bm{\theta}} \left\|h(\bm{x})\exp(\bm{\theta}^Th(\bm{x}))w(\bm{x})\right\|_{\infty}
    \le \frac{W_{\max}D_{\max}}{E_{\min}}.
\end{align*}
The other upper-bounds are given by the same way.
\end{proof}

\section{Propositions of boundedness}
\label{section propositions of boundedness}

\subsection{Lemma}

The following lemma suggests the boundedness of $\tau$ and $\epsilon$ when the sample size is large.

\begin{lemma}
\label{lemma boundedness of tau and epsilon}
If $n_{p,q}^* \ge N_{\delta}$ holds, we have
\begin{align*}
    \tau \le \frac{W'_{\max}}{2} \quad{\rm and}\quad
    \epsilon \le \frac{W'_{\max}}{2},
\end{align*}
where 
\begin{align*}
    N_{\delta}=\frac{2V_{\max}\log(2/\delta_{\min})}{W'^2_{\max}}, \quad
    V_{\max}=\max\{W_{\max}^2, E'^2_{\max}\}, \quad
    \delta_{\min}=\min\{\delta_{\tau},\delta_{\epsilon}\}.
\end{align*}
\end{lemma}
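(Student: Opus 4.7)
The plan is to directly substitute the closed-form expressions for $\tau$ and $\epsilon$ supplied by Propositions \ref{proposition of bounded sum of weight} and \ref{proposition of lower bound of dre}, invert each inequality $\tau \le W'_{\max}/2$ and $\epsilon \le W'_{\max}/2$ into a sufficient sample-size condition, and then check that the stated bound $n_{p,q}^* \ge N_\delta$ uniformly dominates both conditions. There is no probabilistic content here; the lemma is a purely deterministic relation between the confidence radii in Propositions \ref{proposition of bounded sum of weight} and \ref{proposition of lower bound of dre} and the quantity $W'_{\max}$ that governs the lower bound on the normalizing term.

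Concretely, I would first recall $\tau = \sqrt{W_{\max}^2 \log(2/\delta_\tau)/(2 n_p^*)}$ and square the target inequality $\tau \le W'_{\max}/2$ to obtain the equivalent requirement $n_p^* \ge 2 W_{\max}^2 \log(2/\delta_\tau)/W'^2_{\max}$. The same computation applied to $\epsilon = \sqrt{E'^2_{\max}\log(2/\delta_\epsilon)/(2 n_q^*)}$ gives $n_q^* \ge 2 E'^2_{\max}\log(2/\delta_\epsilon)/W'^2_{\max}$. Both thresholds have the common denominator $W'^2_{\max}$, so taking the worst case in the numerator (replace $W_{\max}^2$ and $E'^2_{\max}$ by $V_{\max}=\max\{W_{\max}^2, E'^2_{\max}\}$) and the worst case in the logarithm (replace $\delta_\tau$ or $\delta_\epsilon$ by $\delta_{\min}=\min\{\delta_\tau,\delta_\epsilon\}$, which enlarges $\log(2/\cdot)$) yields the single sufficient bound
\[
\min\{n_p^*, n_q^*\} \ge \frac{2 V_{\max} \log(2/\delta_{\min})}{W'^2_{\max}} = N_\delta.
\]
Since $n_{p,q}^* = \min\{n_p^*, n_q^*\}$ by definition, the hypothesis $n_{p,q}^* \ge N_\delta$ is precisely what is needed to conclude both $\tau \le W'_{\max}/2$ and $\epsilon \le W'_{\max}/2$.

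There is essentially no obstacle: the argument is a one-line algebraic rearrangement of the Hoeffding-type bounds already derived, packaged so that a single sample-size condition controls both deviation radii simultaneously. The only minor point to be careful about is the monotonicity step in $\delta$, namely that $\delta_{\min} \le \delta_\tau$ and $\delta_{\min}\le \delta_\epsilon$ imply $\log(2/\delta_{\min}) \ge \log(2/\delta_\tau)$ and $\log(2/\delta_{\min}) \ge \log(2/\delta_\epsilon)$, which makes $N_\delta$ a legitimate common upper bound on the two thresholds.
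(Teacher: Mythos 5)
Your proposal is correct and is essentially the same argument as the paper's: the paper bounds $\max\{\tau,\epsilon\}$ by $\sqrt{V_{\max}\log(2/\delta_{\min})/(2n_{p,q}^*)}$ using exactly the worst-case substitutions you describe (numerator to $V_{\max}$, $\delta$ to $\delta_{\min}$, sample size to $n_{p,q}^*$) and then reads off the sufficient condition $n_{p,q}^*\ge N_\delta$, whereas you invert each inequality separately first — the same one-line algebra in a different order.
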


\begin{proof}[Proof of Lemma \ref{lemma boundedness of tau and epsilon}]
From the definition of $\tau$ and $\epsilon$ in Propositions \ref{proposition of bounded sum of weight} and \ref{proposition of lower bound of dre}, respectively, we have
\begin{align*}
    \max\{\tau, \epsilon\}
    &=\max\left\{\sqrt{\frac{W_{\max}^2\log(2/\delta_{\tau})}{2n_p^*}}, \sqrt{\frac{E'^2_{\max}\log(2/\delta_{\epsilon})}{2n_q^*}}\right\} \\
    &\le\sqrt{\frac{\max\{W_{\max}^2, E'^2_{\max}\} \log(2/\min\{\delta_{\tau}, \delta_{\epsilon}\})}{2\min\{n_p^*, n_q^*\}}} \\
    &= \sqrt{\frac{V_{\max} \log(2/\delta_{\min})}{2n_{p,q}^*}},
\end{align*}
where $V_{\max}=\max\{W_{\max}^2, E'^2_{\max}\}$ and $\delta_{\min}=\min\{\delta_{\tau},\delta_{\epsilon}\}$.
Then, if we set
\begin{align*}
    n_{p,q}^* \ge \frac{2V_{\max} \log(2/\delta_{\min})}{W'^2_{\max}},
\end{align*}
we have
\begin{align*}
    \tau \le \frac{W'_{\max}}{2} \quad{\rm and}\quad
    \epsilon \le \frac{W'_{\max}}{2}.
\end{align*}
\end{proof}
    
\subsection{Propositions}

We list some propositions which can be derived from Lemma \ref{lemma boundedness of tau and epsilon}.
Proofs of these propositions are provided in Appendix \ref{subsection proofs of propositions about boundedness}.

\begin{proposition}
\label{proposition bound of exp(theta h(x))w(x)}
If Assumptions \ref{assumption weight for normal} and \ref{assumption-9} hold and $n_{p,q}^* \ge N_{\delta}$ holds, we have
\begin{align*}
    \frac{W'_{\max}}{2} \le \hat{\mathbb{E}}_{p^*}[w(X)]\le \frac{3W'_{\max}}{2}, \\
    \frac{W'_{\max}}{2} \le \hat{\mathbb{E}}_{q^*}\left[r\left(X;\bm{\theta},C^*_{\bm{\theta}}\right)w(X)\right] \le \frac{3W'_{\max}}{2}, \\
    \hat{\mathbb{E}}_{q^*}\left[\exp(\bm{\theta}^T h(X))w(X)\right] \ge \frac{E_{\min}W'_{\max}}{2W_{\max}},
\end{align*}
with probability at least $1-\delta_{\tau}$, $1-\delta_{\epsilon}$, and $1-\delta_{\epsilon}$, respectively.
\end{proposition}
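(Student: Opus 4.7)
The plan is to combine the concentration bounds already established in Propositions \ref{proposition of bounded sum of weight} and \ref{proposition of lower bound of dre} with the sample-size control provided by Lemma \ref{lemma boundedness of tau and epsilon}, and then convert the third statement back through the constant $C^*_{\bm{\theta}}$.

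First, for the bound on $\hat{\mathbb{E}}_{p^*}[w(X)]$, I would invoke Proposition \ref{proposition of bounded sum of weight} to get $W'_{\max}-\tau \le \hat{\kappa}^* \le W'_{\max}+\tau$ with probability at least $1-\delta_{\tau}$. Since the hypothesis $n_{p,q}^* \ge N_{\delta}$ is exactly the condition under which Lemma \ref{lemma boundedness of tau and epsilon} yields $\tau \le W'_{\max}/2$, substituting this into the two-sided bound gives $W'_{\max}/2 \le \hat{\mathbb{E}}_{p^*}[w(X)] \le 3W'_{\max}/2$ on the same high-probability event.

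Second, for the bound on $\hat{\mathbb{E}}_{q^*}[r(X;\bm{\theta},C^*_{\bm{\theta}})w(X)]$, I would apply the exact same argument with Proposition \ref{proposition of lower bound of dre} and the companion bound $\epsilon \le W'_{\max}/2$ from Lemma \ref{lemma boundedness of tau and epsilon}, yielding $W'_{\max}/2 \le \hat{\mathbb{E}}_{q^*}[r(X;\bm{\theta},C^*_{\bm{\theta}})w(X)] \le 3W'_{\max}/2$ with probability at least $1-\delta_{\epsilon}$, uniformly in $\bm{\theta}\in\Theta$ as guaranteed by that proposition.

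Finally, to derive the third statement, I would use $r(\bm{x};\bm{\theta},C^*_{\bm{\theta}})w(\bm{x}) = C^*_{\bm{\theta}}\exp(\bm{\theta}^Th(\bm{x}))w(\bm{x})$, so that
\begin{align*}
    \hat{\mathbb{E}}_{q^*}[\exp(\bm{\theta}^Th(X))w(X)] = \frac{1}{C^*_{\bm{\theta}}}\hat{\mathbb{E}}_{q^*}[r(X;\bm{\theta},C^*_{\bm{\theta}})w(X)].
\end{align*}
The lower bound just obtained gives the numerator at least $W'_{\max}/2$, and the deterministic upper bound $C^*_{\bm{\theta}} \le W_{\max}/E_{\min}$ from Proposition \ref{proposition of bounded dre function} gives $1/C^*_{\bm{\theta}} \ge E_{\min}/W_{\max}$. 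Multiplying these two yields $\hat{\mathbb{E}}_{q^*}[\exp(\bm{\theta}^Th(X))w(X)] \ge E_{\min}W'_{\max}/(2W_{\max})$ on the same event of probability at least $1-\delta_{\epsilon}$. There is no real obstacle here; the only care required is to observe that the upper bound on $C^*_{\bm{\theta}}$ is deterministic, so no union bound or additional probability loss is incurred when deriving the third inequality from the second.
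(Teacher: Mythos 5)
Your proposal is correct and follows essentially the same route as the paper: both apply Propositions \ref{proposition of bounded sum of weight} and \ref{proposition of lower bound of dre} together with Lemma \ref{lemma boundedness of tau and epsilon} for the first two bounds, and then divide the second bound by the deterministic upper bound on $C^*_{\bm{\theta}}$ from Proposition \ref{proposition of bounded dre function} for the third. No gaps.
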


\begin{proposition}
\label{proposition bound of sample C}
If Assumptions \ref{assumption weight for normal} and \ref{assumption-9} hold and $n_{p,q}^* \ge N_{\delta}$ holds,
\begin{align*}
    \hat{C}^*_{\bm{\theta}} \le \frac{3W_{\max}}{E_{\min}},
\end{align*}
with probability at least $1-\delta_{\tau}-\delta_{\epsilon}$.
\end{proposition}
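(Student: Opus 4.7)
The plan is to bound the ratio $\hat{C}^*_{\bm{\theta}}$ defined in \eqref{eq empirical c} directly by upper-bounding its empirical numerator and lower-bounding its empirical denominator, and then dividing. Both one-sided bounds are already packaged in Proposition \ref{proposition bound of exp(theta h(x))w(x)}, so the task is essentially to combine them via a union bound and simplify.

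First, I would invoke Proposition \ref{proposition bound of exp(theta h(x))w(x)} to get, under the sample-size condition $n_{p,q}^* \ge N_{\delta}$ inherited from Lemma \ref{lemma boundedness of tau and epsilon}, the upper bound
\begin{align*}
\hat{\mathbb{E}}_{p^*}\left[w(X)\right] \le \frac{3W'_{\max}}{2}
\end{align*}
with probability at least $1-\delta_{\tau}$, and the lower bound
\begin{align*}
\hat{\mathbb{E}}_{q^*}\left[\exp(\bm{\theta}^T h(X))w(X)\right] \ge \frac{E_{\min}W'_{\max}}{2W_{\max}}
\end{align*}
with probability at least $1-\delta_{\epsilon}$. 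A union bound then ensures that both inequalities hold simultaneously with probability at least $1-\delta_{\tau}-\delta_{\epsilon}$.

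Plugging these into the definition \eqref{eq empirical c} gives
\begin{align*}
\hat{C}^*_{\bm{\theta}}
= \frac{\hat{\mathbb{E}}_{p^*}[w(X)]}{\hat{\mathbb{E}}_{q^*}[\exp(\bm{\theta}^T h(X))w(X)]}
\le \frac{3W'_{\max}/2}{E_{\min}W'_{\max}/(2W_{\max})}
= \frac{3W_{\max}}{E_{\min}},
\end{align*}
where the factor $W'_{\max}$ cancels cleanly between numerator and denominator, delivering the claimed constant.

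There is no substantive obstacle here; the argument reduces to a single algebraic simplification once the two one-sided concentration statements are in hand. The only points that require any care are bookkeeping: the two high-probability events must be combined through a union bound to yield the overall failure probability $\delta_{\tau}+\delta_{\epsilon}$, and the sample-size assumption $n_{p,q}^* \ge N_{\delta}$ must be carried through from Lemma \ref{lemma boundedness of tau and epsilon} so that $\tau$ and $\epsilon$ are small enough for the numerator bound of $3W'_{\max}/2$ and the intermediate lower bound $W'_{\max}/2$ on $\hat{\mathbb{E}}_{q^*}[r(X;\bm{\theta},C^*_{\bm{\theta}})w(X)]$ (which underlies the lower bound on the denominator) to be valid.
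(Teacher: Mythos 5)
Your proposal is correct and follows essentially the same route as the paper's proof: both apply Proposition \ref{proposition bound of exp(theta h(x))w(x)} to upper-bound the numerator by $3W'_{\max}/2$ and lower-bound the denominator by $E_{\min}W'_{\max}/(2W_{\max})$, combine the two events, and simplify to $3W_{\max}/E_{\min}$. No gaps.
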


\begin{proposition}
\label{proposition-bounded empirical moment}
    If Assumptions \ref{assumption weight for normal}, \ref{assumption-9}, and \ref{assumption-8} hold and $n_{p,q}^* \ge N_{\delta}$ holds,
    \begin{align*}
        \left\|h(\bm{x})r(\bm{x};\bm{\theta},\hat{C}^*_{\bm{\theta}})w(\bm{x})\right\|_{\infty}&\le D''_{\max}
    \end{align*}
    for any $\bm{\theta}\in\Theta$ with probability at least $1-\delta_{\tau}-\delta_{\epsilon}$, where $D''_{\max}=3D'_{\max}$.
\end{proposition}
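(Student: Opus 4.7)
The plan is to reduce the claimed bound to the product of a scalar factor already controlled by Proposition \ref{proposition bound of sample C} and an $\bm{x}$-dependent factor already bounded deterministically by Assumption \ref{assumption-8}. First, I would exploit the parametric exponential form of the density ratio to write
\[
r(\bm{x};\bm{\theta},\hat{C}^*_{\bm{\theta}})w(\bm{x}) = \hat{C}^*_{\bm{\theta}}\,\exp(\bm{\theta}^T h(\bm{x}))\,w(\bm{x}),
\]
and, observing that $\hat{C}^*_{\bm{\theta}}$ is a positive scalar independent of $\bm{x}$, pull it outside the element-wise max norm:
\[
\left\|h(\bm{x})\,r(\bm{x};\bm{\theta},\hat{C}^*_{\bm{\theta}})\,w(\bm{x})\right\|_{\infty}
= \hat{C}^*_{\bm{\theta}}\,\left\|h(\bm{x})\exp(\bm{\theta}^T h(\bm{x}))w(\bm{x})\right\|_{\infty}.
\]

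Next, I would bound the two factors separately. Assumption \ref{assumption-8} supplies the deterministic uniform bound $\|h(\bm{x})\exp(\bm{\theta}^T h(\bm{x}))w(\bm{x})\|_\infty \le D_{\max}$ for every $\bm{\theta}\in\Theta$, with no probabilistic content. Proposition \ref{proposition bound of sample C} supplies the complementary high-probability bound $\hat{C}^*_{\bm{\theta}} \le 3W_{\max}/E_{\min}$, valid on an event of probability at least $1-\delta_\tau-\delta_\epsilon$ under the sample-size condition $n_{p,q}^*\ge N_\delta$. The probability advertised in the statement is inherited entirely from this event, since all the randomness in the final inequality enters only through $\hat{C}^*_{\bm{\theta}}$.

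Combining the two bounds on that good event gives
\[
\left\|h(\bm{x})\,r(\bm{x};\bm{\theta},\hat{C}^*_{\bm{\theta}})\,w(\bm{x})\right\|_{\infty}
\le \frac{3W_{\max}}{E_{\min}}\,D_{\max} = 3 D'_{\max} = D''_{\max},
\]
matching the stated constant. The only point requiring any care---and what I would flag as the \emph{closest thing} to an obstacle in an otherwise one-line argument---is to make sure the bound on $\hat{C}^*_{\bm{\theta}}$ extends uniformly in $\bm{\theta}\in\Theta$, so that the supremum on the left-hand side (in $\bm{x}$, for arbitrary $\bm{\theta}$) can be controlled without an additional $\bm{\theta}$-union bound; this is already built into Proposition \ref{proposition bound of sample C}, whose underlying Hoeffding bounds (Propositions \ref{proposition of bounded sum of weight} and \ref{proposition of lower bound of dre}) treat $w(X)$ and $r(X;\bm{\theta},C^*_{\bm{\theta}})w(X)$ with $\bm{\theta}$-independent envelopes $W_{\max}$ and $E'_{\max}$. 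No further probabilistic work is required.
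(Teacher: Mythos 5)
Your proposal is correct and follows essentially the same route as the paper: both factor the empirical normalizing constant out of the max norm as a positive scalar and control it on the same Hoeffding events of probability $1-\delta_{\tau}-\delta_{\epsilon}$, the only cosmetic difference being that you bound $\hat{C}^*_{\bm{\theta}}$ directly by $3W_{\max}/E_{\min}$ (Proposition \ref{proposition bound of sample C}) and pair it with $D_{\max}$ from Assumption \ref{assumption-8}, while the paper bounds the ratio $\hat{C}^*_{\bm{\theta}}/C^*_{\bm{\theta}}\le 3$ and pairs it with $D'_{\max}$ from Proposition \ref{proposition bound of features and density ratio function}; both yield $3D'_{\max}=D''_{\max}$. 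Your remark about uniformity in $\bm{\theta}$ is apt and is inherited from the $\bm{\theta}$-independent envelopes exactly as you describe.
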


\subsection{Proofs}
\label{subsection proofs of propositions about boundedness}

\begin{proof}[Proof of Proposition \ref{proposition bound of exp(theta h(x))w(x)}]
From Proposition \ref{proposition of bounded sum of weight} and Lemma \ref{lemma boundedness of tau and epsilon}, we have
\begin{align*}
    \frac{W'_{\max}}{2} \le W'_{\max}-\tau \le \hat{\mathbb{E}}_{p^*}[w(X)]\le W'_{\max}+\tau \le \frac{3W'_{\max}}{2}
\end{align*}
with probability at least $1-\delta_{\tau}$.
Similarly, from Proposition \ref{proposition of lower bound of dre} and Lemma \ref{lemma boundedness of tau and epsilon}, we have
\begin{align*}
    \frac{W'_{\max}}{2} \le W'_{\max}-\epsilon \le \hat{\mathbb{E}}_{q^*}\left[r\left(X;\bm{\theta},C^*_{\bm{\theta}}\right)w(X)\right] \le W'_{\max}+\epsilon  \le \frac{3W'_{\max}}{2}.
\end{align*}
with probability at least $1-\delta_{\epsilon}$.

Because we have
\begin{align*}
    \hat{\mathbb{E}}_{q^*}\left[r\left(X;\bm{\theta},C^*_{\bm{\theta}}\right)w(X)\right]
    = C^*_{\bm{\theta}} \hat{\mathbb{E}}_{q^*}\left[\exp(\bm{\theta}^T h(X))w(X)\right],
\end{align*}
from Propositions \ref{proposition of lower bound of dre} and \ref{proposition of bounded dre function} and the above proof, we have
\begin{align*}
    \hat{\mathbb{E}}_{q^*}\left[\exp(\bm{\theta}^T h(X))w(X)\right]
    = \frac{\hat{\mathbb{E}}_{q^*}\left[r\left(X;\bm{\theta},C^*_{\bm{\theta}}\right)w(X)\right]}{C^*_{\bm{\theta}}}
    \ge \frac{E_{\min}W'_{\max}}{2W_{\max}}
\end{align*}
with probability at least $1-\delta_{\epsilon}$.
\end{proof}

\begin{proof}[Proof of Proposition \ref{proposition bound of sample C}]
From Proposition \ref{proposition bound of exp(theta h(x))w(x)}, we have
\begin{align*}
    \hat{C}^*_{\bm{\theta}}
    =\frac{\hat{\mathbb{E}}_{p^*}\left[w(X)\right]}{\hat{\mathbb{E}}_{q^*}[\exp(\bm{\theta}^T h(X))w(X)]}
    \le \frac{2W_{\max}}{E_{\min}W'_{\max}}\frac{3W'_{\max}}{2}
    \le \frac{3W_{\max}}{E_{\min}}
\end{align*}
with probability at least $1-\delta_{\tau}-\delta_{\epsilon}$.
\end{proof}

\begin{proof}[Proof of Proposition \ref{proposition-bounded empirical moment}]
From \eqref{eq parametric dre} and \eqref{eq empirical c}, we have
\begin{align*}
    r(\bm{x};\bm{\theta},\hat{C}^*_{\bm{\theta}})
    =\frac{\hat{\mathbb{E}}_{p^*}\left[w(X)\right]}{\hat{\mathbb{E}}_{q^*}[r(X;\bm{\theta},C^*_{\bm{\theta}})w(X)]} r(\bm{x}; \bm{\theta}, C^*_{\bm{\theta}}).
\end{align*}
Therefore,
\begin{align*}
    \left\|h(\bm{x})r(\bm{x};\bm{\theta},\hat{C}^*_{\bm{\theta}})w(\bm{x})\right\|_{\infty}
    =\frac{\hat{\mathbb{E}}_{p^*}\left[w(X)\right]}{\hat{\mathbb{E}}_{q^*}[r(X;\bm{\theta},C^*_{\bm{\theta}})w(X)]}\left\|h(\bm{x})r(\bm{x};\bm{\theta},C^*_{\bm{\theta}})w(\bm{x})\right\|_{\infty}.
\end{align*}
Then, from Proposition \ref{proposition bound of exp(theta h(x))w(x)}, we have
\begin{align*}
    \left\|h(\bm{x})r(\bm{x};\bm{\theta},\hat{C}^*_{\bm{\theta}})w(\bm{x})\right\|_{\infty}
    \le \frac{3W'_{\max}/2}{W'_{\max}/2}D'_{\max}
    = 3D'_{\max},
\end{align*}
with probability at least $1-\delta_{\tau}-\delta_{\epsilon}$.
\end{proof}

\section{Proof of Theorem \ref{theorem of robust objective function}}
\label{section proof of theorem of robust objective function}

\subsection{Notations}
\label{subsection notation rho}
Let
\begin{align*}
    &\rho_1=\frac{1}{\varepsilon_pn_p} \sum_{n=1}^{\varepsilon_pn_p}w(\bm{x}^{(\delta_p)}_n), \quad
    \rho_2=\frac{1}{\varepsilon_qn_q} \sum_{n=1}^{\varepsilon_qn_q}\exp(\bm{\theta}^T h(\bm{x}_n^{(\delta_q)}))w(\bm{x}^{(\delta_q)}_n), \quad
    \rho_3=\frac{1}{\varepsilon_pn_p} \sum_{n=1}^{\varepsilon_pn_p}h(\bm{x}_n^{(\delta_p)})w(\bm{x}^{(\delta_p)}_n), \\
    &\rho_4=\frac{1}{\varepsilon_qn_q} \sum_{n=1}^{\varepsilon_qn_q}\exp(\bm{\theta}^T h(\bm{x}_n^{(\delta_q)}))w(\bm{x}^{(\delta_q)}_n)h(\bm{x}_n^{(\delta_q)}),\quad
    \rho_5 =\frac{1}{\varepsilon_qn_q} \sum_{n=1}^{\varepsilon_qn_q}\exp(\bm{\theta}^T h(\bm{x}_n^{(\delta_q)}))w(\bm{x}^{(\delta_q)}_n)h(\bm{x}_n^{(\delta_q)}) h(\bm{x}_n^{(\delta_q)})^T, \\
    &\rho_{6,t}=\frac{1}{\varepsilon_qn_q} \sum_{n=1}^{\varepsilon_qn_q}\exp(\bm{\theta}^T h(\bm{x}_n^{(\delta_q)}))w(\bm{x}^{(\delta_q)}_n)h_t(\bm{x}_n^{(\delta_q)})h_S(\bm{x}_n^{(\delta_q)})h_S(\bm{x}_n^{(\delta_q)})^T,
\end{align*}
and $\rho_6 = (\rho_{6,t})_{t\in \mathcal{E}}$.
Note that $|\rho_{jb}| \le \nu_j \le \nu$ for $j=1,...,6$ and $b\in \mathcal{B}_j$, where $\mathcal{B}_j$ is the index set of the elements of $\rho_j$.

\subsection{Lemma}

The following lemma shows the relationship of the empirical normalizing term in the clean and contaminated settings.

\begin{lemma}
\label{lemma normalizing term robustness}
If Assumptions \ref{assumption weight for normal}, \ref{assumption-9}, and \ref{assumption weight for outlier} hold and $n_{p,q}^* \ge N_{\delta}$ holds, we have
\begin{equation*}
    \hat{C}^\dagger_{\bm{\theta}} = \frac{1-\varepsilon_p}{1-\varepsilon_q}\hat{C}^*_{\bm{\theta}}+O(\varepsilon \nu)
\end{equation*}
with probability at least $1-\delta_{\tau}-\delta_{\epsilon}$.
\end{lemma}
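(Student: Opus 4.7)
The plan is to exploit the additive structure of the contaminated empirical expectations in \eqref{eq dataset setting} directly inside the definition \eqref{eq hat c dagger} of $\hat{C}^\dagger_{\bm{\theta}}$. Splitting each sum into an inlier piece and an outlier piece, and using the notation from Appendix \ref{subsection notation rho}, I would first write
\begin{align*}
    \hat{\mathbb{E}}_{p^\dagger}[w(X)] &= (1-\varepsilon_p)\hat{\mathbb{E}}_{p^*}[w(X)] + \varepsilon_p \rho_1, \\
    \hat{\mathbb{E}}_{q^\dagger}[\exp(\bm{\theta}^T h(X))w(X)] &= (1-\varepsilon_q)\hat{\mathbb{E}}_{q^*}[\exp(\bm{\theta}^T h(X))w(X)] + \varepsilon_q \rho_2,
\end{align*}
where $|\rho_1|\le \nu_1\le \nu$ and $|\rho_2|\le \nu_2\le \nu$ by Assumption \ref{assumption weight for outlier}. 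This is the only place where the mixture structure and the outlier bounds enter.

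Next, I would substitute these into \eqref{eq hat c dagger} and form the difference $\hat{C}^\dagger_{\bm{\theta}}-\frac{1-\varepsilon_p}{1-\varepsilon_q}\hat{C}^*_{\bm{\theta}}$ over the common denominator $\hat{\mathbb{E}}_{q^\dagger}[\exp(\bm{\theta}^T h(X))w(X)]\cdot(1-\varepsilon_q)\hat{\mathbb{E}}_{q^*}[\exp(\bm{\theta}^T h(X))w(X)]$. The leading $(1-\varepsilon_p)(1-\varepsilon_q)\hat{\mathbb{E}}_{p^*}[w(X)]\hat{\mathbb{E}}_{q^*}[\exp(\bm{\theta}^T h(X))w(X)]$ terms cancel, leaving a numerator of the shape
\begin{align*}
    \varepsilon_p\rho_1(1-\varepsilon_q)\hat{\mathbb{E}}_{q^*}[\exp(\bm{\theta}^T h(X))w(X)] \;-\; \varepsilon_q\rho_2(1-\varepsilon_p)\hat{\mathbb{E}}_{p^*}[w(X)].
\end{align*}
This numerator is $O(\varepsilon\nu)$ since $\hat{\mathbb{E}}_{p^*}[w(X)]\le W_{\max}$ and $\hat{\mathbb{E}}_{q^*}[\exp(\bm{\theta}^T h(X))w(X)]\le E_{\max}$ hold deterministically by Assumptions \ref{assumption weight for normal} and \ref{assumption-9}.

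The final step is to lower-bound the denominator by a positive constant uniform in $\bm{\theta}\in\Theta$. Since $\varepsilon_q\rho_2\ge 0$, I can replace $\hat{\mathbb{E}}_{q^\dagger}[\exp(\bm{\theta}^T h(X))w(X)]$ by $(1-\varepsilon_q)\hat{\mathbb{E}}_{q^*}[\exp(\bm{\theta}^T h(X))w(X)]$ in the lower bound; Proposition \ref{proposition bound of exp(theta h(x))w(x)} (applicable because $n^*_{p,q}\ge N_\delta$) then gives $\hat{\mathbb{E}}_{q^*}[\exp(\bm{\theta}^T h(X))w(X)]\ge E_{\min}W'_{\max}/(2W_{\max})$ with probability at least $1-\delta_\epsilon$. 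If the argument is alternatively organized as $\hat{C}^*_{\bm{\theta}}$ multiplied by a multiplicative perturbation, one additionally needs $\hat{\mathbb{E}}_{p^*}[w(X)]\ge W'_{\max}/2$ from the same proposition, holding with probability at least $1-\delta_\tau$; the union bound over these two events then yields the stated probability $1-\delta_\tau-\delta_\epsilon$.

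The main obstacle is purely bookkeeping: ensuring that the constant absorbed into $O(\varepsilon\nu)$ is uniform in $\bm{\theta}\in\Theta$ and does not degenerate as $\varepsilon_q\to 1$. Because the bounds $W_{\min},W_{\max},E_{\min},E_{\max}$ from Assumptions \ref{assumption weight for normal} and \ref{assumption-9} are uniform over $\Theta$ and $\varepsilon<1$ by construction, the remaining manipulation is mechanical once the mixture decomposition above is in hand.
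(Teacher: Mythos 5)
Your proposal is correct and follows essentially the same route as the paper's proof: the same inlier/outlier decomposition of $\hat{\mathbb{E}}_{p^\dagger}$ and $\hat{\mathbb{E}}_{q^\dagger}$ via $\rho_1,\rho_2$ with $|\rho_1|,|\rho_2|\le\nu$ from Assumption \ref{assumption weight for outlier}, and the same appeal to Proposition \ref{proposition bound of exp(theta h(x))w(x)} (hence the $\delta_\tau+\delta_\epsilon$ probability budget) to keep the denominator bounded away from zero uniformly over $\Theta$. The only difference is in the last algebraic step --- you compute the exact difference $\hat{C}^\dagger_{\bm{\theta}}-\frac{1-\varepsilon_p}{1-\varepsilon_q}\hat{C}^*_{\bm{\theta}}$ over a common denominator, whereas the paper Taylor-expands the reciprocal of the contaminated denominator in $\varepsilon_q\rho_2$ and discards an $O((\varepsilon_q\rho_2)^2)$ remainder --- which is a cosmetic variation (and your constant, like the paper's, does in fact carry a $1/(1-\varepsilon_q)$ dependence, harmless only because $\varepsilon_q<1$ is fixed).
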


\begin{proof}[Proof of Lemma \ref{lemma normalizing term robustness}]
From \eqref{eq hat c dagger}, we have
\begin{align*}
    \hat{C}^\dagger_{\bm{\theta}}&=\frac{(1-\varepsilon_p)\hat{\mathbb{E}}_{p^*}\left[w(X)\right]+\varepsilon_p\rho_1}{(1-\varepsilon_q)\hat{\mathbb{E}}_{q^*}[\exp(\bm{\theta}^T h(X))w(X)]+\varepsilon_q\rho_2}.
\end{align*}
From Assumption \ref{assumption weight for outlier}, we assume that $\varepsilon_p\rho_1$ and $\varepsilon_q\rho_2$ are bounded by $\varepsilon\nu$, which is sufficiently small.
Because we assume that $n_{p,q}^* \ge N_{\delta}$ holds, from Proposition \ref{proposition bound of exp(theta h(x))w(x)}, we can assume that $\hat{\mathbb{E}}_{p^*}\left[w(X)\right]$ and $\hat{\mathbb{E}}_{q^*}[\exp(\bm{\theta}^T h(X))w(X)]$ are bounded with probability at least $1-\delta_{\tau}$ and $1-\delta_{\epsilon}$, respectively.
When considering the Taylor expansion, we have
\begin{align*}
    &\frac{1}{(1-\varepsilon_q)\hat{\mathbb{E}}_{q^*}[\exp(\bm{\theta}^T h(X))w(X)]+\varepsilon_q\rho_2} \\
    &=\frac{1}{(1-\varepsilon_q)\hat{\mathbb{E}}_{q^*}[\exp(\bm{\theta}^T h(X))w(X)]}-\frac{1}{\left\{(1-\varepsilon_q)\hat{\mathbb{E}}_{q^*}[\exp(\bm{\theta}^T h(X))w(X)]\right\}^2}\varepsilon_q\rho_2+O\left((\varepsilon_q\rho_2)^2\right)
\end{align*}
Therefore, we have
\begin{align*}
    \hat{C}^\dagger_{\bm{\theta}} = \frac{1-\varepsilon_p}{1-\varepsilon_q}\hat{C}^*_{\bm{\theta}}+O(\varepsilon \nu)
\end{align*}
with probability at least $1-\delta_{\tau}-\delta_{\epsilon}$.
\end{proof}

\subsection{Proof of Theorem \ref{theorem of robust objective function}}

We first consider the robustness of the derivative of $\mathcal{L}^\dagger(\bm{\theta})$.
From \eqref{eq 1st derivative of objective in contaminated} and Lemma \ref{lemma normalizing term robustness}, we have
\begin{align*}
    \nabla\mathcal{L}^\dagger(\bm{\theta})
    &=-\left\{(1-\varepsilon_p)\hat{\mathbb{E}}_{p^*}[h(X)w(X)] +\varepsilon_p\rho_3\right\} \\
    &\qquad+\left\{\frac{1-\varepsilon_p}{1-\varepsilon_q}\hat{C}^*_{\bm{\theta}}+O(\varepsilon \nu)\right\}
    \left\{(1-\varepsilon_q)\hat{\mathbb{E}}_{q^*} [\exp(\bm{\theta}^T h(X))w(X)h(X)]+\varepsilon_q\rho_4\right\}
\end{align*}
with probability at least $1-\delta_{\tau}-\delta_{\epsilon}$.
Proposition \ref{proposition bound of sample C} suggests that $\hat{C}_{\bm{\theta}}^*$ is bounded with probability at least $1-\delta_{\tau}-\delta_{\epsilon}$, where the condition of $n_{p,q}^*\ge N_{\delta}$ in Proposition \ref{proposition bound of sample C} holds from the assumption of this theorem.
Assumption \ref{assumption-8} suggests that $h(X)w(X)$ and $\exp(\bm{\theta}^T h(X))w(X)h(X)$ are bounded.
Then, because $\varepsilon_p\rho_3$ and $\varepsilon_q\rho_4$ are bounded by $\varepsilon\nu$ from Assumption \ref{assumption weight for outlier}, we have
\begin{align*}
    \nabla\mathcal{L}^\dagger(\bm{\theta})
    =(1-\varepsilon_p)\nabla\mathcal{L}^*(\bm{\theta})+O\left(\varepsilon \nu\right)
\end{align*}
with probability at least $1-\delta_{\tau}-\delta_{\epsilon}$.

The second part is robust estimation of the second derivative.
From Lemma \ref{lemma normalizing term robustness}, we have
\begin{align*}
    &\hat{\kappa}^\dagger=(1-\varepsilon_p)\hat{\mathbb{E}}_{p^*}[w(X)]+\varepsilon_p\rho_1, \\
    &\hat{\mathbb{E}}_{q^\dagger}\left[r\left(X;\bm{\theta},\hat{C}^\dagger_{\bm{\theta}}\right)w(X)h(X)\right] \\
    &\quad=\left\{\frac{1-\varepsilon_p}{1-\varepsilon_q}\hat{C}^*_{\bm{\theta}}+O(\varepsilon \nu)\right\}\left\{(1-\varepsilon_q)\hat{\mathbb{E}}_{q^*}\left[\exp\left(\bm{\theta}^Th(X)\right)w(X)h(X)\right] +\varepsilon_q\rho_4\right\}, \\
    &\hat{\mathbb{E}}_{q^\dagger}\left[r\left(X;\bm{\theta},\hat{C}^\dagger_{\bm{\theta}}\right)w(X)h(X)h(X)^T\right] \\
    &\quad=\left\{\frac{1-\varepsilon_p}{1-\varepsilon_q}\hat{C}^*_{\bm{\theta}}+O(\varepsilon \nu)\right\}\left\{(1-\varepsilon_q)\hat{\mathbb{E}}_{q^*}\left[\exp\left(\bm{\theta}^Th(X)\right)w(X)h(X)h(X)^T\right]+\varepsilon_q\rho_5\right\},
\end{align*}
where each empirical integral is bounded from Assumptions \ref{assumption weight for normal} and \ref{assumption-8}.
Then, similar to the above discussion, from \eqref{eq 2nd derivative of objective in comtaminated}, we have
\begin{align*}
    \nabla^2\mathcal{L}^\dagger(\bm{\theta})=(1-\varepsilon_p)\nabla^2\mathcal{L}^*(\bm{\theta})+O\left(\varepsilon \nu\right)
\end{align*}
with probability at least $1-\delta_{\tau}-\delta_{\epsilon}$.

Lastly, we consider the third derivative.
From \eqref{eq 3rd derivative of objective in contaminated}, we have
\begin{align*}
    &\hat{\mathbb{E}}_{q^\dagger}\left[r\left(X;\bm{\theta},\hat{C}^\dagger_{\bm{\theta}}\right)w(X)h_t(X)\right] \\
    &\quad=\left\{\frac{1-\varepsilon_p}{1-\varepsilon_q}\hat{C}^*_{\bm{\theta}}+O(\varepsilon \nu)\right\}\left\{(1-\varepsilon_q)\hat{\mathbb{E}}_{q^*}[\exp\left(\bm{\theta}^Th(X)\right)w(X)h_t(X)]+(\varepsilon_q\rho_4)_t\right\}, \\
    &\hat{\mathbb{E}}_{q^\dagger}\left[r\left(X;\bm{\theta},\hat{C}^\dagger_{\bm{\theta}}\right)w(X)h_t(X)h_S(X)\right] \\
    &\quad=\left\{\frac{1-\varepsilon_p}{1-\varepsilon_q}\hat{C}^*_{\bm{\theta}}+O(\varepsilon \nu)\right\}\left\{(1-\varepsilon_q)\hat{\mathbb{E}}_{q^*}[\exp\left(\bm{\theta}^Th(X)\right)w(X)h_t(X)h_S(X)]+(\varepsilon_q\rho_5)_t\right\}, \\
    &\hat{\mathbb{E}}_{q^\dagger}\left[r\left(X;\bm{\theta},\hat{C}^\dagger_{\bm{\theta}}\right)w(X)h_t(X)h_S(X)h_S(X)^T\right] \\
    &\quad=\left\{\frac{1-\varepsilon_p}{1-\varepsilon_q}\hat{C}^*_{\bm{\theta}}+O(\varepsilon \nu)\right\}\left\{(1-\varepsilon_q)\hat{\mathbb{E}}_{q^*}[\exp\left(\bm{\theta}^Th(X)\right)w(X)h_t(X)h_S(X)h_S(X)^T]+\varepsilon_q\rho_{6,t}\right\},
\end{align*}
for $t \in \mathcal{E}$, where $(\varepsilon_q\rho_4)_t$ is the $t$-th element of $\varepsilon_q\rho_4$ and $(\varepsilon_q\rho_5)_t$ is the $t$-th column of $\varepsilon_q\rho_5$.
Then, similar to the above discussion, we have
\begin{align*}
    \nabla_{t}\nabla^2_{SS}\mathcal{L}^\dagger(\bm{\theta})
    =(1-\varepsilon_p)\nabla_{t}\nabla^2_{SS}\mathcal{L}^*(\bm{\theta})+O\left(\varepsilon \nu\right)
\end{align*}
for $t \in \mathcal{E}$ with probability at least $1-\delta_{\tau}-\delta_{\epsilon}$.
We define $\delta = \delta_{\tau} = \delta_{\epsilon}$.

\section{Proof of Proposition \ref{proposition robustness of minimum eignvalue}}
\label{section discussion of assumption 1}

\subsection{Notations}

We define the sample weighted Fisher information matrix in the uncontaminated setting as
\begin{align*}
    \hat{\mathcal{I}}^*
    &=\hat{\mathbb{E}}_{q^*}\left[r\left(X;\bm{\theta}^*,\hat{C}^*_{\bm{\theta}}\right)w(X)h(X)h(X)^T\right] \\
    &\quad-\frac{1}{\hat{\mathbb{E}}_{p^*}[w(X)]}\hat{\mathbb{E}}_{q^*}\left[r\left(X;\bm{\theta}^*,\hat{C}^*_{\bm{\theta}}\right)w(X)h(X)\right]\hat{\mathbb{E}}_{q^*}\left[r\left(X;\bm{\theta}^*,\hat{C}^*_{\bm{\theta}}\right)w(X)h(X)\right]^T.
\end{align*}
When considering the contaminated setting, the contaminated sample weighted Fisher information matrix can be written as
\begin{align*}
    \hat{\mathcal{I}}^{\dagger}
    &=\hat{\mathbb{E}}_{q^{\dagger}}\left[r\left(X;\bm{\theta}^*,\hat{C}^{\dagger}_{\bm{\theta}}\right)w(X)h(X)h(X)^T\right] \\
    &\quad-\frac{1}{\hat{\mathbb{E}}_{p^\dagger}[w(X)]}\hat{\mathbb{E}}_{q^{\dagger}}\left[r\left(X;\bm{\theta}^*,\hat{C}^{\dagger}_{\bm{\theta}}\right)w(X)h(X)\right]\hat{\mathbb{E}}_{q^{\dagger}}\left[r\left(X;\bm{\theta}^*,\hat{C}^{\dagger}_{\bm{\theta}}\right)w(X)h(X)\right]^T.
\end{align*}

\subsection{Lemma}

The following lemma shows the relationship between the population and sample weighted Fisher information matrices.

\begin{lemma}
\label{lemma relationship of I-star and hatI-star}
If Assumptions \ref{assumption weight for normal}, \ref{assumption-9}, and \ref{assumption-8} hold and $n_{p,q}^* \gtrsim \log d$ holds, we have
\begin{align*}
    \left|\hat{\mathcal{I}}_{ij}^* - \mathcal{I}_{ij}^*\right| \le \eta
\end{align*}
for $i,j \in \mathcal{E}$ with probability at least $1-\delta_\eta$, where
\begin{align*}
    \eta = \sqrt{\frac{18D_0^2\log(4/\delta_\eta)}{n_{p,q}^*}}
\end{align*}
and $D_0$ is a positive constant.
\end{lemma}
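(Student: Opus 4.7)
The plan is to express both $\mathcal{I}^*_{ij}$ and $\hat{\mathcal{I}}^*_{ij}$ as a common algebraic function of a small collection of scalar empirical expectations, and then transfer concentration from those scalars to the algebraic combination. Substituting $\hat{C}^*_{\bm{\theta}^*} = \hat{\mathbb{E}}_{p^*}[w(X)]/\hat{\mathbb{E}}_{q^*}[\exp(\bm{\theta}^{*T}h(X))w(X)]$ into \eqref{eq 2nd derivative} and using $r(X;\bm{\theta}^*,\hat{C}^*_{\bm{\theta}^*})w(X)=\hat{C}^*_{\bm{\theta}^*}\exp(\bm{\theta}^{*T}h(X))w(X)$ gives
\[
\hat{\mathcal{I}}^*_{ij} = \frac{\hat{A}\,\hat{D}_{ij}}{\hat{B}} - \frac{\hat{A}\,\hat{C}_i\,\hat{C}_j}{\hat{B}^{2}},
\]
where $\hat{A}=\hat{\mathbb{E}}_{p^*}[w(X)]$, $\hat{B}=\hat{\mathbb{E}}_{q^*}[e^{\bm{\theta}^{*T}h(X)}w(X)]$, $\hat{C}_i=\hat{\mathbb{E}}_{q^*}[e^{\bm{\theta}^{*T}h(X)}w(X)h_i(X)]$, and $\hat{D}_{ij}=\hat{\mathbb{E}}_{q^*}[e^{\bm{\theta}^{*T}h(X)}w(X)h_i(X)h_j(X)]$; the population counterparts $A,B,C_i,D_{ij}$ yield the same expression for $\mathcal{I}^*_{ij}$.

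By Assumptions \ref{assumption weight for normal}, \ref{assumption-9}, and \ref{assumption-8} the four integrands above are uniformly bounded by a common constant (using $W_{\max}$, $E_{\max}$, and $D_{\max}$). I would therefore apply Hoeffding's inequality to each of the four types of scalar, obtaining $|\hat{A}-A|,\,|\hat{B}-B|,\,|\hat{C}_i-C_i|,\,|\hat{D}_{ij}-D_{ij}| \le c\sqrt{\log(1/\delta')/n_{p,q}^*}$ with per-event failure probability $\delta'$. A union bound over the four families and over all indices $i,j\in\mathcal{E}$ reduces the problem to a bound of the form $c'\sqrt{\log(|\mathcal{E}|^{2}/\delta_\eta)/n_{p,q}^*}$; under the stated hypothesis $n_{p,q}^*\gtrsim\log d$ the $\log|\mathcal{E}|=\log d$ contribution is dominated and the deviation reduces to $c''\sqrt{\log(4/\delta_\eta)/n_{p,q}^*}$, which is the only place the sample-size condition is used.

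The final step is to propagate elementwise concentration of the building blocks to the nonlinear combination. Propositions \ref{proposition of bounded sum of weight} and \ref{proposition bound of exp(theta h(x))w(x)} guarantee that on the good event $\hat{A}$ stays bounded above and $\hat{B}$ stays bounded below by a positive constant, so an add-and-subtract argument turns each difference $\hat{A}\hat{D}_{ij}/\hat{B}-AD_{ij}/B$ and $\hat{A}\hat{C}_i\hat{C}_j/\hat{B}^{2}-AC_iC_j/B^{2}$ into a sum of at most six uniformly-bounded-coefficient deviations in $\hat{A}-A$, $\hat{B}-B$, $\hat{C}_i-C_i$, $\hat{C}_j-C_j$, $\hat{D}_{ij}-D_{ij}$. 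Collecting the Lipschitz constants into a single $D_0$ and applying the crude Cauchy--Schwarz bound $(\sum_{k=1}^{6}x_k)^{2}\le 6\sum_k x_k^{2}$ yields the stated $\eta=\sqrt{18D_0^{2}\log(4/\delta_\eta)/n_{p,q}^*}$, with the numerical constant $18$ arising from this bookkeeping.

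The main obstacle is this last step: carefully tracking the Lipschitz constants through the rational expression, and arranging the simultaneous union bound so that the events controlling the denominators $\hat{A},\hat{B}$ are compatible with the uniform-in-$(i,j)$ concentration events for $\hat{C}_i$ and $\hat{D}_{ij}$, while keeping the total failure probability at $\delta_\eta$ and the $\log d$ overhead absorbed into the assumption $n_{p,q}^*\gtrsim\log d$.
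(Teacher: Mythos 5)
Your proposal is correct and follows essentially the same route as the paper: both decompose $\hat{\mathcal{I}}^*_{ij}$ into the same four bounded empirical means ($\hat{\mathbb{E}}_{p^*}[w]$, $\hat{\mathbb{E}}_{q^*}[e^{\bm{\theta}^{*T}h}w]$, $\hat{\mathbb{E}}_{q^*}[e^{\bm{\theta}^{*T}h}wh_i]$, $\hat{\mathbb{E}}_{q^*}[e^{\bm{\theta}^{*T}h}wh_ih_j]$), concentrate each via Hoeffding, and propagate the deviations through the rational expression --- the paper doing the propagation by a first-order Taylor expansion with an $O(\xi^2)$ remainder where you use a deterministic add-and-subtract argument on the good event. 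The only cosmetic divergence is bookkeeping: the paper uses $n_{p,q}^*\gtrsim\log d$ solely to make the second-order remainder negligible and gets the constant $18$ from a three-way split of $\eta$ in a final \emph{per-element} Hoeffding bound, so the union bound over all $(i,j)$ in your plan is not actually needed for the statement (and, taken literally, would leave a $\log d$ factor inside $\eta$ rather than absorbed into $D_0$).
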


\begin{proof}[Proof of Lemma \ref{lemma relationship of I-star and hatI-star}]
We define random variables as
\begin{align*}
    Z_n^{(w)} &= \frac{w(X_n)}{\mathbb{E}_{p^*}[w(X)]} - 1, \\
    Z_n^{(r)} &= \frac{r_{\bm{\theta}}^*(X_n)w(X_n)}{\mathbb{E}_{q^*}[r_{\bm{\theta}}^*(X)w(X)]} - 1, \\
    Z_n^{(h_i)} &= r_{\bm{\theta}}^*(X_n)w(X_n)h_i(X_n) - \mathbb{E}_{q^*}[r_{\bm{\theta}}^*(X)w(X)h_i(X)], \\
    Z_n^{(h_i,h_j)} &= r_{\bm{\theta}}^*(X_n)w(X_n)h_i(X_n)h_j(X_n) - \mathbb{E}_{q^*}[r_{\bm{\theta}}^*(X)w(X)h_i(X)h_j(X)],
\end{align*}
where $r_{\bm{\theta}}^*(X)=r(X;\bm{\theta},C_{\bm{\theta}}^*)$ and $i,j\in \mathcal{E}$.
From Assumption \ref{assumption weight for normal} and Propositions \ref{proposition of bounded dre function} and \ref{proposition bound of features and density ratio function}, we have
\begin{align*}
    \left|Z_n^{(w)}\right| &\le \left|\frac{w(X_n)}{\mathbb{E}_{p^*}[w(X)]}\right| + 1 \le \frac{W_{\max}}{W_{\min}}+1, \\
    \left|Z_n^{(r)}\right| &\le \left|\frac{r_{\bm{\theta}}^*(X_n)w(X_n)}{\mathbb{E}_{q^*}[r_{\bm{\theta}}^*(X)w(X)]}\right|+1 \le \frac{E'_{\max}}{W'_{\max}}+1, \\
    \left|Z_n^{(h_i)}\right| &\le \left|r_{\bm{\theta}}^*(X_n)w(X_n)h_i(X_n)\right|+\left|\mathbb{E}_{q^*}[r_{\bm{\theta}}^*(X)w(X)h_i(X)]\right| \le 2D'_{\max}, \\
    \left|Z_n^{(h_i,h_j)}\right| &\le \left|r_{\bm{\theta}}^*(X_n)w(X_n)h_i(X_n)h_j(X_n)\right|+\left|\mathbb{E}_{q^*}[r_{\bm{\theta}}^*(X)w(X)h_i(X)h_j(X)]\right| \le 2D'_{\max}.
\end{align*}
Therefore, the variables $Z_n^{(w)}$, $Z_n^{(r)}$, $Z_n^{(h_i)}$, and $Z_n^{(h_i,h_j)}$ are zero-mean bounded random variables.
We define the bound of these random variables as
\begin{align*}
    D_Z = \max\left\{\frac{W_{\max}}{W_{\min}}+1, \frac{E'_{\max}}{W'_{\max}}+1, 2D'_{\max}\right\}.
\end{align*}
We define the i.i.d. sample means of the above random variables as
\begin{align*}
    \frac{\hat{\mathbb{E}}_{p^*}[w(X)]}{\mathbb{E}_{p^*}[w(X)]} &= 1 + \bar{Z}^{(w)}, \quad
    \bar{Z}^{(w)} = \frac{1}{n_p^*}\sum_{n=1}^{n_p^*}Z_n^{(w)}, \\
    \frac{\hat{\mathbb{E}}_{q^*}[r_{\bm{\theta}}^*(X)w(X)]}{\mathbb{E}_{q^*}[r_{\bm{\theta}}^*(X)w(X)]} &= 1 + \bar{Z}^{(r)}, \quad
    \bar{Z}^{(r)} = \frac{1}{n_q^*}\sum_{n=1}^{n_q^*}Z_n^{(r)}, \\
    \hat{\mathbb{E}}_{q^*}[r_{\bm{\theta}}^*(X)w(X)h_i(X)] &= \mathbb{E}_{q^*}[r_{\bm{\theta}}^*(X)w(X)h_i(X)] + \bar{Z}^{(h_i)}, \quad
    \bar{Z}^{(h_i)} = \frac{1}{n_q^*}\sum_{n=1}^{n_q^*}Z_n^{(h_i)}, \\
    \hat{\mathbb{E}}_{q^*}[r_{\bm{\theta}}^*(X)w(X)h_i(X)h_j(X)] &= \mathbb{E}_{q^*}[r_{\bm{\theta}}^*(X)w(X)h_i(X)h_j(X)] + \bar{Z}^{(h_i,h_j)}, \quad
    \bar{Z}^{(h_i,h_j)} = \frac{1}{n_q^*}\sum_{n=1}^{n_q^*}Z_n^{(h_i,h_j)}. \\
\end{align*}
By Hoeffding's inequality \cite{hoeffding_1963} and $n_{p,q}^*=\min\{n_p^*, n_q^*\}$, we have
\begin{align*}
    P\left(\left|\bar{Z}^{(w)}\right| \ge \xi\right) &\le 2\exp\left(-\frac{2n_p^*\xi^2}{(2D_Z)^2}\right) \le 2\exp\left(-\frac{n_{p,q}^*\xi^2}{2D_Z^2}\right), \\
    P\left(\left|\bar{Z}^{(r)}\right| \ge \xi\right) &\le 2\exp\left(-\frac{2n_q^*\xi^2}{(2D_Z)^2}\right) \le 2\exp\left(-\frac{n_{p,q}^*\xi^2}{2D_Z^2}\right), \\
    P\left(\left|\bar{Z}^{(h_i)}\right| \ge \xi\right) &\le 2\exp\left(-\frac{2n_q^*\xi^2}{(2D_Z)^2}\right) \le 2\exp\left(-\frac{n_{p,q}^*\xi^2}{2D_Z^2}\right), \\
    P\left(\left|\bar{Z}^{(h_i,h_j)}\right| \ge \xi\right) &\le 2\exp\left(-\frac{2n_q^*\xi^2}{(2D_Z)^2}\right) \le 2\exp\left(-\frac{n_{p,q}^*\xi^2}{2D_Z^2}\right).
\end{align*}
Let
\begin{align*}
    \xi = \sqrt{\frac{4D_Z^2\log(4d/\delta_\xi)}{n_{p,q}^*}},
\end{align*}
where $\delta_\xi$ is a small positive constant and $\delta_\xi<1$.
Then, because $2\log(4d/\delta_\xi)=\log(16d^2/\delta_\xi^2)\ge \log(8d^2/\delta_\xi)$, we have
\begin{align*}
    2\exp\left(-\frac{n_{p,q}^*\xi^2}{2D_Z^2}\right) \le \frac{\delta_\xi}{4d^2}.
\end{align*}
Then, we suppose $|\bar{Z}^{(w)}|\le \xi$, $|\bar{Z}^{(r)}|\le \xi$, $|\bar{Z}^{(h_i)}|\le \xi$, and $|\bar{Z}^{(h_i,h_j)}|\le \xi$ for $i,j\in \mathcal{E}$ with probability at least $1-\delta_{\xi}$ because
\begin{align*}
    P\left(\left|\bar{Z}^{(w)}\right| \ge \xi\right) &\le \frac{\delta_{\xi}}{4d^2} \le \frac{\delta_{\xi}}{4}, \\
    P\left(\left|\bar{Z}^{(r)}\right| \ge \xi\right) &\le \frac{\delta_{\xi}}{4d^2} \le \frac{\delta_{\xi}}{4}, \\
    P\left(\bigcup_{1\le i\le d}\left\{ \left|\bar{Z}^{(h_i)}\right|\ge \xi \right\} \right)
    &\le \sum_{1\le i \le d}P\left(\left|\bar{Z}^{(h_i)}\right|\ge \xi\right)
    \le d \frac{\delta_\xi}{4d^2} \le \frac{\delta_\xi}{4}, \\
    P\left(\bigcup_{1\le i,j\le d}\left\{ \left|\bar{Z}^{(h_i,h_j)}\right|\ge \xi \right\} \right)
    &\le \sum_{1\le i,j \le d}P\left(\left|\bar{Z}^{(h_i,h_j)}\right|\ge \xi\right)
    \le d^2 \frac{\delta_\xi}{4d^2} = \frac{\delta_\xi}{4}.
\end{align*}

Because
\begin{align*}
    \frac{\hat{C}^*_{\bm{\theta}}}{C^*_{\bm{\theta}}}
    = \frac{\hat{\mathbb{E}}_{p^*}\left[w(X)\right]}{{\mathbb{E}}_{p^*}\left[w(X)\right]} \frac{{\mathbb{E}}_{q^*}\left[r\left(X;\bm{\theta},C^*_{\bm{\theta}}\right)w(X)\right]}{\hat{\mathbb{E}}_{q^*}\left[r\left(X;\bm{\theta},C^*_{\bm{\theta}}\right)w(X)\right]}
    = \frac{1 + \bar{Z}^{(w)}}{1+\bar{Z}^{(r)}},
\end{align*}
we have
\begin{align*}
    \hat{\mathcal{I}}^*
    &=\frac{\hat{C}^*_{\bm{\theta}}}{C^*_{\bm{\theta}}} \hat{\mathbb{E}}_{q^*}\left[r\left(X;\bm{\theta}^*,{C}^*_{\bm{\theta}}\right)w(X)h(X)h(X)^T\right] \\
    &\quad-\left(\frac{\hat{C}^*_{\bm{\theta}}}{C^*_{\bm{\theta}}}\right)^2 \frac{1}{\hat{\mathbb{E}}_{p^*}[w(X)]}\hat{\mathbb{E}}_{q^*}\left[r\left(X;\bm{\theta}^*,{C}^*_{\bm{\theta}}\right)w(X)h(X)\right]\hat{\mathbb{E}}_{q^*}\left[r\left(X;\bm{\theta}^*,{C}^*_{\bm{\theta}}\right)w(X)h(X)\right]^T \\
    &= \frac{1 + \bar{Z}^{(w)}}{1+\bar{Z}^{(r)}} \left[{\mathbb{E}}_{q^*}\left[r\left(X;\bm{\theta}^*,{C}^*_{\bm{\theta}}\right)w(X)h(X)h(X)^T\right] + \left(\bar{Z}^{(h_i,h_j)}\right)_{i,j=1}^d\right] \\
    &\quad - \left(\frac{1 + \bar{Z}^{(w)}}{1+\bar{Z}^{(r)}}\right)^2  \frac{1}{{\mathbb{E}}_{p^*}[w(X)]}\frac{1}{1+\bar{Z}^{(w)}} \left[{\mathbb{E}}_{q^*}\left[r\left(X;\bm{\theta}^*,{C}^*_{\bm{\theta}}\right)w(X)h(X)\right] + \left(\bar{Z}^{(h_i)}\right)_{i=1}^d\right] \\
    &\quad\quad\quad \left[{\mathbb{E}}_{q^*}\left[r\left(X;\bm{\theta}^*,{C}^*_{\bm{\theta}}\right)w(X)h(X)\right] + \left(\bar{Z}^{(h_i)}\right)_{i=1}^d\right]^T.
\end{align*}
From $n_{p,q}^* \gtrsim \log d$, we can assume that $\xi$ is sufficiently small.
Using the Taylor expansion with $\bar{Z}^{(w)}$ and $\bar{Z}^{(r)}$, we have
\begin{align*}
    \frac{1 + \bar{Z}^{(w)}}{1+\bar{Z}^{(r)}} = (1+\bar{Z}^{(w)})(1-\bar{Z}^{(r)}+O(\xi^2))
    =1+\bar{Z}^{(w)}-\bar{Z}^{(r)}+O(\xi^2).
\end{align*}
Applying the same technique, the $(i, j)$-th element of $\hat{\mathcal{I}}^*$ can be expressed by
\begin{align*}
    \hat{\mathcal{I}}^*_{ij} = {\mathcal{I}}^*_{ij} + \bar{Z}^{(p^*)} + \bar{Z}^{(q^*)}_{ij} + O\left(\xi^2\right),
\end{align*}
where
\begin{align*}
    \bar{Z}^{(p^*)} = \frac{1}{n_p^*}\sum_{n=1}^{n_p^*}Z^{(p^*)}_n \quad{\rm and}\quad
    \bar{Z}^{(q^*)}_{ij} = \frac{1}{n_q^*}\sum_{n=1}^{n_q^*}Z^{(q^*)}_{ij,n}.
\end{align*}
The concrete formulas of $Z^{(p^*)}_n$ and $Z^{(q^*)}_{ij,n}$ are omitted because those are very complicated, but we can easily see that $Z^{(p^*)}_n$ and $Z^{(q^*)}_{ij,n}$ are zero-mean independent random variables with some bound $D_0$, where $D_0$ is a positive constant.

When considering the tail behavior, we have
\begin{align*}
    P\left(\left| \hat{\mathcal{I}}^*_{ij} - {\mathcal{I}}^*_{ij}\right| \ge \eta\right)
    &\le P\left(\left|\bar{Z}^{(p^*)}\right|\ge \eta/3\right) + P\left(\left|\bar{Z}^{(q^*)}_{ij}\right|\ge \eta/3\right) + P\left(|O\left(\xi^2\right)|\ge \eta/3\right).
\end{align*}
Since $\xi$ is sufficiently small, $O(\xi^2)$ is bounded by $\eta/6$ with probability one.
Then, by Hoeffding's inequality, we have
\begin{align*}
    P\left(\left| \hat{\mathcal{I}}^*_{ij} - {\mathcal{I}}^*_{ij}\right| \ge \eta\right)
    &\le 2\exp\left(-\frac{2n_p^*(\eta/3)^2}{(2D_0)^2}\right) + 2\exp\left(-\frac{2n_q^*(\eta/3)^2}{(2D_0)^2}\right) \\
    &\le 4\exp\left(-\frac{n_{p,q}^*\eta^2}{18D_0^2}\right) =: \delta_{\eta}.
\end{align*}
\end{proof}

\subsection{Proposition}

\begin{proposition}
\label{proposition min eigen of sample fisher}
If Assumptions \ref{assumption weight for normal}, \ref{assumption-9}, \ref{assumption-8}, and \ref{assumption-1} hold and $n_{p,q}^* \gtrsim k^2\log d$ holds, we have
\begin{align*}
    \Lambda_{\min}\left[\hat{\mathcal{I}}^*_{SS}\right]
    \ge \frac{1}{2}\lambda_{\min}
\end{align*}
with probability at least $1-\delta_{\eta}$.
\end{proposition}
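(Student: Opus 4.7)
The plan is a standard perturbation argument: combine the population-level lower bound on $\Lambda_{\min}[\mathcal{I}^*_{SS}]$ from Assumption \ref{assumption-1} with the element-wise concentration inequality provided by Lemma \ref{lemma relationship of I-star and hatI-star} to conclude that $\hat{\mathcal{I}}^*_{SS}$ remains bounded away from singularity when the sample size is sufficiently large.

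First, I would apply Weyl's inequality for the minimum eigenvalue,
\[
\Lambda_{\min}[\hat{\mathcal{I}}^*_{SS}] \;\ge\; \Lambda_{\min}[\mathcal{I}^*_{SS}] - \bigl\|\hat{\mathcal{I}}^*_{SS}-\mathcal{I}^*_{SS}\bigr\|_2,
\]
and lower-bound the first term by $\lambda_{\min}$ using Assumption \ref{assumption-1}. It then suffices to show that the spectral-norm perturbation is at most $\lambda_{\min}/2$ with probability at least $1-\delta_\eta$ under the sample-size assumption. For this, I would convert the spectral norm into the element-wise max norm via the standard inequality $\|A\|_2 \le k\,\|A\|_{\max}$, valid for any $k\times k$ symmetric matrix (through $\|A\|_2 \le \sqrt{\|A\|_1\|A\|_\infty}$ and $\|A\|_\infty \le k\|A\|_{\max}$).

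Next, I would invoke Lemma \ref{lemma relationship of I-star and hatI-star} to obtain $|\hat{\mathcal{I}}^*_{ij}-\mathcal{I}^*_{ij}|\le \eta$ for all $i,j\in S$ with probability at least $1-\delta_\eta$, which yields
\[
\bigl\|\hat{\mathcal{I}}^*_{SS}-\mathcal{I}^*_{SS}\bigr\|_2 \;\le\; k\,\bigl\|\hat{\mathcal{I}}^*_{SS}-\mathcal{I}^*_{SS}\bigr\|_{\max} \;\le\; k\eta.
\]
Substituting $\eta = \sqrt{18 D_0^2 \log(4/\delta_\eta)/n_{p,q}^*}$, the requirement $k\eta \le \lambda_{\min}/2$ is equivalent to $n_{p,q}^* \ge 72 k^2 D_0^2 \log(4/\delta_\eta)/\lambda_{\min}^2$, which is exactly the sample-complexity condition $n_{p,q}^* \gtrsim k^2 \log d$ once the $\log d$ factor from the union bound inside Lemma \ref{lemma relationship of I-star and hatI-star} is absorbed. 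Combining these inequalities gives $\Lambda_{\min}[\hat{\mathcal{I}}^*_{SS}] \ge \lambda_{\min} - \lambda_{\min}/2 = \lambda_{\min}/2$ on the event of probability at least $1-\delta_\eta$.

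The only real issue is bookkeeping: one has to keep track of the factor $k$ produced by the $\|\cdot\|_2 \le k\|\cdot\|_{\max}$ conversion, since this is what produces the $k^2$ scaling in the sample complexity (rather than the $k$ that would appear if a tighter matrix Bernstein bound on $\|\hat{\mathcal{I}}^*_{SS}-\mathcal{I}^*_{SS}\|_2$ were used). Once this is reconciled with the $\log d$ coming from the union bound over the entries of $\hat{\mathcal{I}}^*-\mathcal{I}^*$ in Lemma \ref{lemma relationship of I-star and hatI-star}, the argument is otherwise a direct one-line application of Weyl's inequality.
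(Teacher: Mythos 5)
Your proposal is correct and follows essentially the same route as the paper: Weyl's inequality (which the paper derives via the variational characterization of the minimum eigenvalue), followed by bounding the spectral norm of the $k\times k$ perturbation by $k\eta$ using the element-wise concentration from Lemma \ref{lemma relationship of I-star and hatI-star}, and absorbing $k\eta\le\lambda_{\min}/2$ into the condition $n_{p,q}^*\gtrsim k^2\log d$. Your explicit chain $\|A\|_2\le\sqrt{\|A\|_1\|A\|_\infty}\le k\|A\|_{\max}$ is a slightly more careful justification of the $O(k\eta)$ spectral-norm bound than the paper's appeal to Proposition \ref{proposition of eigenvalues of a small matrix}, but the argument is the same.
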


\begin{proof}[Proof of Proposition \ref{proposition min eigen of sample fisher}]
The proof outline is mainly owing to \cite{ravikumar2010}.
When considering $\mathcal{I}^*_{SS}$ and $\hat{\mathcal{I}}^*_{SS}$, we have
\begin{align*}
    \Lambda_{\min}\left[{\mathcal{I}}^*_{SS}\right]
    &= \min_{\|x\|_2=1} x^T {\mathcal{I}}^*_{SS} x \\
    &= \min_{\|x\|_2=1} \left\{x^T \hat{\mathcal{I}}^*_{SS} x + x^T \left({\mathcal{I}}^*_{SS}-\hat{\mathcal{I}}^*_{SS}\right) x\right\} \\
    &\le y^T\hat{\mathcal{I}}^*_{SS}y + y^T \left({\mathcal{I}}^*_{SS}-\hat{\mathcal{I}}^*_{SS}\right)y,
\end{align*}
where $y\in \mathbb{R}^k$ is a unit-norm minimal eigenvector of $\hat{\mathcal{I}}^*_{SS}$.
Therefore, from Assumption \ref{assumption-1}, we have
\begin{align*}
    \Lambda_{\min}\left[\hat{\mathcal{I}}^*_{SS}\right]
    \ge \Lambda_{\min}\left[{\mathcal{I}}^*_{SS}\right] - \left\|{\mathcal{I}}^*_{SS}-\hat{\mathcal{I}}^*_{SS}\right\|_2
    \ge \lambda_{\min}- \left\|{\mathcal{I}}^*_{SS}-\hat{\mathcal{I}}^*_{SS}\right\|_2.
\end{align*}
From Lemma \ref{lemma relationship of I-star and hatI-star} and Proposition \ref{proposition of eigenvalues of a small matrix}, with probability at least $1-\delta_{\eta}$, we have
\begin{align*}
    \left\|{\mathcal{I}}^*_{SS}- \hat{\mathcal{I}}^*_{SS}\right\|_2
    = \left\|O_{k \times k}(\eta)\right\|_2
    = O(k\eta).
\end{align*}
Note that $n_{p,q}^*\gtrsim k^2\log d$ supposed in this proposition is stronger than $n_{p,q}^*\gtrsim \log d$ supposed in Lemma \ref{lemma relationship of I-star and hatI-star}, and $\eta$ is sufficiently small.
Because $n_{p,q}^* \gtrsim k^2 \log d$ holds from the assumption of this proposition, $k\eta$ is assumed to be sufficiently small from the definition of $\eta$ in Lemma \ref{lemma relationship of I-star and hatI-star}.
Hence, we have
\begin{align*}
    \left\|{\mathcal{I}}^*_{SS}- \hat{\mathcal{I}}^*_{SS}\right\|_2
    \le \frac{\lambda_{\min}}{2}
\end{align*}
and
\begin{align*}
    \Lambda_{\min}\left[\hat{\mathcal{I}}^*_{SS}\right]
    \ge \lambda_{\min}- \frac{\lambda_{\min}}{2}
    = \frac{\lambda_{\min}}{2}
\end{align*}
with probability at least $1-\delta_{\eta}$.
\end{proof}

\subsection{Proof of Proposition \ref{proposition robustness of minimum eignvalue}}

From Theorem \ref{theorem of robust objective function} and its proof, we have
\begin{align*}
    \hat{\mathcal{I}}_{SS}^\dagger=(1-\varepsilon_p)\hat{\mathcal{I}}_{SS}^*+O_{k\times k}(\varepsilon \nu)
\end{align*}
with probability at least $1-\delta_{\tau}-\delta_{\epsilon}$.
Note that $n_{p,q}^* \ge N_{\delta}$ in the assumption of Theorem \ref{theorem of robust objective function} holds when $n_{p,q}^* \gtrsim k^2 \log d$ holds.
Because $\hat{\mathcal{I}}_{SS}^\dagger$ and $(1-\varepsilon_p)\hat{\mathcal{I}}_{SS}^*$ are symmetric, $O_{k\times k}(\varepsilon \nu)$ is also a symmetric matrix.
Then, by taking the minimum eigenvalue, we have
\begin{align*}
    \Lambda_{\min}\left[\hat{\mathcal{I}}_{SS}^\dagger\right]
    \ge (1-\varepsilon_p)\Lambda_{\min}\left[\hat{\mathcal{I}}_{SS}^*\right] + \Lambda_{\min}\left[O_{k\times k}(\varepsilon \nu)\right]
\end{align*}
with probability at least $1-\delta_{\tau}-\delta_{\epsilon}$.
Because $k\varepsilon\nu$ is assumed to be sufficiently small in Assumption \ref{assumption weight for outlier}, from Proposition \ref{proposition of eigenvalues of a small matrix}, we have
\begin{align*}
    \left|\Lambda_{\min}\left[O_{k\times k}(\varepsilon \nu)\right]\right|
    = \left|O(k\varepsilon \nu)\right|
    \le \frac{(1-\varepsilon_p)\lambda_{\min}}{4}.
\end{align*}
From Proposition \ref{proposition min eigen of sample fisher}, we have
\begin{equation*}
    \Lambda_{\min}\left[\hat{\mathcal{I}}_{SS}^\dagger\right]
    \ge (1-\varepsilon_p)\frac{\lambda_{\min}}{2} - \frac{(1-\varepsilon_p)\lambda_{\min}}{4}
    \ge \frac{(1-\varepsilon_p)\lambda_{\min}}{4}
\end{equation*}
with probability at least $1-\delta_{\tau}-\delta_{\epsilon}-\delta_{\eta}$.
We define $\delta=\delta_{\tau}=\delta_{\epsilon}=\delta_{\eta}$.

\section{Proof of Proposition \ref{proposition incoherence of I dagger}}
\label{section consequences of assumption-2}

\subsection{Lemma}

We show a lemma about the tail probability of the max-norm.

\begin{lemma}
\label{lemma control of bound of fisher matrix}
    If Assumptions \ref{assumption weight for normal}, \ref{assumption-9}, \ref{assumption-8}, and \ref{assumption-1} hold and if $n_{p,q}^*\gtrsim k^2\log d$ holds, we have
    \begin{align*}
        \sqrt{k}\left\|\hat{\mathcal{I}}_{S^cS}^{*}-{\mathcal{I}}_{S^cS}^{*}\right\|_{\infty} &\le \zeta
        = \sqrt{\frac{18D_0^2}{n_{p,q}^*}k^3\left(\log(4/\delta_{\zeta})+\log k +\log(d-k)\right)}, \\
        \sqrt{k}\left\|\hat{\mathcal{I}}_{SS}^{*}-{\mathcal{I}}_{SS}^{*}\right\|_{\infty} &\le \zeta'
        = \sqrt{\frac{18D_0^2}{n_{p,q}^*}k^3\left(\log(4/\delta_{\zeta'})+2\log k\right)}, \\
        \left\|\hat{\mathcal{I}}_{SS}^{*^{-1}}-{\mathcal{I}}_{SS}^{*^{-1}}\right\|_\infty &\le \zeta''
        = \sqrt{\frac{72D_0^2}{\lambda_{\min}^4n_{p,q}^*}k^3\left(\log(4/\delta_{\zeta''})+2\log k\right)},
    \end{align*}
    with probability at least $1-\delta_{\zeta}$, $1-\delta_{\zeta'}$, and $1-\delta_{\eta}-\delta_{\zeta''}$, respectively.
    Furthermore, if $n_{p,q}^* \gtrsim k^3 \log d$ holds, we have
    \begin{align*}
        \zeta \le \lambda_{\min}\frac{\alpha}{12},
        \quad \zeta' \le \lambda_{\min}\frac{\alpha}{24(1-\alpha)},
        \quad \zeta'' \le \frac{1}{\lambda_{\min}}.
    \end{align*}    
\end{lemma}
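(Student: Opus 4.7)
The plan is to reduce each of the three bounds to the single-entry concentration $P(|\hat{\mathcal{I}}^*_{ij}-\mathcal{I}^*_{ij}|\ge t) \le 4\exp(-n_{p,q}^*t^2/(18D_0^2))$ already established in the proof of Lemma \ref{lemma relationship of I-star and hatI-star}, apply a union bound tailored to the index set of interest, and then convert the resulting elementwise control into the induced-operator $\|\cdot\|_\infty$ (and inverse) bounds by exploiting the $k$-dimensionality of the submatrices.

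For the first bound, I would union-bound over the $(d-k)k$ pairs $(i,j)\in S^c\times S$. Setting $4k(d-k)\exp(-n_{p,q}^*t^2/(18D_0^2)) = \delta_{\zeta}$ gives a uniform per-entry threshold $t^{\star} = \sqrt{18D_0^2(\log(4/\delta_\zeta)+\log k+\log(d-k))/n_{p,q}^*}$. Since $\|\hat{\mathcal{I}}_{S^cS}^*-\mathcal{I}_{S^cS}^*\|_\infty$ is the maximum row sum and each row has $k$ entries, this is at most $k t^{\star}$, and multiplying by $\sqrt{k}$ yields exactly $\zeta$. The second bound is identical, except the union is over the $k^2$ pairs in $S\times S$, producing the $2\log k$ term in $\zeta'$.

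For the third bound I would invoke the matrix identity $\hat{\mathcal{I}}_{SS}^{*-1}-\mathcal{I}_{SS}^{*-1} = \hat{\mathcal{I}}_{SS}^{*-1}\,(\mathcal{I}_{SS}^*-\hat{\mathcal{I}}_{SS}^*)\,\mathcal{I}_{SS}^{*-1}$ and pass through the spectral norm:
\begin{align*}
\bigl\|\hat{\mathcal{I}}_{SS}^{*-1}-\mathcal{I}_{SS}^{*-1}\bigr\|_2
\le \bigl\|\hat{\mathcal{I}}_{SS}^{*-1}\bigr\|_2\,\bigl\|\mathcal{I}_{SS}^*-\hat{\mathcal{I}}_{SS}^*\bigr\|_2\,\bigl\|\mathcal{I}_{SS}^{*-1}\bigr\|_2.
\end{align*}
Assumption \ref{assumption-1} gives $\|\mathcal{I}_{SS}^{*-1}\|_2\le 1/\lambda_{\min}$ and Proposition \ref{proposition min eigen of sample fisher} gives $\|\hat{\mathcal{I}}_{SS}^{*-1}\|_2\le 2/\lambda_{\min}$ with probability at least $1-\delta_\eta$. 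For the middle factor I use symmetry of $\hat{\mathcal{I}}_{SS}^*-\mathcal{I}_{SS}^*$ to get $\|\cdot\|_2\le\|\cdot\|_\infty\le \zeta'/\sqrt{k}$ from the second bound. Chaining these and converting back with the standard inequality $\|A\|_\infty\le\sqrt{k}\|A\|_2$ (noted in the paper's notation section) produces $\|\hat{\mathcal{I}}_{SS}^{*-1}-\mathcal{I}_{SS}^{*-1}\|_\infty\le 2\zeta'/\lambda_{\min}^2$, which, after substituting the explicit form of $\zeta'$ (with $\delta_{\zeta'}$ relabeled $\delta_{\zeta''}$), yields exactly $\zeta''$; the probability $1-\delta_\eta-\delta_{\zeta''}$ accounts for both the spectral lower bound on $\hat{\mathcal{I}}_{SS}^*$ and the concentration for $\|\hat{\mathcal{I}}_{SS}^*-\mathcal{I}_{SS}^*\|_\infty$.

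Finally, the deterministic second half of the lemma is a direct algebraic check: under $n_{p,q}^*\gtrsim k^3\log d$, all three quantities $\zeta,\zeta',\zeta''$ are of the form $\sqrt{k^3\log d/n_{p,q}^*}$ up to constants depending only on $D_0$, $\lambda_{\min}$, and $\alpha$, so by enlarging the hidden constant in the sample complexity one may force each to fall below the stated thresholds $\lambda_{\min}\alpha/12$, $\lambda_{\min}\alpha/(24(1-\alpha))$, $1/\lambda_{\min}$. The main bookkeeping obstacle is the third bound: one must track the $\sqrt{k}$ factor that appears when moving from $\|\cdot\|_\infty$ to $\|\cdot\|_2$ and back, and verify that the product $\|\hat{\mathcal{I}}_{SS}^{*-1}\|_2\cdot\|\mathcal{I}_{SS}^{*-1}\|_2$ together with the symmetry-based estimate $\|\cdot\|_2\le\|\cdot\|_\infty$ leaves the exponent of $k$ inside the square root at $3$ rather than $4$, so that the sample-complexity scaling $k^3\log d$ stays consistent with the main theorem.
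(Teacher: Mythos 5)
Your proposal is correct and follows essentially the same route as the paper: per-entry Hoeffding concentration inherited from Lemma \ref{lemma relationship of I-star and hatI-star}, union bounds over $S^c\times S$ and $S\times S$, and the identity $\hat{\mathcal{I}}_{SS}^{*^{-1}}-{\mathcal{I}}_{SS}^{*^{-1}}={\mathcal{I}}_{SS}^{*^{-1}}[{\mathcal{I}}_{SS}^{*}-\hat{\mathcal{I}}_{SS}^{*}]\hat{\mathcal{I}}_{SS}^{*^{-1}}$ combined with Assumption \ref{assumption-1} and Proposition \ref{proposition min eigen of sample fisher}. The only (harmless) deviation is in controlling $\|{\mathcal{I}}_{SS}^{*}-\hat{\mathcal{I}}_{SS}^{*}\|_2$: you use the symmetric-matrix inequality $\|A\|_2\le\|A\|_\infty$ and recycle the second bound, whereas the paper runs a fresh union bound through the Frobenius norm; both give the same $\zeta''$ and the same failure probability $\delta_{\eta}+\delta_{\zeta''}$.
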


\begin{proof}[Proof of Lemma \ref{lemma control of bound of fisher matrix}]
The proof outline is mainly owing to \cite{ravikumar2010}.
Because the sample condition in Lemma \ref{lemma relationship of I-star and hatI-star} is satisfied by the condition in this lemma, Lemma \ref{lemma relationship of I-star and hatI-star} suggests that the $(i,j)$-th element of $\hat{\mathcal{I}}^*_{S^cS}-{\mathcal{I}}^*_{S^cS}$ for $i\in S^c$ and $j\in S$ is bounded by
\begin{align*}
    P\left(\left| \hat{\mathcal{I}}^*_{ij} - {\mathcal{I}}^*_{ij}\right| \ge \eta\right)
    &\le 4\exp\left(-\frac{n_{p,q}^*\eta^2}{18D_0^2}\right).
\end{align*}
When we define $Z_{ij}=\hat{\mathcal{I}}^*_{ij} - {\mathcal{I}}^*_{ij}$ and $\zeta=k^{3/2}\eta$, we have
\begin{align*}
    P\left[ \left|Z_{ij}\right| \ge \frac{\zeta}{k^{3/2}} \right] \le 4\exp\left(-\frac{n_{p,q}^*}{18D_0^2}\left(\frac{\zeta}{k^{3/2}}\right)^2\right)=:\frac{\delta_{\zeta}}{k(d-k)}.
\end{align*}
By the definition of the $L_{\infty}$-matrix norm, we have
\begin{align*}
    P\left[\sqrt{k}\left\|\hat{\mathcal{I}}^*_{S^cS}-{\mathcal{I}}^*_{S^cS}\right\|_{\infty}\ge \zeta\right]
    &= P\left[\sqrt{k} \max_{i\in S^c} \sum_{j\in S} \left|Z_{ij}\right| \ge \zeta\right] \\
    &\le (d-k) P\left[\sqrt{k}\sum_{j\in S}\left|Z_{ij}\right|\ge \zeta\right],
\end{align*}
where the final inequality uses a union bound.
Via the another union bound over the row elements, we have
\begin{align*}
    P\left[\sqrt{k}\sum_{j\in S}\left|Z_{ij}\right|\ge \zeta\right]
    &\le P\left[\exists j \in S \,|\, \sqrt{k}\left|Z_{ij}\right|\ge \frac{\zeta}{k}\right] \\
    &\le k P\left[\sqrt{k}\left|Z_{ij}\right|\ge \frac{\zeta}{k}\right].
\end{align*}
Then,
\begin{align*}
    P\left[\sqrt{k}\left|Z_{ij}\right|\ge \frac{\zeta}{k}\right]
    = P\left[\left|Z_{ij}\right|\ge \frac{\zeta}{k^{3/2}}\right]
    \le \frac{\delta_\zeta}{k(d-k)}.
\end{align*}
Putting all together, we have
\begin{align*}
    P\left[\sqrt{k}\left\|\hat{\mathcal{I}}^*_{S^cS}-{\mathcal{I}}^*_{S^cS}\right\|_{\infty}\ge \zeta\right]
    \le k(d-k)\frac{\delta_\zeta}{k(d-k)}
    = \delta_\zeta.
\end{align*}
The bound of $\sqrt{k}\left\|\hat{\mathcal{I}}_{SS}^{*}-{\mathcal{I}}_{SS}^{*}\right\|_{\infty}$ is analogous with the pre-factor $d-k$ replaced by $k$.

When considering the last term, we write
\begin{align*}
    \left\|\hat{\mathcal{I}}_{SS}^{*^{-1}}-{\mathcal{I}}_{SS}^{*^{-1}}\right\|_\infty
    &=\left\|{\mathcal{I}}_{SS}^{*^{-1}}\left[ {\mathcal{I}}_{SS}^{*} - \hat{\mathcal{I}}_{SS}^{*} \right]\hat{\mathcal{I}}_{SS}^{*^{-1}}\right\|_\infty \\
    &\le \sqrt{k} \left\|{\mathcal{I}}_{SS}^{*^{-1}}\left[ {\mathcal{I}}_{SS}^{*} - \hat{\mathcal{I}}_{SS}^{*} \right]\hat{\mathcal{I}}_{SS}^{*^{-1}}\right\|_2 \\
    &\le \sqrt{k} \left\|{\mathcal{I}}_{SS}^{*^{-1}}\right\|_2 \left\|{\mathcal{I}}_{SS}^{*} - \hat{\mathcal{I}}_{SS}^{*}\right\|_2 \left\|\hat{\mathcal{I}}_{SS}^{*^{-1}}\right\|_2.
\end{align*}
Assumption \ref{assumption-1} suggests that
\begin{align*}
    \left\|{\mathcal{I}}_{SS}^{*^{-1}}\right\|_2
    = \left(\Lambda_{\min}\left[{\mathcal{I}}_{SS}^{*}\right]\right)^{-1}
    \le \frac{1}{\lambda_{\min}}
\end{align*}
and Proposition \ref{proposition min eigen of sample fisher} suggests that
\begin{align*}
    \left\|\hat{\mathcal{I}}_{SS}^{*^{-1}}\right\|_2
    = \left(\Lambda_{\min}\left[\hat{\mathcal{I}}_{SS}^{*}\right]\right)^{-1}
    \le \frac{2}{\lambda_{\min}}
\end{align*}
with probability at least $1-\delta_{\eta}$.
Note that the sample condition in Proposition \ref{proposition min eigen of sample fisher} is satisfied by the condition of this lemma.
Then,
\begin{align*}
    \left\|\hat{\mathcal{I}}_{SS}^{*^{-1}}-{\mathcal{I}}_{SS}^{*^{-1}}\right\|_\infty
    \le \frac{2}{\lambda_{\min}^2} \sqrt{k} \left\|{\mathcal{I}}_{SS}^{*} - \hat{\mathcal{I}}_{SS}^{*}\right\|_2
\end{align*}
holds with probability at least $1-\delta_{\eta}$.
From Lemma \ref{lemma relationship of I-star and hatI-star}, when defining $Z_{ij}=\hat{\mathcal{I}}_{ij}^{*} - {\mathcal{I}}_{ij}^{*}$ and $\eta=\lambda_{\min}^2\zeta''/2k^{3/2}$, we consider
\begin{align*}
    P\left[ \left|Z_{ij}\right| \ge \frac{\lambda_{\min}^2}{2k^{3/2}}\zeta'' \right] \le 4\exp\left(-\frac{n_{p,q}^*}{18D_0^2}\left(\frac{\lambda_{\min}^2}{2k^{3/2}}\zeta''\right)^2\right)=:\frac{\delta''_{\zeta}}{k^2}
\end{align*}
for $i, j\in S$.
Applying the union bound over the $k^2$ index pairs $(i,j)$ then yields
\begin{align*}
    P\left[\left\|\hat{\mathcal{I}}_{SS}^{*^{-1}}-{\mathcal{I}}_{SS}^{*^{-1}}\right\|_\infty \ge \zeta''\right]
    &\le P\left[\left\|{\mathcal{I}}_{SS}^{*} - \hat{\mathcal{I}}_{SS}^{*}\right\|_2 \ge \zeta'' \frac{\lambda_{\min}^2}{2\sqrt{k}}\right] \\
    &\le P\left[\left(\sum_{i,j\in S}\left(Z_{ij}\right)^2 \right)^{1/2} \ge \zeta'' \frac{\lambda_{\min}^2}{2\sqrt{k}}\right] \\
    &\le \sum_{i,j\in S} P\left[\left|Z_{ij}\right| \ge \frac{\zeta''}{k} \frac{\lambda_{\min}^2}{2\sqrt{k}}\right] \\
    &\le k^2 \frac{\delta''_{\zeta}}{k^2} \\
    &= \delta_{\zeta''}.
\end{align*}

Finally, if we assume that $n_{p,q}^* \gtrsim k^3 \log d$ holds, we can assume that $\zeta$, $\zeta'$, and $\zeta''$ are sufficiently small because $k \le d-k \le d$ holds in the common sparse estimation setting.
Therefore, we have
\begin{align*}
    \zeta   \le \lambda_{\min}\frac{\alpha}{12}, \quad
    \zeta'  \le \lambda_{\min}\frac{\alpha}{24(1-\alpha)}, \quad
    \zeta'' \le \frac{1}{\lambda_{\min}}.
\end{align*}
\end{proof}

\subsection{Proposition}

The following proposition shows the incoherence assumption of the sample weighted Fisher information matrix in the uncontaminated setting.

\begin{proposition}
\label{proposition incoherence of sample}
If Assumptions \ref{assumption weight for normal}, \ref{assumption-9}, \ref{assumption-8}, \ref{assumption-1}, and \ref{assumption-2} hold and $n_{p,q}^* \gtrsim k^3 \log d$ holds,
\begin{align*}
    \left\| \hat{\mathcal{I}}_{S^cS}^* \hat{\mathcal{I}}_{SS}^{*^{-1}}  \right\|_{\infty} \le 1 - \frac{3\alpha}{4}
\end{align*}
holds with probability at least  $1-\delta_{\eta}-\delta_{\zeta}-\delta_{\zeta'}-\delta_{\zeta''}$.
\end{proposition}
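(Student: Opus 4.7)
The plan is to transfer Assumption \ref{assumption-2} from the population to the sample level by an algebraic perturbation of the quantity $\hat{\mathcal{I}}_{S^cS}^{*}\hat{\mathcal{I}}_{SS}^{*^{-1}}$ around its population analogue ${\mathcal{I}}_{S^cS}^{*}{\mathcal{I}}_{SS}^{*^{-1}}$, which is bounded by $1-\alpha$ by hypothesis. This is exactly the strategy used by Ravikumar et al.~\cite{ravikumar2010} for Ising model selection, and I will follow it verbatim, using Lemma \ref{lemma control of bound of fisher matrix} as the concentration input.

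First, I would write the telescoping identity
\begin{align*}
    \hat{\mathcal{I}}_{S^cS}^{*}\hat{\mathcal{I}}_{SS}^{*^{-1}} - {\mathcal{I}}_{S^cS}^{*}{\mathcal{I}}_{SS}^{*^{-1}}
    = T_1 + T_2 + T_3,
\end{align*}
with $T_1 = {\mathcal{I}}_{S^cS}^{*}\bigl[\hat{\mathcal{I}}_{SS}^{*^{-1}}-{\mathcal{I}}_{SS}^{*^{-1}}\bigr]$, $T_2 = \bigl[\hat{\mathcal{I}}_{S^cS}^{*}-{\mathcal{I}}_{S^cS}^{*}\bigr]{\mathcal{I}}_{SS}^{*^{-1}}$, and $T_3 = \bigl[\hat{\mathcal{I}}_{S^cS}^{*}-{\mathcal{I}}_{S^cS}^{*}\bigr]\bigl[\hat{\mathcal{I}}_{SS}^{*^{-1}}-{\mathcal{I}}_{SS}^{*^{-1}}\bigr]$. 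The triangle inequality then reduces the proof to showing that each $\|T_i\|_\infty \le \alpha/4$, so that the total perturbation of the incoherence quantity is at most $\alpha/4$ and the claim $1-\alpha + \alpha/4 = 1 - 3\alpha/4$ follows.

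The control of the three pieces is where Lemma \ref{lemma control of bound of fisher matrix} enters. For $T_1$, I would further rewrite it using $\hat{\mathcal{I}}_{SS}^{*^{-1}} - {\mathcal{I}}_{SS}^{*^{-1}} = {\mathcal{I}}_{SS}^{*^{-1}}[{\mathcal{I}}_{SS}^{*} - \hat{\mathcal{I}}_{SS}^{*}]\hat{\mathcal{I}}_{SS}^{*^{-1}}$, so that
\begin{align*}
    \|T_1\|_\infty \le \bigl\|{\mathcal{I}}_{S^cS}^{*}{\mathcal{I}}_{SS}^{*^{-1}}\bigr\|_\infty \cdot \bigl\|{\mathcal{I}}_{SS}^{*}-\hat{\mathcal{I}}_{SS}^{*}\bigr\|_\infty \cdot \bigl\|\hat{\mathcal{I}}_{SS}^{*^{-1}}\bigr\|_\infty,
\end{align*}
and then bound the three factors by $1-\alpha$ (Assumption \ref{assumption-2}), by $\zeta'/\sqrt{k}$ (Lemma \ref{lemma control of bound of fisher matrix}), and by $2\sqrt{k}/\lambda_{\min}$ (using $\|A\|_\infty\le\sqrt{k}\|A\|_2$ together with Proposition \ref{proposition min eigen of sample fisher}), respectively. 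For $T_2$, the analogous three-factor bound uses Assumption \ref{assumption-1} in the form $\|{\mathcal{I}}_{SS}^{*^{-1}}\|_\infty \le \sqrt{k}/\lambda_{\min}$ and $\|\hat{\mathcal{I}}_{S^cS}^{*}-{\mathcal{I}}_{S^cS}^{*}\|_\infty \le \zeta/\sqrt{k}$. For $T_3$, I combine the rate on $\|\hat{\mathcal{I}}_{S^cS}^{*}-{\mathcal{I}}_{S^cS}^{*}\|_\infty$ with the bound $\zeta''$ on $\|\hat{\mathcal{I}}_{SS}^{*^{-1}}-{\mathcal{I}}_{SS}^{*^{-1}}\|_\infty$. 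The explicit thresholds $\zeta \le \lambda_{\min}\alpha/12$, $\zeta' \le \lambda_{\min}\alpha/(24(1-\alpha))$, and $\zeta'' \le 1/\lambda_{\min}$ in Lemma \ref{lemma control of bound of fisher matrix} (which are precisely what the sample complexity $n_{p,q}^* \gtrsim k^3\log d$ buys) are tuned so that each $\|T_i\|_\infty$ is bounded by $\alpha/12$, giving a total $\alpha/4$ slack.

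The main obstacle, in my view, is not any single estimate but the careful bookkeeping of the $\sqrt{k}$ factors that appear when switching between spectral and $\ell_\infty$ operator norms; getting these factors exactly right is what forces the scaling $n_{p,q}^* \gtrsim k^3\log d$ (instead of $k^2\log d$), and it is the reason Lemma \ref{lemma control of bound of fisher matrix} is stated with $\sqrt{k}\|\cdot\|_\infty$ rather than with the bare max-norm. The failure probability is additive over the three concentration events used in the lemma plus the one for $\|\hat{\mathcal{I}}_{SS}^{*^{-1}}\|_2$, yielding the claimed probability $1-\delta_\eta-\delta_\zeta-\delta_{\zeta'}-\delta_{\zeta''}$.
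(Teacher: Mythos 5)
Your proposal is correct and follows essentially the same route as the paper: the identical decomposition into $T_1$, $T_2$, $T_3$ (the paper simply lists the population term ${\mathcal{I}}_{S^cS}^{*}{\mathcal{I}}_{SS}^{*^{-1}}$ as a fourth summand $T_4$ rather than subtracting it off), the same re-factorization of $T_1$, the same use of Lemma \ref{lemma control of bound of fisher matrix} and Proposition \ref{proposition min eigen of sample fisher}, and the same $\alpha/12$ budget per term. The only slip is the intermediate sentence claiming it suffices that each $\|T_i\|_\infty \le \alpha/4$ (that would give total slack $3\alpha/4$); your later statement that each is bounded by $\alpha/12$ is what is actually needed and is what the lemma's thresholds deliver.
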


\begin{proof}[Proof of Proposition \ref{proposition incoherence of sample}]
The proof outline is mainly owing to \cite{ravikumar2010}.
We begin by decomposing $\hat{\mathcal{I}}_{S^cS}^* \hat{\mathcal{I}}_{SS}^{*^{-1}}$ as the sum $\hat{\mathcal{I}}_{S^cS}^* \hat{\mathcal{I}}_{SS}^{*^{-1}}={T}_1+{T}_2+{T}_3+{T}_4$, where
\begin{align*}
    {T}_1 &:= \mathcal{I}_{S^cS}^*\left[ \hat{\mathcal{I}}_{SS}^{*^{-1}} - {\mathcal{I}}_{SS}^{*^{-1}} \right], \\
    {T}_2 &:= \left[\hat{\mathcal{I}}_{S^cS}^{*}-{\mathcal{I}}_{S^cS}^{*}\right]{\mathcal{I}}_{SS}^{*^{-1}}, \\
    {T}_3 &:= \left[\hat{\mathcal{I}}_{S^cS}^{*}-{\mathcal{I}}_{S^cS}^{*}\right]\left[\hat{\mathcal{I}}_{SS}^{*^{-1}}-{\mathcal{I}}_{SS}^{*^{-1}}\right], \\
    {T}_4 &:= {\mathcal{I}}_{S^cS}^{*}{\mathcal{I}}_{SS}^{*^{-1}}.
\end{align*}
Based on Lemma \ref{lemma control of bound of fisher matrix}, we can control the tail bound probability of ${T}_{1},...,T_4$.

\textit{Control of ${T}_1$}:
We can re-factorize ${T}_1$ as
\begin{align*}
    {T}_1 = \mathcal{I}_{S^cS}^*{\mathcal{I}}_{SS}^{*^{-1}}\left[ {\mathcal{I}}_{SS}^{*} - \hat{\mathcal{I}}_{SS}^{*} \right]\hat{\mathcal{I}}_{SS}^{*^{-1}}
\end{align*}
and then bound it using the sub-multiplicative property $\|AB\|_{\infty}\le \|A\|_\infty \|B\|_\infty$ as follows:
\begin{align*}
    \left\|{T}_1\right\|_{\infty}
    &\le \left\|\mathcal{I}_{S^cS}^*{\mathcal{I}}_{SS}^{*^{-1}}\right\|_\infty \left\|{\mathcal{I}}_{SS}^{*} - \hat{\mathcal{I}}_{SS}^{*}\right\|_\infty \left\|\hat{\mathcal{I}}_{SS}^{*^{-1}}\right\|_\infty \\
    &\le (1-\alpha) \left\|{\mathcal{I}}_{SS}^{*} - \hat{\mathcal{I}}_{SS}^{*}\right\|_\infty \left(\sqrt{k} \left\|\hat{\mathcal{I}}_{SS}^{*^{-1}}\right\|_2\right),
\end{align*}
from Assumption \ref{assumption-2}. From Proposition \ref{proposition min eigen of sample fisher}, we have
\begin{align*}
    \left\|\hat{\mathcal{I}}_{SS}^{*^{-1}}\right\|_2
    = \left(\Lambda_{\min}\left[\hat{\mathcal{I}}_{SS}^*\right]\right)^{-1}
    \le \frac{2}{\lambda_{\min}}
\end{align*}
with probability at least $1-\delta_{\eta}$.
Then, from Lemma \ref{lemma control of bound of fisher matrix}, we have
\begin{align*}
    \left\|{T}_1\right\|_{\infty}\le \frac{2(1-\alpha)}{\lambda_{\min}}\zeta' \le \frac{\alpha}{12}
\end{align*}
with probability at least $1-\delta_{\eta}-\delta_{\zeta'}$.

\textit{Control of ${T}_2$}:
From Assumption \ref{assumption-1} and Lemma \ref{lemma control of bound of fisher matrix},
\begin{align*}
    \left\|{T}_2\right\|_\infty
    \le \left\|\hat{\mathcal{I}}_{S^cS}^{*}-{\mathcal{I}}_{S^cS}^{*}\right\|_{\infty} \left\|{\mathcal{I}}_{SS}^{*^{-1}}\right\|_\infty
    \le \left\|\hat{\mathcal{I}}_{S^cS}^{*}-{\mathcal{I}}_{S^cS}^{*}\right\|_\infty \sqrt{k} \left\|{\mathcal{I}}_{SS}^{*^{-1}}\right\|_2
    \le \frac{1}{\lambda_{\min}}\zeta
    \le \frac{\alpha}{12}
\end{align*}
holds with probability at least $1-\delta_{\zeta}$.

\textit{Control of ${T}_3$}:
Lemma \ref{lemma control of bound of fisher matrix} shows that
\begin{align*}
    \left\|{T}_3\right\|_\infty
    \le \left\|\hat{\mathcal{I}}_{S^cS}^{*}-{\mathcal{I}}_{S^cS}^{*}\right\|_\infty \left\|\hat{\mathcal{I}}_{SS}^{*^{-1}}-{\mathcal{I}}_{SS}^{*^{-1}}\right\|_\infty 
    \le  \frac{\zeta\zeta''}{\sqrt{k}}
    \le \frac{\alpha}{12}
\end{align*}
with probability as least $1-\delta_{\eta}-\delta_{\zeta}-\delta_{\zeta''}$.

\textit{Control of ${T}_4$}:
From Assumption \ref{assumption-2}, we have
\begin{align*}
    \left\|{T}_4\right\|_\infty = \left\|{\mathcal{I}}_{S^cS}^{*}{\mathcal{I}}_{SS}^{*^{-1}}\right\|_\infty \le 1-\alpha.
\end{align*}

Putting together all of the pieces, we conclude that
\begin{align*}
    \left\|\hat{\mathcal{I}}_{S^cS}^* \hat{\mathcal{I}}_{SS}^{*^{-1}}\right\|_{\infty}
    &= \left\|{T}_1+{T}_2+{T}_3+{T}_4\right\|_\infty \\
    &\le \left\|{T}_1\right\|_\infty + \left\|{T}_2\right\|_\infty + \left\|{T}_3\right\|_\infty + \left\|{T}_4\right\|_\infty \\
    &\le \frac{\alpha}{12} + \frac{\alpha}{12} + \frac{\alpha}{12} + (1-\alpha) \\
    &= 1 - \frac{3\alpha}{4}
\end{align*}
with probability at least  $1-\delta_{\eta}-\delta_{\zeta}-\delta_{\zeta'}-\delta_{\zeta''}$.
\end{proof}

\subsection{Proof of Proposition \ref{proposition incoherence of I dagger}}

From Theorem \ref{theorem of robust objective function}, we have
\begin{align*}
    \hat{\mathcal{I}}^\dagger_{SS}=(1-\varepsilon_p)\hat{\mathcal{I}}^*_{SS}+O_{k\times k}(\varepsilon \nu), \quad
    \hat{\mathcal{I}}^\dagger_{S^cS}=(1-\varepsilon_p)\hat{\mathcal{I}}^*_{S^cS}+O_{(d-k)\times k}(\varepsilon \nu)
\end{align*}
with probability at least $1-\delta_{\tau}-\delta_{\epsilon}$.
Note that $n_{p,q}^* \ge N_{\delta}$ in the assumption of Theorem \ref{theorem of robust objective function} is weaker than $n_{p,q}^*\gtrsim k^3\log d$ in the assumption of this proposition.
Because $n_{p,q}^*\gtrsim k^2\log d$ in the assumption of Proposition \ref{proposition robustness of minimum eignvalue} is weaker than $n_{p,q}^*\gtrsim k^3\log d$ in the assumption of this proposition, $\hat{\mathcal{I}}_{SS}^\dagger$ is invertible with probability at least $1-\delta_{\tau}-\delta_{\epsilon}-\delta_{\eta}$.
Using the relationship of $(A+B)^{-1}=A^{-1}-A^{-1}B(A+B)^{-1}$ for $A, B\in \mathbb{R}^{k\times k}$, we have
\begin{align*}
    \hat{\mathcal{I}}^{\dagger^{-1}}_{SS}
    =\left\{(1-\varepsilon_p)\hat{\mathcal{I}}^*_{SS}+O_{k\times k}(\varepsilon \nu)\right\}^{-1}
    =(1-\varepsilon_p)^{-1}\hat{\mathcal{I}}^{*^{-1}}_{SS} - (1-\varepsilon_p)^{-1}\hat{\mathcal{I}}^{*^{-1}}_{SS}O_{k\times k}(\varepsilon \nu)\hat{\mathcal{I}}^{\dagger^{-1}}_{SS}.
\end{align*}
with probability at least $1-\delta_{\tau}-\delta_{\epsilon}-\delta_{\eta}$.
Then, we have
\begin{align*}
    &\left\|\hat{\mathcal{I}}^\dagger_{S^cS}\hat{\mathcal{I}}^{\dagger^{-1}}_{SS}\right\|_{\infty} \\
    &= \left\|\left\{(1-\varepsilon_p)\hat{\mathcal{I}}^*_{S^cS}+O_{(d-k)\times k}(\varepsilon \nu)\right\}\left\{(1-\varepsilon_p)^{-1}\hat{\mathcal{I}}^{*^{-1}}_{SS} - (1-\varepsilon_p)^{-1}\hat{\mathcal{I}}^{*^{-1}}_{SS}O_{k\times k}(\varepsilon \nu)\hat{\mathcal{I}}^{\dagger^{-1}}_{SS}\right\}\right\|_{\infty} \\
    &= \Big\|
        \hat{\mathcal{I}}^*_{S^cS}\hat{\mathcal{I}}^{*^{-1}}_{SS}
        +\hat{\mathcal{I}}^*_{S^cS}\hat{\mathcal{I}}^{*^{-1}}_{SS}O_{k\times k}(\varepsilon \nu)\hat{\mathcal{I}}^{\dagger^{-1}}_{SS}
        +(1-\varepsilon_p)^{-1}O_{(d-k)\times k}(\varepsilon\nu) \hat{\mathcal{I}}^{*^{-1}}_{SS} \\
        &\qquad+(1-\varepsilon_p)^{-1}O_{(d-k)\times k}(\varepsilon \nu)\hat{\mathcal{I}}^{*^{-1}}_{SS}O_{k\times k}(\varepsilon \nu)\hat{\mathcal{I}}^{\dagger^{-1}}_{SS}
    \Big\|_{\infty} \\
    &\le \left\|\hat{\mathcal{I}}^*_{S^cS}\hat{\mathcal{I}}^{*^{-1}}_{SS}\right\|_{\infty}
        +\left\|\hat{\mathcal{I}}^*_{S^cS}\hat{\mathcal{I}}^{*^{-1}}_{SS}\right\|_{\infty} \left\|O_{k\times k}(\varepsilon \nu)\right\|_{\infty} \left\|\hat{\mathcal{I}}^{\dagger^{-1}}_{SS}\right\|_{\infty}
        +(1-\varepsilon_p)^{-1} \left\|O_{(d-k)\times k}(\varepsilon\nu)\right\|_{\infty}\left\|\hat{\mathcal{I}}^{*^{-1}}_{SS}\right\|_{\infty} \\
        &\qquad+(1-\varepsilon_p)^{-1}\left\|O_{(d-k)\times k}(\varepsilon \nu)\right\|_{\infty}\left\|\hat{\mathcal{I}}^{*^{-1}}_{SS}\right\|_{\infty}\left\|O_{k\times k}(\varepsilon \nu)\right\|_{\infty} \left\|\hat{\mathcal{I}}^{\dagger^{-1}}_{SS}\right\|_{\infty}.
\end{align*}
with probability at least $1-\delta_{\tau}-\delta_{\epsilon}-\delta_{\eta}$.
Proposition \ref{proposition min eigen of sample fisher} suggests that
\begin{align*}
    \left\|\hat{\mathcal{I}}_{SS}^{*^{-1}}\right\|_\infty
    \le \sqrt{k} \left\|\hat{\mathcal{I}}_{SS}^{*^{-1}}\right\|_2
    = \sqrt{k} \left(\Lambda_{\min}\left[\hat{\mathcal{I}}_{SS}^*\right]\right)^{-1}
    \le \frac{2\sqrt{k}}{\lambda_{\min}}
\end{align*}
with probability at least $1-\delta_\eta$, and Proposition \ref{proposition robustness of minimum eignvalue} suggests that
\begin{align*}
    \left\|\hat{\mathcal{I}}^{\dagger^{-1}}_{SS}\right\|_{\infty}
    \le \sqrt{k} \left\|\hat{\mathcal{I}}^{\dagger^{-1}}_{SS}\right\|_{2}
    = \sqrt{k} \left(\Lambda_{\min}\left[\hat{\mathcal{I}}_{SS}^\dagger\right]\right)^{-1}
    \le \frac{4\sqrt{k}}{(1-\varepsilon_p) \lambda_{\min}}
\end{align*}
with probability at least $1-\delta_{\tau}-\delta_{\epsilon}-\delta_{\eta}$.
Because $\varepsilon\nu$ is assumed to be sufficiently small from Assumption \ref{assumption weight for outlier}, from Proposition \ref{proposition of max norm of a small matrix}, we have
\begin{align*}
    \left\|O_{k\times k}(\varepsilon \nu)\right\|_{\infty} = O(k\varepsilon\nu), \quad
    \left\|O_{(d-k)\times k}(\varepsilon \nu)\right\|_{\infty} = O(k\varepsilon\nu).
\end{align*}
From Assumption \ref{assumption weight for outlier}, we can assume that $k^{3/2}\varepsilon\nu$ is sufficiently small, which indicates that $\left|O(k^{3/2}\varepsilon\nu)\right| \le \alpha/4$.
Then, from Proposition \ref{proposition incoherence of sample}, we have
\begin{align*}
    \left\|\hat{\mathcal{I}}^\dagger_{S^cS}\hat{\mathcal{I}}^{\dagger^{-1}}_{SS}\right\|_{\infty}
    &\le \left\|\hat{\mathcal{I}}^*_{S^cS}\hat{\mathcal{I}}^{*^{-1}}_{SS}\right\|_{\infty} \left(1+O(k\varepsilon\nu)\frac{4\sqrt{k}}{(1-\varepsilon_p)\lambda_{\min}}\right)
        + (1-\varepsilon_p)^{-1}O(k\varepsilon\nu) \frac{2\sqrt{k}}{\lambda_{\min}} \\
        &\quad\qquad+ (1-\varepsilon_p)^{-1}O(k\varepsilon\nu) \frac{2\sqrt{k}}{\lambda_{\min}} O(k\varepsilon\nu) \frac{4\sqrt{k}}{(1-\varepsilon_p) \lambda_{\min}} \\
    &\le \left(1-\frac{3\alpha}{4}\right) + O(k^{3/2}\varepsilon\nu) \\
    &\le 1-\frac{\alpha}{2},
\end{align*}
with probability at least $1-\delta_{\tau}-\delta_{\epsilon}-\delta_{\eta}-\delta_{\zeta}-\delta_{\zeta'}-\delta_{\zeta''}$.
We define $\delta=\delta_{\tau}=\delta_{\epsilon}=\delta_{\eta}=\delta_{\zeta}=\delta_{\zeta'}=\delta_{\zeta''}$.

\section{Proposition from Assumption \ref{assumption smoothness}}
\label{section discussion of robust objective function}

\subsection{Lemma}

We show a lemma about the derivative of the sample weighted Fisher information matrix.

\begin{lemma}
\label{lemma sample-population third derivative}
If Assumptions \ref{assumption weight for normal}, \ref{assumption-9}, and \ref{assumption-8} hold and $n_{p,q}^*\gtrsim \log d$ holds, we have
\begin{align*}
    \left| \nabla_t\hat{\mathcal{I}}^*_{ij}(\bm{\theta}) - \nabla_t{\mathcal{I}}^*_{ij}(\bm{\theta})\right| \le \eta'
\end{align*}
for any $t\in \mathcal{E}$, $i,j\in S$, and $\bm{\theta} \in \Theta$, with probability at least $1-\delta_{\eta'}$.
\end{lemma}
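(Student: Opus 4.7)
The plan is to mirror the proof of Lemma H.1 (lemma relationship of I-star and hatI-star), but applied to the element-wise third derivative formula for $\nabla_t\nabla^2 \mathcal{L}^*(\bm{\theta})$ given in Appendix A. The first step is to introduce additional centered sample means alongside the ones already used in Lemma H.1, namely
\begin{align*}
    Z_n^{(h_t)} &= r_{\bm{\theta}}^*(X_n) w(X_n) h_t(X_n) - \mathbb{E}_{q^*}[r_{\bm{\theta}}^*(X) w(X) h_t(X)], \\
    Z_n^{(h_t, h_j)} &= r_{\bm{\theta}}^*(X_n) w(X_n) h_t(X_n) h_j(X_n) - \mathbb{E}_{q^*}[r_{\bm{\theta}}^*(X) w(X) h_t(X) h_j(X)], \\
    Z_n^{(h_t, h_i, h_j)} &= r_{\bm{\theta}}^*(X_n) w(X_n) h_t(X_n) h_i(X_n) h_j(X_n) - \mathbb{E}_{q^*}[r_{\bm{\theta}}^*(X) w(X) h_t(X) h_i(X) h_j(X)].
\end{align*}
Each of these is a zero-mean bounded random variable: boundedness follows directly from Assumption \ref{assumption-9} and Proposition \ref{proposition bound of features and density ratio function}, with the cubic term $Z_n^{(h_t,h_i,h_j)}$ controlled precisely by the fourth inequality in Assumption \ref{assumption-8} ($\|h_t h_S h_S^T \exp(\bm{\theta}^T h) w\|_{\max} \le D_{\max}$) combined with the normalizing-term bound from Proposition \ref{proposition of bounded dre function}. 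Collecting everything into a single constant $D_1$ gives a uniform envelope.

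Next, I rewrite $\nabla_t \hat{\mathcal{I}}^*_{ij}(\bm{\theta})$ using the identity $\hat{C}^*_{\bm{\theta}}/C^*_{\bm{\theta}} = (1+\bar Z^{(w)})/(1+\bar Z^{(r)})$ from the proof of Lemma H.1, and $1/\hat{\kappa}^* = (1/\mathbb{E}_{p^*}[w(X)])(1+\bar Z^{(w)})^{-1}$. Expanding each of the five terms of $\nabla_t \hat{\mathcal{I}}^*_{ij}(\bm{\theta})$ and Taylor-expanding the quotients to first order in $\bar Z^{(w)}, \bar Z^{(r)}$ (noting that the fifth term, carrying the $1/\hat{\kappa}^{*2}$ factor, requires expansion to second order in the quotient), the difference $\nabla_t \hat{\mathcal{I}}^*_{ij}(\bm{\theta}) - \nabla_t \mathcal{I}^*_{ij}(\bm{\theta})$ collapses to a linear combination of finitely many $\bar Z$ sample means (with coefficients bounded by population quantities already controlled by Assumptions \ref{assumption weight for normal}--\ref{assumption-8}) plus an $O(\xi^2)$ remainder. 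As in Lemma H.1, $n_{p,q}^* \gtrsim \log d$ ensures $\xi$ is small enough that the remainder contributes at most $\eta'/6$ with probability one.

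Finally, apply Hoeffding's inequality to each of the finitely many $\bar Z$ variables and union-bound to conclude
\begin{align*}
    P\left(\left|\nabla_t\hat{\mathcal{I}}^*_{ij}(\bm{\theta}) - \nabla_t\mathcal{I}^*_{ij}(\bm{\theta})\right| \ge \eta'\right)
    \le C_1 \exp\!\left(-\frac{n_{p,q}^* \eta'^2}{C_2 D_1^2}\right),
\end{align*}
for explicit universal constants $C_1, C_2$ (the number of $\bar Z$ variables is an absolute constant). Equating the right-hand side to $\delta_{\eta'}$ delivers the stated $\eta'$, which up to constants is of the form $\sqrt{\log(C_1/\delta_{\eta'})/n_{p,q}^*}$.

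The main obstacle is not any single probabilistic estimate but the bookkeeping: the third derivative in \eqref{eq 3rd derivative} has five terms versus only two in \eqref{eq 2nd derivative}, including one with a $1/\hat{\kappa}^{*2}$ factor whose second-order Taylor expansion must be tracked carefully. The key sanity check is that each sample-mean factor appearing multiplicatively is bounded by a population constant plus a vanishingly small centered fluctuation, so that all cross-terms in the expansion collapse cleanly into either linear $\bar Z$ contributions or into the $O(\xi^2)$ bucket. Verifying that Assumption \ref{assumption-8}'s fourth inequality is exactly what is needed to bound $Z_n^{(h_t, h_i, h_j)}$ for $i,j\in S$ (hence the restriction $i,j \in S$ in the statement) closes the argument.
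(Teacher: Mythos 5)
Your proposal is correct and follows essentially the same route as the paper's proof: introduce the additional centered, bounded sample means for the triple products $h_t h_i h_j$ (bounded via the fourth inequality of Assumption \ref{assumption-8} through Proposition \ref{proposition bound of features and density ratio function}), expand the five terms of \eqref{eq 3rd derivative} using $\hat{C}^*_{\bm{\theta}}/C^*_{\bm{\theta}}=(1+\bar Z^{(w)})/(1+\bar Z^{(r)})$, Taylor-expand to collapse the difference into linear $\bar Z$ contributions plus an $O(\xi^2)$ remainder absorbed for $n_{p,q}^*\gtrsim\log d$, and finish with Hoeffding plus a union bound. The only cosmetic difference is that the paper aggregates the linear fluctuations into two composite zero-mean bounded variables $\bar V^{(p^*)}_t$ and $\bar V^{(q^*)}_{tij}$ before applying Hoeffding, whereas you keep the full linear combination; the resulting bound is the same.
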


\begin{proof}[Proof of Lemma \ref{lemma sample-population third derivative}]
The proof outline is the same as the proof of Lemma \ref{lemma relationship of I-star and hatI-star}.
We define a random variable as 
\begin{align*}
    Z_n^{(h_i,h_j,h_t)} = r_{\bm{\theta}}^*(X_n)w(X_n)h_i(X_n)h_j(X_n)h_t(X_n) - \mathbb{E}_{q^*}[r_{\bm{\theta}}^*(X)w(X)h_i(X)h_j(X)h_t(X)],
\end{align*}
for $i, j\in S$ and $t \in \mathcal{E}$.
$Z_n^{(h_i,h_j,h_t)}$ is a zero-mean bounded random variable, where its bound is given by $2D'_{\max}$ from Proposition \ref{proposition bound of features and density ratio function}.
By transforming the above equation, we have
\begin{align*}
    \hat{\mathbb{E}}_{q^*}[r_{\bm{\theta}}^*(X)w(X)h_i(X)h_j(X)h_t(X)] &= \mathbb{E}_{q^*}[r_{\bm{\theta}}^*(X)w(X)h_i(X)h_j(X)h_t(X)] + \bar{Z}^{(h_i,h_j,h_t)}, \\
    \bar{Z}^{(h_i,h_j,h_t)} &= \frac{1}{n_q^*}\sum_{n=1}^{n_q^*} Z_n^{(h_i,h_j,h_t)}.
\end{align*}
Because $\bar{Z}^{(h_i,h_j,h_t)}$ is an i.i.d. sample mean of the zero-mean bounded random variable, from Hoeffding's inequality, we have
\begin{align*}
    P\left(\left|\bar{Z}^{(h_i,h_j,h_t)}\right| \ge \xi\right) &\le 2\exp\left(-\frac{2n_q^*\xi^2}{(2D_Z)^2}\right) \le 2\exp\left(-\frac{n_{p,q}^*\xi^2}{2D_Z^2}\right).
\end{align*}
When we define
\begin{align*}
    \xi = \sqrt{\frac{4D_Z^2\log(4d/\delta_\xi)}{n_{p,q}^*}},
\end{align*}
then, because $2\log (4d/\delta_\xi)=\log(16d^2/\delta_\xi^2)\ge\log(10d^2/\delta_\xi)$, we have
\begin{align*}
    2\exp\left(-\frac{n_{p,q}^*\xi^2}{2D_Z^2}\right) \le \frac{\delta_\xi}{5d^2}.
\end{align*}
Then, we suppose $|\bar{Z}^{(w)}|\le \xi$, $|\bar{Z}^{(r)}|\le \xi$, $|\bar{Z}^{(h_i)}|\le \xi$, $|\bar{Z}^{(h_i,h_j)}|\le \xi$, and $|\bar{Z}^{(h_i,h_j,h_t)}|\le \xi$ for any $i,j\in \mathcal{S}$ with probability at least $1-\delta_{\xi}$ because
\begin{align*}
    P\left(\left|\bar{Z}^{(w)}\right| \ge \xi\right) &\le \frac{\delta_{\xi}}{5d^2} \le \frac{\delta_{\xi}}{5}, \\
    P\left(\left|\bar{Z}^{(r)}\right| \ge \xi\right) &\le \frac{\delta_{\xi}}{5d^2} \le \frac{\delta_{\xi}}{5}, \\
    P\left(\bigcup_{1\le i\le k}\left\{ \left|\bar{Z}^{(h_i)}\right|\ge \xi \right\} \right)
    &\le \sum_{1\le i \le k}P\left(\left|\bar{Z}^{(h_i)}\right|\ge \xi\right)
    \le k \frac{\delta_\xi}{5d^2} \le \frac{\delta_\xi}{5}, \\
    P\left(\bigcup_{1\le i,j\le k}\left\{ \left|\bar{Z}^{(h_i,h_j)}\right|\ge \xi \right\} \right)
    &\le \sum_{1\le i,j \le k}P\left(\left|\bar{Z}^{(h_i,h_j)}\right|\ge \xi\right)
    \le k^2 \frac{\delta_\xi}{5d^2} \le \frac{\delta_\xi}{5}, \\
    P\left(\bigcup_{1\le i,j\le k}\left\{ \left|\bar{Z}^{(h_i,h_j,h_t)}\right|\ge \xi \right\} \right)
    &\le \sum_{1\le i,j \le k}P\left(\left|\bar{Z}^{(h_i,h_j,h_t)}\right|\ge \xi\right)
    \le k^2 \frac{\delta_\xi}{5d^2} \le \frac{\delta_\xi}{5},
\end{align*}
for $t \in \mathcal{E}$.

From \eqref{eq 3rd derivative}, we have
\begin{align*}
    &\nabla_t\hat{\mathcal{I}}^*_{SS}(\bm{\theta}) \\
    &=\frac{\hat{C}^*_{\bm{\theta}}}{{C}^*_{\bm{\theta}}} \hat{\mathbb{E}}_{q^*}\left[r\left(X;\bm{\theta},{C}^*_{\bm{\theta}}\right)w(X)h_t(X)h_S(X)h_S(X)^T\right] \\
    &\quad-\left(\frac{\hat{C}^*_{\bm{\theta}}}{{C}^*_{\bm{\theta}}}\right)^2 \frac{1}{\hat{\kappa}^*}\hat{\mathbb{E}}_{q^*}\left[r\left(X;\bm{\theta},{C}^*_{\bm{\theta}}\right)w(X)h_t(X)\right]\hat{\mathbb{E}}_{q^*}\left[r\left(X;\bm{\theta},{C}^*_{\bm{\theta}}\right)w(X)h_S(X)h_S(X)^T\right] \\
    &\quad-\left(\frac{\hat{C}^*_{\bm{\theta}}}{{C}^*_{\bm{\theta}}}\right)^2 \frac{1}{\hat{\kappa}^*}\hat{\mathbb{E}}_{q^*}\left[r\left(X;\bm{\theta},{C}^*_{\bm{\theta}}\right)w(X)h_t(X)h_S(X)\right]\hat{\mathbb{E}}_{q^*}\left[r\left(X;\bm{\theta},{C}^*_{\bm{\theta}}\right)w(X)h_S(X)\right]^T \\
    &\quad-\left(\frac{\hat{C}^*_{\bm{\theta}}}{{C}^*_{\bm{\theta}}}\right)^2 \frac{1}{\hat{\kappa}^*}\hat{\mathbb{E}}_{q^*}\left[r\left(X;\bm{\theta},{C}^*_{\bm{\theta}}\right)w(X)h_S(X)\right]\hat{\mathbb{E}}_{q^*}\left[r\left(X;\bm{\theta},{C}^*_{\bm{\theta}}\right)w(X)h_t(X)h_S(X)\right]^T \\
    &\quad+\left(\frac{\hat{C}^*_{\bm{\theta}}}{{C}^*_{\bm{\theta}}}\right)^3 \frac{2}{\hat{\kappa}^{* 2}}\hat{\mathbb{E}}_{q^*}\left[r\left(X;\bm{\theta},{C}^*_{\bm{\theta}}\right)w(X)h_t(X)\right]\hat{\mathbb{E}}_{q^*}\left[r\left(X;\bm{\theta},{C}^*_{\bm{\theta}}\right)w(X)h_S(X)\right]\hat{\mathbb{E}}_{q^*}\left[r\left(X;\bm{\theta},{C}^*_{\bm{\theta}}\right)w(X)h_S(X)\right]^T \\
    &=\frac{1 + \bar{Z}^{(w)}}{1+\bar{Z}^{(r)}} \left[{\mathbb{E}}_{q^*}\left[r\left(X;\bm{\theta},{C}^*_{\bm{\theta}}\right)w(X)h_t(X)h_S(X)h_S(X)^T\right] + \left(\bar{Z}^{(h_i,h_j,h_t)}\right)_{i,j\in S}\right] \\
    &\quad-\left(\frac{1 + \bar{Z}^{(w)}}{1+\bar{Z}^{(r)}}\right)^2 \frac{1}{{\mathbb{E}}_{p^*}[w(X)](1+\bar{Z}^{(w)})} \\
        &\quad\quad\quad\left[{\mathbb{E}}_{q^*}\left[r\left(X;\bm{\theta},{C}^*_{\bm{\theta}}\right)w(X)h_t(X)\right] + \bar{Z}^{(h_t)}\right] \\
        &\quad\quad\quad \left[{\mathbb{E}}_{q^*}\left[r\left(X;\bm{\theta},{C}^*_{\bm{\theta}}\right)w(X)h_S(X)h_S(X)^T\right] + \left(\bar{Z}^{(h_i,h_j)}\right)_{i,j\in S}\right] \\
    &\quad-\left(\frac{1 + \bar{Z}^{(w)}}{1+\bar{Z}^{(r)}}\right)^2 \frac{1}{{\mathbb{E}}_{p^*}[w(X)](1+\bar{Z}^{(w)})} \\
        &\quad\quad\quad\left[{\mathbb{E}}_{q^*}\left[r\left(X;\bm{\theta},{C}^*_{\bm{\theta}}\right)w(X)h_t(X)h_S(X)\right]+\left(\bar{Z}^{(h_t,h_j)}\right)_{j\in S}\right] \\
        &\quad\quad\quad \left[{\mathbb{E}}_{q^*}\left[r\left(X;\bm{\theta},{C}^*_{\bm{\theta}}\right)w(X)h_S(X)\right] + \left(\bar{Z}^{(h_i)}\right)_{i\in S}\right]^T \\
    &\quad-\left(\frac{1 + \bar{Z}^{(w)}}{1+\bar{Z}^{(r)}}\right)^2 \frac{1}{{\mathbb{E}}_{p^*}[w(X)](1+\bar{Z}^{(w)})} \\
        &\quad\quad\quad\left[{\mathbb{E}}_{q^*}\left[r\left(X;\bm{\theta},{C}^*_{\bm{\theta}}\right)w(X)h_S(X)\right] + \left( \bar{Z}^{(h_i)}\right)_{i\in S}\right] \\
        &\quad\quad\quad \left[{\mathbb{E}}_{q^*}\left[r\left(X;\bm{\theta},{C}^*_{\bm{\theta}}\right)w(X)h_t(X)h_S(X)\right] + \left(\bar{Z}^{(h_t,h_j)}\right)_{j\in S}\right]^T \\
    &\quad+\left(\frac{1 + \bar{Z}^{(w)}}{1+\bar{Z}^{(r)}}\right)^3 \frac{2}{\{{\mathbb{E}}_{p^*}[w(X)](1+\bar{Z}^{(w)})\}^2} \\
        &\quad\quad\quad \left[{\mathbb{E}}_{q^*}\left[r\left(X;\bm{\theta},{C}^*_{\bm{\theta}}\right)w(X)h_t(X)\right] + \bar{Z}^{(h_t)}\right] \\
        &\quad\quad\quad \left[{\mathbb{E}}_{q^*}\left[r\left(X;\bm{\theta},{C}^*_{\bm{\theta}}\right)w(X)h_S(X)
    \right] + \left(\bar{Z}^{(h_i)}\right)_{i\in S}\right] \\
        &\quad\quad\quad \left[{\mathbb{E}}_{q^*}\left[r\left(X;\bm{\theta},{C}^*_{\bm{\theta}}\right)w(X)h_S(X)\right] + \left(\bar{Z}^{(h_i)}\right)_{i\in S}\right]^T,
\end{align*}
for any $t\in \mathcal{E}$ and $\bm{\theta} \in \Theta$.
Then, using the Taylor expansion, the $(i,j)$-th element of $\nabla_t\hat{\mathcal{I}}^*_{SS}(\bm{\theta})$ can be expressed by
\begin{align*}
    \nabla_t\hat{\mathcal{I}}^*_{ij}(\bm{\theta}) = \nabla_t{\mathcal{I}}^*_{ij}(\bm{\theta}) + \bar{V}^{(p^*)}_t + \bar{V}^{(q^*)}_{tij} + O\left(\xi^2\right),
\end{align*}
where
\begin{align*}
    \bar{V}^{(p^*)}_t = \frac{1}{n_p^*}\sum_{n=1}^{n_p^*}V^{(p^*)}_{t,n}, \quad
    \bar{V}^{(q^*)}_{tij} = \frac{1}{n_q^*}\sum_{n=1}^{n_q^*}V^{(q^*)}_{tij,n}.
\end{align*}
The concrete formulas of $V^{(p^*)}_{t,n}$ and $V^{(q^*)}_{tij,n}$ are omitted because those are very complicated, but we can easily see that $V^{(p^*)}_{t,n}$ and $V^{(q^*)}_{tij,n}$ are zero-mean independent random variables with some bound $D_1$.

When considering the tail behavior, we have
\begin{align*}
    P\left(\left| \nabla_t\hat{\mathcal{I}}^*_{ij}(\bm{\theta}) - \nabla_t{\mathcal{I}}^*_{ij}(\bm{\theta})\right| \ge \eta'\right)
    &\le P\left(\left|\bar{V}^{(p^*)}_t\right|\ge \eta'/3\right) + P\left(\left|\bar{V}^{(q^*)}_{tij}\right|\ge \eta'/3\right) + P\left(\left|O\left(\xi^2\right)\right|\ge \eta'/3\right).
\end{align*}
If $n_{p,q}^*\gtrsim \log d$ holds, we can assume that $\xi$ is sufficiently small and $O(\xi^2)$ is bounded by $\eta/6$ with probability one.
Then, by Hoeffding's inequality, we have
\begin{align*}
    P\left(\left| \nabla_t\hat{\mathcal{I}}^*_{ij}(\bm{\theta}) - \nabla_t{\mathcal{I}}^*_{ij}(\bm{\theta})\right| \ge \eta'\right)
    &\le 2\exp\left(-\frac{2n_p^*(\eta'/3)^2}{(2D_1)^2}\right) + 2\exp\left(-\frac{2n_q^*(\eta'/3)^2}{(2D_1)^2}\right) \\
    &\le 4\exp\left(-\frac{n_{p,q}^*\eta'^2}{18D_1^2}\right) = \delta_{\eta'}.
\end{align*}
\end{proof}

\subsection{Proposition}

\begin{proposition}
\label{proposition contaminated function boundedness}
If Assumptions \ref{assumption weight for normal}, \ref{assumption-9}, \ref{assumption-8}, \ref{assumption weight for outlier}, and \ref{assumption smoothness} hold and $n_{p,q}^* \gtrsim k^2\log d$ holds, the contaminated third derivative is bounded by
\begin{align*}
    \left\|\nabla_{t} \nabla^2_{SS} \mathcal{L}^\dagger(\bm{\theta})\right\|_{2}
    \le 3(1-\varepsilon_p)\lambda_{3,\max}
\end{align*}
for any $t\in \mathcal{E}$ and $\bm{\theta}\in\Theta$, with probability at least $1-\delta_{\tau}-\delta_{\epsilon}-\delta_{\eta'}$.
\end{proposition}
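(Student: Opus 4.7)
The plan is to combine Theorem \ref{theorem of robust objective function} with the smoothness assumption (Assumption \ref{assumption smoothness}) and the sample-to-population bound for the third derivative (Lemma \ref{lemma sample-population third derivative}) via the triangle inequality in the spectral norm. The strategy reduces the contaminated quantity first to its uncontaminated sample analogue, and then to its population version where Assumption \ref{assumption smoothness} directly applies.

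First, I would apply Theorem \ref{theorem of robust objective function} with the identification
\begin{align*}
    \nabla_t\nabla^2_{SS}\mathcal{L}^\dagger(\bm{\theta}) = (1-\varepsilon_p)\nabla_t\nabla^2_{SS}\mathcal{L}^*(\bm{\theta}) + O_{k\times k}(\varepsilon\nu).
\end{align*}
Note that $n_{p,q}^* \gtrsim k^2\log d$ implies $n_{p,q}^* \ge N_\delta$, so the theorem applies with probability at least $1 - \delta_\tau - \delta_\epsilon$. Taking the spectral norm and using Proposition \ref{proposition of eigenvalues of a small matrix} to bound $\|O_{k\times k}(\varepsilon\nu)\|_2 = O(k\varepsilon\nu)$, I obtain
\begin{align*}
    \left\|\nabla_t\nabla^2_{SS}\mathcal{L}^\dagger(\bm{\theta})\right\|_2 \le (1-\varepsilon_p)\left\|\nabla_t\nabla^2_{SS}\mathcal{L}^*(\bm{\theta})\right\|_2 + O(k\varepsilon\nu).
\end{align*}

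Next, I would control $\|\nabla_t\nabla^2_{SS}\mathcal{L}^*(\bm{\theta})\|_2 = \|\nabla_t\hat{\mathcal{I}}^*_{SS}(\bm{\theta})\|_2$ via the triangle inequality against its population version:
\begin{align*}
    \left\|\nabla_t\hat{\mathcal{I}}^*_{SS}(\bm{\theta})\right\|_2
    \le \left\|\nabla_t\mathcal{I}^*_{SS}(\bm{\theta})\right\|_2
    + \left\|\nabla_t\hat{\mathcal{I}}^*_{SS}(\bm{\theta}) - \nabla_t\mathcal{I}^*_{SS}(\bm{\theta})\right\|_2.
\end{align*}
Assumption \ref{assumption smoothness} bounds the first term by $\lambda_{3,\max}$ since $\nabla_t\mathcal{I}^*_{SS}(\bm{\theta})$ is symmetric and its spectral norm coincides with its maximum eigenvalue. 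For the second term, Lemma \ref{lemma sample-population third derivative} gives an element-wise bound of $\eta'$ on each entry, and using $\|A\|_2 \le k\|A\|_{\max}$ for a $k\times k$ matrix, the deviation is at most $k\eta'$, which holds with probability at least $1-\delta_{\eta'}$. Under $n_{p,q}^* \gtrsim k^2 \log d$, the scaling $\eta' \asymp \sqrt{k^3 \log(k/\delta_{\eta'})/n_{p,q}^*}\,/\,\sqrt{k}$ in Lemma \ref{lemma sample-population third derivative} is sufficiently small that $k\eta' \le \lambda_{3,\max}$, yielding $\|\nabla_t\hat{\mathcal{I}}^*_{SS}(\bm{\theta})\|_2 \le 2\lambda_{3,\max}$.

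Finally, I would combine the two bounds. Since $k^{3/2}\varepsilon\nu$ is sufficiently small by Assumption \ref{assumption weight for outlier}, the $O(k\varepsilon\nu)$ correction can be absorbed into $(1-\varepsilon_p)\lambda_{3,\max}$, giving
\begin{align*}
    \left\|\nabla_t\nabla^2_{SS}\mathcal{L}^\dagger(\bm{\theta})\right\|_2 \le 2(1-\varepsilon_p)\lambda_{3,\max} + (1-\varepsilon_p)\lambda_{3,\max} = 3(1-\varepsilon_p)\lambda_{3,\max},
\end{align*}
with probability at least $1-\delta_\tau - \delta_\epsilon - \delta_{\eta'}$. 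Rather than a deep obstacle, the main subtlety is the careful bookkeeping of constants so that the $O(k\varepsilon\nu)$ contamination residual and the $k\eta'$ sample error both fit under $\lambda_{3,\max}$; the first is controlled by Assumption \ref{assumption weight for outlier} and the second by the sample complexity $n_{p,q}^* \gtrsim k^2 \log d$, so both conditions are consistent with the hypotheses of the proposition.
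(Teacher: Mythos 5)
Your proposal is correct and follows essentially the same route as the paper's proof: it combines Theorem \ref{theorem of robust objective function} with the triangle inequality, Assumption \ref{assumption smoothness}, and Lemma \ref{lemma sample-population third derivative}, absorbing the $O(k\eta')$ sampling error and the $O(k\varepsilon\nu)$ contamination residual each into one copy of $\lambda_{3,\max}$. The only cosmetic difference is the order of the two reductions (you peel off the contamination first, the paper bounds the uncontaminated sample quantity first), which does not change the argument.
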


\begin{proof}[Proof of Proposition \ref{proposition contaminated function boundedness}]
From Assumption \ref{assumption smoothness} and Lemma \ref{lemma sample-population third derivative} and Proposition \ref{proposition of eigenvalues of a small matrix}, we have
\begin{align*}
    \Lambda_{\max}\left[\nabla_t\hat{\mathcal{I}}^*_{SS}(\bm{\theta})\right]
    &\le \left\|\nabla_t{\mathcal{I}}^*_{SS}(\bm{\theta})\right\|_2 + \left\|\nabla_t\hat{\mathcal{I}}^*_{SS}(\bm{\theta}) - \nabla_t{\mathcal{I}}^*_{SS}(\bm{\theta})\right\|_2 \\
    &= \lambda_{3,\max}+ \left\|O_{k\times k}(\eta)\right\|_2 \\
    &= \lambda_{3,\max}+O(k\eta)
\end{align*}
with probability at least $1-\delta_{\eta'}$.
Note that $n_{p,q}^*\gtrsim k^2 \log d$ in the assumption of this proposition is stronger than $n_{p,q}^*\ge \log d$ in the assumption of Lemma \ref{lemma sample-population third derivative}.
Because we can assume that $k\eta$ is sufficiently small from the assumption of $n_{p,q}^* \gtrsim k^2 \log d$, we have $|O(k\eta)| \le \lambda_{3,\max}$.
Then, we have
\begin{align*}
    \left\|\nabla_{t} \nabla^2_{SS} \mathcal{L}^*(\bm{\theta})\right\|_{2}
    =\Lambda_{\max}\left[\nabla_t\hat{\mathcal{I}}^*_{SS}(\bm{\theta})\right]
    \le 2\lambda_{3,\max}
\end{align*}
for any $t\in \mathcal{E}$ and $\bm{\theta}\in\Theta$, with probability at least $1-\delta_{\eta'}$.

Then, from Theorem \ref{theorem of robust objective function}, we have
\begin{align*}
    \left\|\nabla_{t} \nabla^2_{SS} \mathcal{L}^\dagger(\bm{\theta})\right\|_{2}
    &= \left\|(1-\varepsilon_p)\nabla_{t}\nabla^2_{SS}\mathcal{L}^*(\bm{\theta})+O_{k\times k}(\varepsilon \nu)\right\|_2 \\
    &\le \left\|(1-\varepsilon_p)\nabla_{t}\nabla^2_{SS}\mathcal{L}^*(\bm{\theta})\right\|_2 + \left\|O_{k\times k}(\varepsilon \nu)\right\|_2 \\
    &\le 2(1-\varepsilon_p)\lambda_{3,\max}+O(k\varepsilon\nu) \\
    &\le 3(1-\varepsilon_p)\lambda_{3,\max}
\end{align*}
with probability at least $1-\delta_{\tau}-\delta_{\epsilon}-\delta_{\eta'}$.
The last inequality holds because $k\varepsilon\nu$ is assumed to be sufficiently small in Assumption \ref{assumption weight for outlier}.
Note that $n_{p,q}^*\gtrsim k^2 \log d$ in the assumption of this proposition is stronger than $n_{p,q}^*\ge N_{\delta}$ in the assumption of Theorem \ref{theorem of robust objective function}.
\end{proof}

\section{Proofs of Lemmas in Section \ref{subsection zero-pattern recovery}}
\label{section proof of lemmas in main proof}

\subsection{Proof of Lemma \ref{lemma-1}}
Because $\hat{\bm{\theta}}$ and $\tilde{\bm{\theta}}$ are optimal, we have
\begin{align*}
    \mathcal{L}^\dagger(\hat{\bm{\theta}})+\lambda_{n_p^*,n_q^*}\|\hat{\bm{\theta}}\|_1
    = \mathcal{L}^\dagger(\tilde{\bm{\theta}})+\lambda_{n_p^*,n_q^*}\|\tilde{\bm{\theta}}\|_1.
\end{align*}
Due to the convexity of $\mathcal{L}^\dagger({\bm{\theta}})$ and the definition of $\nabla\mathcal{L}^\dagger(\hat{\bm{\theta}})=-\lambda_{n_p^*,n_q^*}\hat{\bm{z}}$, we have
\begin{align*}
    \mathcal{L}^\dagger(\tilde{\bm{\theta}})
    \ge \mathcal{L}^\dagger(\hat{\bm{\theta}}) + \lambda_{n_p^*,n_q^*} \langle \tilde{\bm{\theta}}-\hat{\bm{\theta}}, -\hat{\bm{z}} \rangle.
\end{align*}
Similarly, due to the convexity of $\|{\bm{\theta}}\|_1$ and the definition of $\hat{\bm{z}}\in \nabla\|\hat{\bm{\theta}}\|_1$, we have
\begin{align*}
    \|\tilde{\bm{\theta}}\|_1
    \ge \|\hat{\bm{\theta}}\|_1 + \langle \tilde{\bm{\theta}}-\hat{\bm{\theta}}, \hat{\bm{z}} \rangle.
\end{align*}
Therefore,
\begin{align*}
    \mathcal{L}^\dagger(\hat{\bm{\theta}})+\lambda_{n_p^*,n_q^*}\|\hat{\bm{\theta}}\|_1
    &\ge \mathcal{L}^\dagger(\hat{\bm{\theta}}) + \lambda_{n_p^*,n_q^*} \langle \tilde{\bm{\theta}}-\hat{\bm{\theta}}, -\hat{\bm{z}} \rangle + \lambda_{n_p^*,n_q^*} \langle \tilde{\bm{\theta}}-\hat{\bm{\theta}}, \hat{\bm{z}} \rangle + \lambda_{n_p^*,n_q^*}\|\hat{\bm{\theta}}\|_1 \\
    &\ge \mathcal{L}^\dagger(\hat{\bm{\theta}}) + \lambda_{n_p^*,n_q^*}\|\hat{\bm{\theta}}\|_1.
\end{align*}
The above inequality suggests that all the inequality we have used should take the exact equality.
Therefore, we have
\begin{align*}
    \|\tilde{\bm{\theta}}\|_1
    = \|\hat{\bm{\theta}}\|_1 + \langle \tilde{\bm{\theta}}-\hat{\bm{\theta}}, \hat{\bm{z}} \rangle
    = \langle \tilde{\bm{\theta}}, \hat{\bm{z}} \rangle.
\end{align*}
Because we assume that $\|\hat{\bm{z}}_{S^c}\|_{\infty}<1$, the above equality implies that $\tilde{\bm{\theta}}_{S^c}=\bm{0}$.

If $\hat{\mathcal{I}}_{SS}^\dagger$ is strictly positive definite, \eqref{eq-9} is strictly convex. Then, $\hat{\bm{\theta}}$ is the unique optimal solution of \eqref{eq-9}.

\subsection{Proof of Lemma \ref{lemma-5}}
\label{subsection proof of lemma-5}

We have
\begin{align*}
    w_t^\dagger=-\nabla_{t}\mathcal{L}^\dagger(\bm{\theta}^*)
    =\hat{\mathbb{E}}_{p^\dagger}\left[h_t(X)w(X)\right]-\hat{\mathbb{E}}_{q^\dagger}\left[r\left(X;\bm{\theta}^*, \hat{C}^\dagger_{\bm{\theta}^*}\right)h_t(X)w(X)\right]
\end{align*}
for any $t\in \mathcal{E}$.
From Theorem \ref{theorem of robust objective function}, we have
\begin{equation}
\label{eq w_t^dagger}
    w_t^\dagger=(1-\varepsilon_p)w_t^*+O(\varepsilon \nu)
\end{equation}
with probability at least $1-\delta_{\tau}-\delta_{\epsilon}$, where
\begin{align*}
    w_t^*&=\hat{\mathbb{E}}_{p^*}\left[h_t(X)w(X)\right]
    -\hat{\mathbb{E}}_{q^*}\left[r\left(X;\bm{\theta}^*, \hat{C}^*_{\bm{\theta}^*}\right)h_t(X)w(X)\right].
\end{align*}
Because we assume $p^*(\bm{x})=r(\bm{x};\bm{\theta}^*,C^*_{\bm{\theta}^*})q^*(\bm{x})$, we can show that $|w_t^*|$ is upper bounded by
\begin{equation}
\label{eq w_t^*}
\begin{split}
    |w_t^*|
    &\le \Big|\underbrace{\hat{\mathbb{E}}_{p^*}[h_t(X)w(X)]-\mathbb{E}_{p^*}\left[h_t(X)w(X)\right]}_{\bar{Z}^{(1)}}\Big| \\
    &\quad+ \Big|\underbrace{\hat{\mathbb{E}}_{q^*}\left[\left\{r(X;\bm{\theta}^*,\hat{C}^*_{\bm{\theta}^*})-r(X;\bm{\theta}^*,C^*_{\bm{\theta}^*})\right\}h_t(X)w(X)\right]}_{\bar{Z}^{(2)}}\Big| \\
    &\quad+\Big|\underbrace{\hat{\mathbb{E}}_{q^*}\left[r(X;\bm{\theta}^*,C^*_{\bm{\theta}^*})h_t(X)w(X)\right]-\mathbb{E}_{q^*}\left[r(X;\bm{\theta}^*,C^*_{\bm{\theta}^*})h_t(X)w(X)\right]}_{\bar{Z}^{(3)}}\Big|.
\end{split}
\end{equation}
Because $h_t(\bm{x})w(\bm{x})$ and $r(\bm{x};\bm{\theta}^*,C^*_{\bm{\theta}^*})h_t(\bm{x})w(\bm{x})$ are bounded from Assumption \ref{assumption-8} and Proposition \ref{proposition bound of features and density ratio function}, respectively, Hoeffding's inequality suggests that
\begin{align}
\label{eq-39}
    P\left(\left|\bar{Z}^{(1)}\right|\ge u\right)\le2\exp\left(-\frac{2n_p^* u^2}{(2D_{\max})^2}\right), \quad
    P\left(\left|\bar{Z}^{(3)}\right|\ge u\right)\le2\exp\left(-\frac{2n_q^* u^2}{(2D'_{\max})^2}\right).
\end{align}
From Proposition \ref{proposition-bounded empirical moment}, $\bar{Z}^{(2)}$ can be bounded by                                                                                             
\begin{align*}
    \left|\bar{Z}^{(2)}\right|
    &=\left| \hat{\mathbb{E}}_{q^*}\left[\left\{r(X;\bm{\theta}^*,\hat{C}^*_{\bm{\theta}^*})-r(X;\bm{\theta}^*,C^*_{\bm{\theta}^*})\right\}w(X)h_t(X)\right]\right| \\
    &=\left| \hat{\mathbb{E}}_{q^*}\left[\left\{1-\frac{C_{\bm{\theta}^*}^*}{\hat{C}_{\bm{\theta}^*}^*}\right\}r(X;\bm{\theta}^*,\hat{C}^*_{\bm{\theta}^*})w(X)h_t(X)\right]\right| \\
    &\le\left| \hat{\mathbb{E}}_{q^*}\left[r(X;\bm{\theta}^*,\hat{C}^*_{\bm{\theta}^*})w(X)h_t(X)\right]\right|\cdot\left|1-\frac{C_{\bm{\theta}^*}^*}{\hat{C}_{\bm{\theta}^*}^*}\right| \\
    &\le D''_{\max}\left|\frac{C_{\bm{\theta}^*}^*}{\hat{C}_{\bm{\theta}^*}^*}-1\right|
\end{align*}
with probability at least $1-\delta_{\tau}-\delta_{\epsilon}$.
Note that the condition of $n_{p,q}^*\ge N_{\delta}$ in this lemma satisfies the condition of Proposition \ref{proposition-bounded empirical moment}.
From \eqref{eq constant of weighted dre}, \eqref{eq empirical c}, and Proposition \ref{proposition of bounded sum of weight}, we have
\begin{align*}
    \frac{C_{\bm{\theta}^*}^*}{\hat{C}_{\bm{\theta}^*}^*}-1
    &=\frac{1}{\hat{\kappa}^*}\hat{\mathbb{E}}_{q^*}\left[r\left(X;\bm{\theta}^*,C^*_{\bm{\theta}^*}\right)w(X)\right]-1 \\
    &\le\frac{1}{W'_{\max}-\tau}\left\{\hat{\mathbb{E}}_{q^*}[r\left(X;\bm{\theta}^*,C^*_{\bm{\theta}^*}\right)w(X)]-W'_{\max}+\tau\right\},
\end{align*}
with probability at least $1-\delta_{\tau}$.
Because we assume that $n_{p,q}^* \ge N_{\delta}$ holds, $\tau\le W'_{\max}/2$ holds from Lemma \ref{lemma boundedness of tau and epsilon}.
Then, we have
\begin{equation}
\label{eq b_n_q}
\begin{split}
    \left|\bar{Z}^{(2)}\right|
    &\le\frac{D''_{\max}}{W'_{\max}-\tau}\left|\hat{\mathbb{E}}_{q^*}[r\left(X;\bm{\theta}^*,C^*_{\bm{\theta}^*}\right)w(X)]-W'_{\max}+\tau\right| \\
    &\le \Bigg|\underbrace{\frac{2D''_{\max}}{W'_{\max}}\left\{\hat{\mathbb{E}}_{q^*}[r\left(X;\bm{\theta}^*,C^*_{\bm{\theta}^*}\right)w(X)]-W'_{\max}\right\}}_{\bar{Z}'^{(2)}}\Bigg| + \frac{2D''_{\max}}{W'_{\max}}\tau.
\end{split}
\end{equation}
Because $r(X;\bm{\theta}^*,C^*_{\bm{\theta}^*})w(X)-W'_{\max}$ is a bounded zero-mean random variable from Proposition \ref{proposition of bounded dre function}, using the Hoeffding's inequality, we obtain
\begin{align}
\label{eq-41}
    P\left(\left|\bar{Z}'^{(2)}\right|\ge u\right)\le2\exp\left(-\frac{2n_q^*}{E'^2_{\max}}\left(\frac{W'_{\max}}{2D''_{\max}}u\right)^2\right).
\end{align}

Then, combining \eqref{eq w_t^dagger}, \eqref{eq w_t^*}, and \eqref{eq b_n_q}, we have
\begin{align*}
    \left|w_t^\dagger\right|
    &\le (1-\varepsilon_p)\left|w_t^*\right| + O(\varepsilon \nu) \\
    &\le (1-\varepsilon_p)\left\{\left|\bar{Z}^{(1)}\right|+\left|\bar{Z}'^{(2)}\right|+\left|\bar{Z}^{(3)}\right|\right\} +\frac{2D''_{\max}}{W'_{\max}}(1-\varepsilon_p)\tau+O(\varepsilon \nu).
\end{align*}
From the definition of $\tau$ in Proposition \ref{proposition of bounded sum of weight}, we have
\begin{align*}
    \left|w_t^\dagger\right|
    &\le (1-\varepsilon_p)\left\{\left|\bar{Z}^{(1)}\right|+\left|\bar{Z}'^{(2)}\right|+\left|\bar{Z}^{(3)}\right|\right\} +M'(1-\varepsilon_p)\sqrt{\frac{\log(2/\delta_{\tau})}{n_p^*}}+O(\varepsilon \nu),
\end{align*}
where $M'=2\sqrt{2}D''_{\max}W_{\max}/W'_{\max}$.
From the definition of $O(\varepsilon\nu)$, there exists a positive constant $N'$ which satisfies
\begin{align*}
    \left|w_t^\dagger\right|
    &\le (1-\varepsilon_p)\left\{\left|\bar{Z}^{(1)}\right|+\left|\bar{Z}'^{(2)}\right|+\left|\bar{Z}^{(3)}\right|\right\} +M'(1-\varepsilon_p)\sqrt{\frac{\log(2/\delta_{\tau})}{n_p^*}}+N'\varepsilon \nu.
\end{align*}
Therefore, combining \eqref{eq-39} and \eqref{eq-41},
\begin{align*}
    &P\left(|w_t^\dagger|\ge 3(1-\varepsilon_p) u +M'(1-\varepsilon_p)\sqrt{\frac{\log(2/\delta_{\tau})}{n_p^*}}+N'\varepsilon \nu\right) \\
    &\le P\Bigg((1-\varepsilon_p)\left\{\left|\bar{Z}^{(1)}\right|+\left|\bar{Z}'^{(2)}\right|+\left|\bar{Z}^{(3)}\right|\right\} +M'(1-\varepsilon_p)\sqrt{\frac{\log(2/\delta_{\tau})}{n_p^*}}+N'\varepsilon \nu \\
    &\qquad\qquad \ge 3(1-\varepsilon_p) u +M'(1-\varepsilon_p)\sqrt{\frac{\log(2/\delta_{\tau})}{n_p^*}}+N'\varepsilon \nu\Bigg) \\
    &\le 6\exp\left(-\frac{n_{p,q}^*}{L'} u^2\right),
\end{align*}
where $L'= \max\left\{2D_{\max}^2,\, 2E'^2_{\max}D''^2_{\max}/W'^2_{\max},\,2D'^2_{\max}\right\}$ and $n_{p,q}^*=\min\{n_p^*,n_q^*\}$.
Applying the union-bound for all $t\in S \cup S^c$, we have
\begin{align*}
    P\left(\|\bm{w}^\dagger\|_\infty\ge3(1-\varepsilon_p) u+M'(1-\varepsilon_p)\sqrt{\frac{\log(2/\delta_{\tau})}{n_p^*}}+N'\varepsilon \nu\right)\le 6d\exp\left(-\frac{n_{p,q}^*}{L'} u^2\right).
\end{align*}
Then, by setting
\begin{align*}
    u=\frac{1}{3(1-\varepsilon_p)}u_0,\quad
    u_0 := \frac{\alpha}{8(2-\alpha/2)}\lambda_{n_p^*,n_q^*}-M'(1-\varepsilon_p)\sqrt{\frac{\log(2/\delta_{\tau})}{n_p^*}}-N'\varepsilon\nu,
\end{align*}
we have
\begin{align*}
    P\left(\|\bm{w}^\dagger\|_\infty \ge \frac{\alpha\lambda_{n_p^*,n_q^*}}{8(2-\alpha/2)}\right)
    \le 6d\exp\left(-\frac{n_{p,q}^*}{9(1-\varepsilon_p)^2L'}u_0^2\right)=:\delta_{\lambda}.
\end{align*}
Therefore, we have
\begin{align*}
    \|\bm{w}^\dagger\|_\infty \le \frac{\alpha\lambda_{n_p^*,n_q^*}}{8(2-\alpha/2)}
\end{align*}
with probability at least $1-\delta_{\tau}-\delta_{\epsilon}-\delta_{\lambda}$.
We have
\begin{align*}
    \lambda_{n_p^*,n_q^*}
    =L(1-\varepsilon_p)\sqrt{\frac{\log(6d/\delta_{\lambda})}{n_{p,q}^*}}+M(1-\varepsilon_p)\sqrt{\frac{\log(2/\delta_{\tau})}{n_p^*}}+N\varepsilon\nu,
\end{align*}
where
\begin{align*}
    L=\frac{24(2-\alpha/2)}{\alpha}\sqrt{L'}, \quad
    M = \frac{8(2-\alpha/2)}{\alpha}M', \quad
    N = \frac{8(2-\alpha/2)}{\alpha}N'.
\end{align*}
We define that $\delta=\delta_{\tau}=\delta_{\epsilon}=\delta_{\lambda}$.

\subsection{Proof of Lemma \ref{lemma-3}}
\label{subsection proof of lemma-3}

We consider the following function \cite{ravikumar2010}:
\begin{equation}
\label{eq-a8}
    G(\bm{\delta}_S)=\mathcal{L}^\dagger(\bm{\theta}^*+\bm{\delta})-\mathcal{L}^\dagger(\bm{\theta}^*) + \lambda_{n_p^*,n_q^*}\left(\|\bm{\theta}^*_{S}+\bm{\delta}_{S}\|_1-\|\bm{\theta}^*_{S}\|_1\right),
\end{equation}
where $\bm{\delta}=[\bm{\delta}_S^T, \bm{0}^T]^T\in\mathbb{R}^d$, $\bm{\delta}_S\in\mathbb{R}^k$, and $\bm{\theta}^*+\bm{\delta}\in\Theta$.
Note that $G$ is convex and $G(\bm{0})=0$.
Also note that $G$ reaches the minimal at $\bm{\delta}^*_S=\hat{\bm{\theta}}_S-\bm{\theta}^*_S$ and $G(\bm{\delta}^*_S)\le0$.
We can easily show that if there exists $B>0$ which satisfies $G(\tilde{\bm{\delta}}_S)>0$ for all $\tilde{\bm{\delta}}_S$ such that $\|\tilde{\bm{\delta}}_S\|_2=B$, then $\|\bm{\delta}^*_S\|_2=\|\hat{\bm{\theta}}_S-\bm{\theta}^*_S\|_2\le B$ holds.
Indeed, if $\|\bm{\delta}^*_S\|>B$, then the convex combination $c \bm{\delta}^*_S + (1-c)\bm{0}$ would satisfy $\|c\bm{\delta}^*_S + (1-c)\bm{0}\|_2=B$ for an appropriately chosen $c\in (0,1)$.
By convexity of $G$,
\begin{align*}
    G(c\bm{\delta}^*_S+(1-c)\bm{0}) \le c G(\bm{\delta}^*_S)+(1-c)G(\bm{0}) \le 0,
\end{align*}
contradicting the assumed strict positivity of $G$ on $\tilde{\bm{\delta}}_S$ such that $\|\tilde{\bm{\delta}}_S\|_2=B$.

When considering the Taylor expansion of $\mathcal{L}^\dagger(\bm{\theta}^*+\bm{\delta})$ in \eqref{eq-a8}, we have
\begin{equation}
\label{eq-a9}
    G(\bm{\delta}_S)=\bm{\delta}^T_S \nabla_S\mathcal{L}^\dagger(\bm{\theta}^*) + \frac{1}{2}\bm{\delta}^T_S \nabla^2_{SS} \mathcal{L}^\dagger(\bm{\theta}^*+\bar{\bm{\delta}}) \bm{\delta}_S + \lambda_{n_p^*,n_q^*}\left(\|\bm{\theta}^*_{S}+\bm{\delta}_{S}\|_1-\|\bm{\theta}^*_{S}\|_1\right),
\end{equation}
where $\bar{\bm{\delta}}$ exists between $\bm{0}$ and $\bm{\delta}$ in a coordinate fashion.
For the first term of \eqref{eq-a9}, from $\|\bm{\delta}_S\|_1\le \sqrt{k}\|\bm{\delta}_S\|_2$ and $\|\bm{w}^\dagger_S\|_\infty \le \frac{\lambda_{n_p^*,n_q^*}}{4}$ by the assumption of this lemma,
\begin{equation}
\label{eq-a10}
    \left|\bm{\delta}^T_S \nabla_S\mathcal{L}^\dagger(\bm{\theta}^*)\right|=|\langle\bm{w}^\dagger_S, \bm{\delta}_S\rangle|\le \|\bm{w}^\dagger_S\|_\infty\|\bm{\delta}_S\|_1\le \|\bm{w}^\dagger_S\|_\infty \sqrt{k}\|\bm{\delta}_S\|_2 \le \frac{\sqrt{k}\lambda_{n_p^*,n_q^*}}{4}\|\bm{\delta}_S\|_2.
\end{equation}
For the last term of \eqref{eq-a9},
\begin{equation}
\label{eq-a11}
    \lambda_{n_p^*,n_q^*}\left(\|\bm{\theta}^*_{S}+\bm{\delta}_{S}\|_1-\|\bm{\theta}^*_{S}\|_1\right)
    \ge -\lambda_{n_p^*,n_q^*}\|\bm{\delta}_{S}\|_1
    \ge -\sqrt{k}\lambda_{n_p^*,n_q^*}\|\bm{\delta}_S\|_2.
\end{equation}
Therefore, we only need to lower-bound the middle term of \eqref{eq-a9} to show $G(\tilde{\bm{\delta}}_S)>0$.

Let $\phi(\bar{\bm{\delta}})=\bm{\delta}^T_S \nabla^2_{SS} \mathcal{L}^\dagger(\bm{\theta}^*+\bar{\bm{\delta}}) \bm{\delta}_S$.
Obviously, we need to lower-bound $\phi(\bar{\bm{\delta}})$.
By applying the mean-value theorem,
\begin{align*}
    \phi(\bar{\bm{\delta}})
    &=\bm{\delta}^T_S \nabla^2_{SS} \mathcal{L}^\dagger(\bm{\theta}^*+\bar{\bm{\delta}}) \bm{\delta}_S \\
    &=\bm{\delta}^T_S \left[\nabla^2_{SS}\mathcal{L}^\dagger\left(\bm{\theta}^*\right)+\sum_{t=1}^{k} \bar{{\delta}}_t \nabla_t\nabla^2_{SS}\mathcal{L}^\dagger(\bm{\theta}^*+\bar{\bar{\bm{\delta}}})\right] \bm{\delta}_S,
\end{align*}
where $\bar{\bar{\bm{\delta}}}\in\mathbb{R}^p$ is between $\bm{0}$ and $\bar{\bm{\delta}}$ in a coordinate fashion.
From Weyl's inequality \cite{horn_2013}, we have
\begin{align*}
    \Lambda_{\min}\left[\nabla^2_{SS} \mathcal{L}^\dagger(\bm{\theta}^*+\bar{\bm{\delta}})\right]
    &\ge\Lambda_{\min}\left[\nabla^2_{SS}\mathcal{L}^\dagger(\bm{\theta}^*)\right]-\left\|\sum_{t=1}^{k} \bar{{\delta}}_t \nabla_t\nabla^2_{SS}\mathcal{L}^\dagger(\bm{\theta}^*+\bar{\bar{\bm{\delta}}})\right\|_2.
\end{align*}
Here,
\begin{align*}
    \left\|\sum_{t=1}^{k} \bar{{\delta}}_t \nabla_t\nabla^2_{SS}\mathcal{L}^\dagger(\bm{\theta}^*+\bar{\bar{\bm{\delta}}})\right\|_2
    &\le \sum_{t=1}^{k} \left\|\bar{{\delta}}_t \nabla_t\nabla^2_{SS}\mathcal{L}^\dagger\left(\bm{\theta}^*+\bar{\bar{\bm{\delta}}}\right)\right\|_2 \\
    &\le \sup_{t\in S} \left\|\nabla_t\nabla^2_{SS}\mathcal{L}^\dagger\left(\bm{\theta}^*+\bar{\bar{\bm{\delta}}}\right)\right\|_2 \sum_{t=1}^{k} \left|\bar{{\delta}}_t\right| \\
    &\le \sqrt{k} \sup_{t\in S} \left\|\nabla_t\nabla^2_{SS}\mathcal{L}^\dagger\left(\bm{\theta}^*+\bar{\bar{\bm{\delta}}}\right)\right\|_2 \left\|\bm{{\delta}}_S\right\|_2.
\end{align*}
The last inequality comes from $\|\bar{\bm{\delta}}_S\|_1 \le \sqrt{k} \|\bar{\bm{\delta}}_S\|_2$ and $\|\bar{\bm{\delta}}_S\|_2\le \|\bm{\delta}_S\|_2$.
Thus, from Propositions \ref{proposition robustness of minimum eignvalue} and \ref{proposition contaminated function boundedness}, we have
\begin{align*}
    \Lambda_{\min}\left[\nabla^2_{SS} \mathcal{L}^\dagger(\bm{\theta}^*+\bar{\bm{\delta}})\right]
    &\ge\Lambda_{\min}\left[\nabla^2_{SS}\mathcal{L}^\dagger(\bm{\theta}^*)\right]-\sqrt{k}\sup_{t\in S}\left\|\nabla_{t}\nabla^2_{SS}\mathcal{L}^\dagger\left(\bm{\theta}^*+\bar{\bar{\bm{\delta}}}\right)\right\|_2 \left\|\bm{\delta}_S\right\|_2 \\
    &\ge (1-\varepsilon_p)\left\{\frac{\lambda_{\min}}{4}-3\sqrt{k}\lambda_{3,\max}\|\bm{\delta}_S\|_2\right\}
\end{align*}
with probability at least $1-\delta_{\tau}-\delta_{\epsilon}-\delta_{\eta'}$.
When considering the case where
\begin{equation}
\label{eq-a14}
    \frac{1}{8}\lambda_{\min}\ge 3\sqrt{k}\lambda_{3,\max}\|\bm{\delta}_S\|_2,
\end{equation}
we have
\begin{equation}
\label{eq-a12}
    \Lambda_{\min}\left[\nabla^2_{SS}\mathcal{L}^\dagger(\bm{\theta}^*+\bar{\bm{\delta}})\right]
    \ge \frac{(1-\varepsilon_p)\lambda_{\min}}{8}.
\end{equation}

Combining \eqref{eq-a10}, \eqref{eq-a11}, and \eqref{eq-a12} with \eqref{eq-a9}, we can get
\begin{equation*}
    G(\bm{\delta}_S)\ge \frac{(1-\varepsilon_p)\lambda_{\min}}{16}\|\bm{\delta}_S\|^2_2 -\frac{5}{4}\sqrt{k}\lambda_{n_p^*,n_q^*}\|\bm{\delta}_S\|_2,
\end{equation*}
with probability at least $1-\delta_{\tau}-\delta_{\epsilon}-\delta_{\eta'}$.
Let $\|\bm{\delta}_S\|_2=\sqrt{k}\lambda_{n_p^*,n_q^*}V$, where $V>0$, then we have
\begin{align*}
    G(\bm{\delta}_S)
    &\ge k\lambda_{n_p^*,n_q^*}^2\left(\frac{(1-\varepsilon_p)\lambda_{\min}}{16}V^2 -\frac{5}{4}V\right) \\
    &=k\lambda_{n_p^*,n_q^*}^2\frac{(1-\varepsilon_p)\lambda_{\min}}{16}\left(V^2-\frac{20}{(1-\varepsilon_p)\lambda_{\min}}V\right).
\end{align*}
When we define $\tilde{\bm{\delta}}_S$ such that $V=\frac{40}{(1-\varepsilon_p)\lambda_{\min}}$, more precisely,
\begin{align}
\label{eq tilde delta}
    \left\|\tilde{\bm{\delta}}_S\right\|_2=\frac{40}{(1-\varepsilon_p)\lambda_{\min}}\sqrt{k}\lambda_{n_p^*,n_q^*}\quad (=B),
\end{align}
we have $G(\tilde{\bm{\delta}}_S)>0$.
Substituting \eqref{eq tilde delta} to \eqref{eq-a14}, we have
\begin{equation}
\label{eq-a15}
    k\lambda_{n_p^*,n_q^*}\le \frac{(1-\varepsilon_p)\lambda_{\min}^2}{960\lambda_{3,\max}}.
\end{equation}
Therefore, if \eqref{eq-a15} holds, from \eqref{eq tilde delta}, we have
\begin{equation*}
    \left\|\hat{\bm{\theta}}-\bm{\theta}^*\right\|_2
    = \left\|\hat{\bm{\theta}}_S-\bm{\theta}^*_S\right\|_2
    \le B
    = \frac{40}{(1-\varepsilon_p)\lambda_{\min}}\sqrt{k}\lambda_{n_p^*,n_q^*}.
\end{equation*}
with probability at least $1-\delta_{\tau}-\delta_{\epsilon}-\delta_{\eta'}$.
Finally we set $\delta= \delta_{\tau}=\delta_{\epsilon}=\delta_{\eta'}$.

\subsection{Proof of Lemma \ref{lemma-4}}
\label{subsection proof of lemma-4}

We need the upper bound of $|g_t^\dagger|$ for $t\in \mathcal{E}$, where
\begin{align*}
    g_t^\dagger
    &= \left[\left[\nabla^2\mathcal{L}^\dagger(\bm{\theta}^*)-\overline{\nabla^2\mathcal{L}^\dagger}\right]\left[\hat{\bm{\theta}}-\bm{\theta}^*\right]\right]_t \\
    &= \left[\nabla_{t}\nabla_S\mathcal{L}^\dagger(\bm{\theta}^*) - \nabla_{t}\nabla_S\mathcal{L}^\dagger\left(\bar{\bm{\theta}}^t\right)\right]^T (\hat{\bm{\theta}}_S-\bm{\theta}^*_S).
\end{align*}
By applying the mean-value theorem, we have
\begin{align*}
    g_t^\dagger = \left(\bar{\bm{\theta}}_S^t-\bm{\theta}_S^*\right)^T \nabla_{t}\nabla^2_{SS}\mathcal{L}^\dagger(\bar{\bar{\bm{\theta}}}^t) (\hat{\bm{\theta}}_S-\bm{\theta}_S^*),
\end{align*}
where $\bar{\bar{\bm{\theta}}}^t$ exists between $\bm{\theta}^*$ and $\bar{\bm{\theta}}^t$.
Then,
\begin{align*}
    \left|g_t^\dagger\right|
    &= \left|\left(\bar{\bm{\theta}}_S^t-\bm{\theta}_S^*\right)^T \nabla_{t}\nabla^2_{SS}\mathcal{L}^\dagger(\bar{\bar{\bm{\theta}}}^t) (\hat{\bm{\theta}}_S-\bm{\theta}_S^*)\right| \\
    &\le \left\|\bar{\bm{\theta}}_S^t-\bm{\theta}_S^*\right\|_2 \left\|\nabla_{t}\nabla^2_{SS}\mathcal{L}^\dagger(\bar{\bar{\bm{\theta}}}^t)\right\|_2 \left\|\hat{\bm{\theta}}_S-\bm{\theta}_S^*\right\|_2 \\
    &\le \left\|\nabla_{t}\nabla^2_{SS}\mathcal{L}^\dagger(\bar{\bar{\bm{\theta}}}^t)\right\|_2 \left\|\hat{\bm{\theta}}_S-\bm{\theta}_S^*\right\|^2_2.
\end{align*}
The last inequality holds because $\bar{\bm{\theta}}^t_S$ is between $\bm{\theta}^*_S$ and $\hat{\bm{\theta}}_S$ in a coordinate fashion.
Then, from Proposition \ref{proposition contaminated function boundedness} and Lemma \ref{lemma-3},
\begin{align*}
    \left|g_t^\dagger\right|
    &\le 3(1-\varepsilon_p)\lambda_{3,\max} \times \frac{1600}{(1-\varepsilon_p)^2\lambda_{\min}^2}k\lambda_{n_p^*,n_q^*}^2 \\
    &= \frac{4800\lambda_{3,\max}}{(1-\varepsilon_p)\lambda_{\min}^2}k\lambda_{n_p^*,n_q^*}^2
\end{align*}
holds with probability at least $1-\delta_{\tau}-\delta_{\epsilon}-\delta_{\eta'}$ for $t \in \mathcal{E}$.
Therefore, when 
\begin{align*}
    k\lambda_{n_p^*,n_q^*}\le \frac{(1-\varepsilon_p)\lambda_{\min}^2}{4800\lambda_{3,\max}}\frac{\alpha}{8(2-\alpha/2)},
\end{align*}
we have
\begin{align*}
    \left\|\bm{g}^\dagger\right\|_\infty \le \frac{\alpha\lambda_{n_p^*,n_q^*}}{8(2-\alpha/2)}
\end{align*}
with probability at least $1-\delta_{\tau}-\delta_{\epsilon}-\delta_{\eta'}$.
We define $\delta=\delta_{\tau}=\delta_{\epsilon}=\delta_{\eta'}$.

\section{Experimental Settings in Section \ref{subsection experiment for unboundedness}}
\label{section experimental settings}

In the experiment in Section \ref{subsection experiment for unboundedness}, we used three Gaussian distributions with precisions parameters $\lambda$ set to $1.0$ for the non-active set and $0.8$ and $0.4$ for the active set.
Figure \ref{fig:precision} displays the probability density functions of these Gaussian distributions in one dimension.
Besides, we plot the weight function $w(x)=0.5\exp(-\|x\|^4_4/20)$.

The weight function is designed not to eliminate data from the distribution with $\lambda=1.0$ in the non-active set.
In regions where the probability density function takes large values, the weight function also assign large values.
This setting satisfies Assumption \ref{assumption weight for normal}, ensuring the weight function does not eliminate the majority of the inlier data.

The weight function, however, eliminates data sampled from the tail of the distributions with $\lambda=0.8$ and $0.4$ in the active set.
In unbounded density ratio estimation, data with large values adversely affect the parameter estimation in the objective function.
For example, the density ratio value is calculated as $\exp(\lambda_i x_i^2)\approx 600$ at $x_i=4$ with $\lambda_i=0.4$, which is considerably larger than other data points, such as $\exp(\lambda_i x_i^2)\approx1$ at $x_i=1$.
The weight function eliminates such data to satisfy Assumption \ref{assumption-9}, ensuring that the weighted density ratio remains bounded.

\begin{figure*}[ht]
\vskip 0.2in
\begin{center}
    \centerline{\includegraphics[width=0.8\columnwidth]{./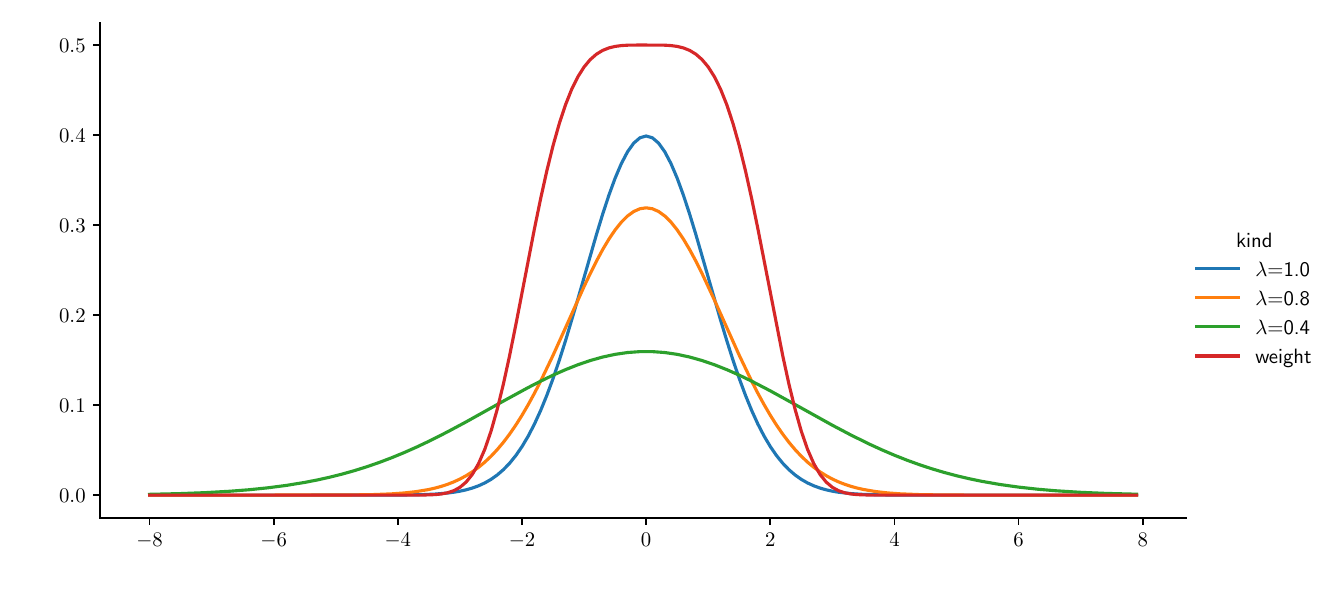}}
    \caption{
        The probability density functions of Gaussian distributions with different precisions and the weight function.
    }
    \label{fig:precision}
\end{center}
\vskip -0.2in
\end{figure*}

\end{document}